



\documentclass{article}
\usepackage[left=1in,right=1in,top=1in,bottom=1in,%
            footskip=.25in]{geometry}

\usepackage{float}
\usepackage{subfloat}

\usepackage{mathtools,amssymb,graphicx,subfigure,times,comment}
\usepackage{caption}
\captionsetup{font={scriptsize}}
\usepackage{amsmath,amsthm}
\usepackage{bbm}
  {
      
  }
 
\usepackage{tcolorbox}
\usepackage[hidelinks]{hyperref}
\definecolor{darkred}{RGB}{150,0,0}
\definecolor{darkgreen}{RGB}{0,150,0}
\definecolor{darkblue}{RGB}{0,0,150}
\hypersetup{colorlinks=true, linkcolor=red, citecolor=darkgreen, urlcolor=darkblue}

\usepackage{breqn}


\newcommand{\thetao}{\overline{\theta}}

\newcommand{\go}{\overline{g}}
\newcommand{\etao}{\overline{\eta}}

\newcommand{\Scb}{\mathbf{\Sc}^{(n)}}
\newcommand{\ksit}{\widetilde{\ksi}}
\newcommand{\qt}{\widetilde{q}}

\newcommand{\Ell}{L}

\newcommand{\betah}{\widehat{\beta}}

\newcommand{\betabs}{\mathbf{\betab}_\star}
\newcommand{\yh}{\hat{y}}
\newcommand{\Rce}{\widehat{\mathcal{R}}_{\rm emp}}
\newcommand{\Rcex}{\mathcal{R}_{\rm excess}}
\newcommand{\Rct}{\mathcal{R}_{\rm train}}
\newcommand{\Ecsep}{\Ec_{\rm{sep}}}

\newcommand{\Rad}[1]{{\rm{Rad}}\left(#1\right)}

\newcommand{\mathleft}{\@fleqntrue\@mathmargin0pt}
\newcommand{\mathcenter}{\@fleqnfalse}

\newcommand{\coss}[2]{{\rm{cos}}(\,{#1}\,,\,{#2}\,)}

%


\newcommand{\prox}[3]{\mathrm{prox}_{{#1}}\left({#2};{#3}\right)}
\newcommand{\proxri}[3]{p_{#1}\left({#2},{#3}\right)}

\newcommand{\ellprox}[3]{L_{#1}\left({#2},{#3}\right)}

\newcommand{\ellp}{\ell^\prime}


\newcommand{\rhod}{f}

\newcommand{\R}{\mathbb{R}}

\newcommand{\al}{\alpha}

    \newcommand{\gammab}{\boldsymbol\gamma}


\newcommand{\betabh}{\boldsymbol{\widehat{\beta}}}

\newcommand{\rP}{\stackrel{{P}}{\rightarrow}}
\newcommand{\ras}{\xrightarrow[]{a.s.}}

\providecommand{\abs}[1]{\lvert#1\rvert}
\providecommand{\norm}[1]{\lVert#1\rVert}


\DeclarePairedDelimiterX{\inp}[2]{\langle}{\rangle}{#1, #2}

\newcommand{\Id}{\mathbf{I}}

\newcommand{\etab}{\boldsymbol{\eta}}

\newcommand{\ksi}{\xi}

\usepackage{dsfont}
\newcommand{\ind}[1]{{\mathds{1}}_{\{#1\}}}

\newcommand{\simiid}{\stackrel{\text{iid}}{\sim}}


\newcommand{\Pro}{\mathbb{P}}




\theoremstyle{theorem}
\newtheorem{propo}{Proposition}[section]
\newtheorem{thm}{Theorem}[section]
\newtheorem{lem}{Lemma}[section]
\newtheorem{cor}{Corollary}[section]
%
\theoremstyle{remark}
\newtheorem{remark}{Remark}
\theoremstyle{definition}




\newcommand{\eps}{\varepsilon}

\newcommand{\sign}{\mathrm{sign}}

\newcommand{\one}{\mathbf{1}}

\newcommand{\E}{\mathbb{E}}                    
\newcommand{\la}{{\lambda}}                     

\newcommand{\nn}{\notag}
\newcommand{\Rb}{\mathbb{R}}



\newcommand{\G}{\mathbf{G}}



\newcommand{\x}{\mathbf{x}}
\newcommand{\ub}{\mathbf{u}}

\newcommand{\w}{\mathbf{w}}

\newcommand{\g}{\mathbf{g}}
\newcommand{\vb}{\mathbf{v}}

\newcommand{\z}{\mathbf{z}}

\newcommand{\h}{\mathbf{h}}

\newcommand{\betab}{\boldsymbol{\beta}}


\newcommand{\Tc}{{\mathcal{T}}}
\newcommand{\Sc}{{\mathcal{S}}}
\newcommand{\Bc}{{\mathcal{B}}}

\newcommand{\Dc}{\mathcal{D}}

\newcommand{\Rc}{\mathcal{R}}
\newcommand{\Nn}{\mathcal{N}}

\newcommand{\Cc}{\mathcal{C}}
\newcommand{\Gc}{{\mathcal{G}}}

\newcommand{\Ec}{\mathcal{E}}




\newcommand{\beq}{\begin{equation}}
\newcommand{\eeq}{\end{equation}}
\newcommand{\bea}{\begin{align}}
\newcommand{\eea}{\end{align}}

\newcommand{\vp}{\vspace{4pt}}


\def\bea#1\eea{\begin{align}#1\end{align}}

\title{A model of Double Descent for \\High-dimensional Binary Linear Classification}

\author{Zeyu Deng \qquad Abla Kammoun \qquad Christos Thrampoulidis 
\thanks{Zeyu Deng and Christos Thrampoulidis are with the Electrical and Computer Engineering Department at the University of California, Santa Barbara, USA. Abla Kammoun is with the Electrical Engineering Department at King Abdullah University of Science and Technology, Saudi Arabia.}
}



\bibliographystyle{alpha}

\begin{document}

\maketitle

\begin{abstract}
We consider a model for logistic regression where only a subset of features of size $p$ is used for training a linear classifier over $n$ training samples. The classifier is obtained by running gradient descent (GD) on  logistic loss. For this model, we investigate the dependence of the classification error on the overparameterization ratio $\kappa=p/n$. First, building on known deterministic results on the implicit bias of GD, we uncover a phase-transition phenomenon for the case of Gaussian features: the classification error of GD is the same as that of the maximum-likelihood (ML) solution when $\kappa<\kappa_\star$, and that of the max-margin (SVM) solution when $\kappa>\kappa_\star$. Next, using the convex Gaussian min-max theorem (CGMT), we sharply characterize the performance of both the ML and the SVM solutions. Combining these results, we obtain curves that explicitly characterize the classification error for varying values of $\kappa$. The numerical results validate the theoretical predictions and unveil double-descent phenomena that complement similar recent findings in linear regression settings as well as empirical observations in more complex learning scenarios.
%
%
%
%
%
\end{abstract}


%
\section{Introduction}

\paragraph{Motivation.} Modern learning architectures are chosen such that the number of training parameters overly exceeds the size of the training set. Despite their increased complexity, such \emph{over-parametrized} architectures are known to generalize well in practice. In principle, this contradicts conventional  wisdom of the so-called approximation-generalization tradeoff. The latter suggests a U-shaped curve for the generalization error as a function of the number of parameters, over which the error initially decreases, but increases after some point due to overfitting. 
In contrast, several authors uncover a peculiar W-shaped curve for the generalization error of neural networks as a function of the number of parameters. After the ``classical" U-shaped curve, it is  seen that a second approximation-generalization tradeoff (hence, a second U-shaped curve) appears for large enough number of parameters. The authors of \cite{belkin2018understand,belkin2018reconciling,belkin2018overfitting} coin this the ``\emph{double-descent}" risk curve. The double-descent phenomenon has been 
%
demonstrated experimentally for decision trees, random features and two-layer neural networks (NN) in \cite{belkin2018reconciling} and, more recently, for deep NNs (including ResNets, standard CNNs and Transformers) in \cite{DDD}; see also \cite{spigler2019jamming,geiger2019scaling} for similar observations.

Recent efforts towards theoretically understanding the phenomenon of double descent 
focus on linear regression with Gaussian features \cite{hastie2019surprises,muthukumar2019harmless,belkin2019two,bartlett2019benign}; also \cite{belkin2018does,xu2019many,mei2019generalization,kobak2019optimal} for related efforts. These works investigate how the generalization error of gradient descent (GD) on square-loss depends on the overparameterization ratio $\kappa=p/n$, where $p$ number of features used for training are divided by the size  $n$ of the training set. On one hand, when $\kappa<1$, GD iterations converge to the {least-squares} solution for which the generalization performance is well-known \cite{hastie2019surprises}. On the other hand, in the overparameterized regime $\kappa>1$, GD iterations converge to the min-norm solution  for which the generalization performance is sharply characterized in \cite{hastie2019surprises,belkin2019two,muthukumar2019harmless} using random matrix-theory (RMT). Using these sharp asymptotics, these papers identify regimes (in terms of model parameters such as the signal-to-noise ratio (SNR)) for which a double-descent curves appear.

\paragraph{Contributions.}
This paper investigates the dependence of the classification error on the overparameterization ratio in binary linear classification with Gaussian features. In short, we obtain results that parallel previous studies for linear regression \cite{hastie2019surprises,belkin2019two,muthukumar2019harmless}. In more detail, we study gradient descent on logistic loss for two simple, yet popular, models: logistic model and gaussian mixtures (GM) model. Known results establish that GD iterations converge to either the support-vector machines (SVM) solution or the logistic maximum-likelihood (ML) solution, depending on whether the training data is separable or not. For the proposed learning model, we compute a phase-transition threshold $\kappa_\star\in(0,1/2)$ and show that when problem dimensions are large, then data are separable if and only if $\kappa>\kappa_\star$. Connecting the two, we redefine our task to that of studying the classification performance of the ML and SVM solutions. In contrast to linear regression, where the corresponding task can be accomplished using RMT, here we employ a machinery based on Guassian process inequalities. In particular, we obtain sharp asymptotics using the framework of the convex Gaussian min-max theorem (CGMT) \cite{StoLasso,COLT,Master}. Finally, we corroborate our theoretical findings with numerical simulations and build on the former to identify regimes where double descent occurs.  Figure \ref{fig:exp4str} contains a pictorial preview of our findings \footnote{\label{footnote:eR}The y-axis represents \emph{excess} risk $\Rcex:=\Rc-\Rc_{\rm best}$ defined as the the difference of the \emph{absolute} expected risk $\Rc$ minus the risk of the best linear classifier $\Rc_{\rm best}$ (see Eqn. \eqref{eq:eR}). Naturally both $\Rc$ and $\Rc_{\rm best}$ are decreasing functions of the SNR parameter $r$. However, $\Rc_{\rm best}$ is decreasing faster than $\Rc$. This explains why the  value of the excess risk $\Rcex$  is smaller for larger values of the signal strength $r$ in Figure \ref{fig:exp4str}. In particular, it holds $\Rc_{\rm best} = $ 0.133, 0.098 and 0.084 for $r = $ 5, 10 and 25, respectively. Contrast this to the values of the absolute risk  $\Rc = $ 0.434, 0.429 and 0.428 at (say) $\kappa = 0.05$. 
}. 

On a technical level, we provide useful extensions of the CGMT that allow: (a) studying feasibility questions (such as, when is the hard-margin SVM optimization feasible?); (b) raising certain boundedness assumption on the CGMT that were previously circumvented on a case-by-case analysis; (c) establishing almost-sure convergence results (rather than ``in probability"). While our motivation comes from the analysis of the hard-margin SVM, we believe that these extensions can be of independent interest offering a principled way for future statistical studies of related optimization-based inference algorithms. We present these extensions of the CGMT as self-contained theorems in Appendix \ref{sec:CGMT_new}.



\begin{figure*}[t]
\begin{center}
\includegraphics[width=.9\textwidth]{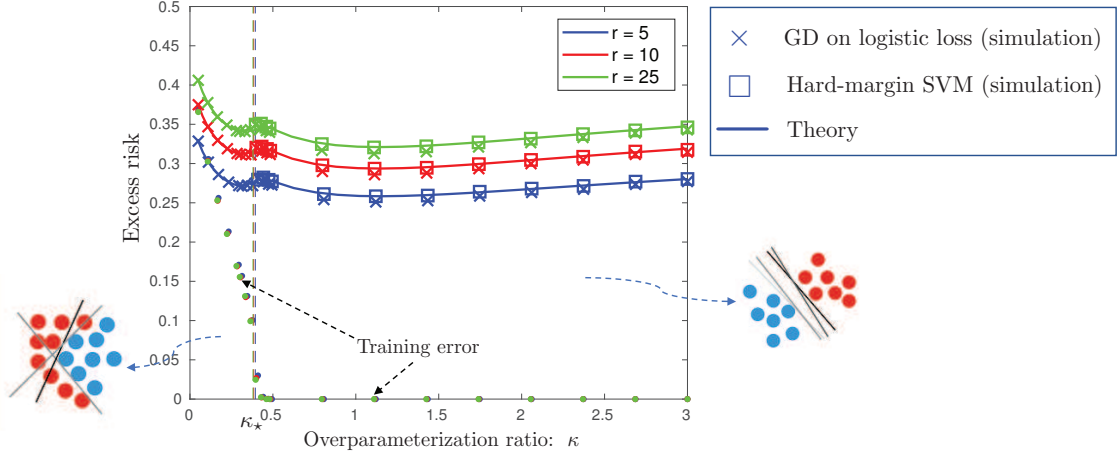}
\end{center}
\caption{Excess risk 
as a function of the overparameterization ratio $\kappa$ for binary logistic regression under a polynomial feature selection rule. Our asymptotic predictions (theory) match the simulation results for gradient descent (GD) on logistic loss and for hard-margin support-vector machines (SVM). The theory also predicts a sharp phase-transition phenomenon: data is linearly separable (eqv. training error is zero) with high-probability iff $\kappa>\kappa_\star$.  The threshold $\kappa_\star$ is depicted by dashed lines (different for different values of the SNR parameter $r$, but  almost indistinguishable due to scaling of the figure).  Observe that the risk curve experiences a ``double-descent". As such, it has two local minima: one in the underparameterized and one in the overparameterized regime, respectively. Furthermore, the global minimizer of the excess risk is achieved in the latter regime in which data are linearly separable. The minimizer corresponds to the optimal number of features that should be chosen during training for the model under consideration. Please refer to Section \ref{sec:num} for further details on the chosen simulation parameters.}
\label{fig:exp4str}
\end{figure*}

\subsection{Related works}\label{sec:rel}
Our results on the performance of logistic ML and SVM fit in the rapidly growing recent literature on \emph{sharp} asymptotics of (possibly non-smooth) convex optimization-based estimators; \cite{DMM,montanariLasso,StoLasso,OTH13,COLT,montanari13,Master,karoui15} and many references therein.  Most of these works study linear regression models, for which the CGMT framework has been shown to be powerful and applicable under several variations; see for example \cite{Master,TSP18,celentano2019fundamental,hu2019asymptotics,javanmard2020precise}. We also mention a parallel line of work that uses an alternative analysis framework based on the Approximate Message Passing (AMP) (e.g \cite{DMM,montanariLasso,wang2019does,bu2019algorithmic,rush2018finite,rangan2011generalized}); a detailed comparison between the two frameworks is out of the scope of this paper.

In contrast to the above mentioned studies of linear regression models, our results hold for binary classification. To the best of our knowledge, the first asymptotically sharp analysis of convex-based methods for binary models is due to \cite{NIPS}, which computes the squared-error of regularized least-squares under nonlinearities.  The simple, yet central idea, which allows for an application of the CGMT in this setting, is a certain ``projection trick" inspired by \cite{Ver} (also used in \cite{genzel2016high}). (A similar idea was also applied in \cite{PhaseLamp} for the analysis of the PhaseMax algorithm for phase-retrieval.) However, \cite{NIPS} is not focused on binary classification.

 In this context, the first relevant results are due to \cite{candes2018phase,sur2019modern}, who perform a detailed study of logistic regression under the logistic data model using the approximate kinematic formula \cite{TroppEdge} and the AMP. Soon after the works of Candes and Sur, \cite{salehi2019impact} and \cite{taheri2019sharp,Hossein2020} have used the CGMT to analyze the statistical properties of regularized logistic regression and of general convex empirical-risk-minimization (ERM), respectively. The paper \cite{svm_abla}, while still using the CGMT, focuses on a discriminative (rather than a generative) data model and studies the hard-margin SVM classifier. While there are a few other recent works on sharp asymptotics of binary linear classification \cite{mai2019large,logistic_regression,huang2017asymptotic}, the papers \cite{svm_abla,Hossein2020,salehi2019impact,candes2018phase,sur2019modern} are the most closely related to this work.  
 
Compared to these, our contribution differs as follows. First, we examine risk curves for all possible values of the overparameterization ratio, which gives rise to the phase-transition of Proposition \ref{propo:PT} and the double-descent phenomena investigated in Section \ref{sec:num}. Second, we propose and study the misspecified model for classification of Section \ref{sec:train_model}. Third, we derive results that treat both the logistic generative model and the GM discriminative model in a unifying way. On a technical level: (i) we properly apply the CGMT to study feasibility of the hard-margin SVM (previous results focus on feasible optimization problems); (ii) we establish convergence results that hold almost surely (the CGMT establishes convergence in probability); (iii) we prove existence and uniqueness to the solution of the equations derived in Propositions \ref{propo:ML} and \ref{propo:SVM}. Importantly, compared to \cite{svm_abla}, which also studies feasibility and almost-sure convergence of hard-margin SVM, we formulate our ideas as stand-alone results and prove them in more general settings such that they can be used beyond the context of our paper.


An early conference version of this paper appears in \cite{ICASSP2020}. After the initial submission and while preparing a long version of the paper we became aware of the parallel work \cite{montanari2019generalization}. Very similar to our setting, \cite{montanari2019generalization} investigates the classification performance of hard-margin SVM for binary linear classification under generative data models. In contrast to \cite{montanari2019generalization}, we also analyze the performance of logistic ML. This allows us to generate risk curves for all positive values of the overparametrizatio ratio $\kappa>0$ and explicitly demonstrate and study the presence of double-descent phenomena. We also extend our study to the discriminative Gaussian mixture model. On the other hand, it is worth mentioning that the authors of \cite{montanari2019generalization} further derive asymptotic predictions for correlated (not necessarily iid) Gaussian features. While both papers build their asymptotic analysis on the CGMT, there are several differences when it comes to the technical analysis of the Auxiliary Optimization (AO) problem of the CGMT \footnote{In this paper, we analyze the formulation of the hard-margin SVM in \eqref{eq:SVM}. In contrast, the authors of \cite{montanari2019generalization} consider a re-formulation of the SVM (see \cite[Eqn.~15.1]{shalev2014understanding}) that even-though is equivalent, it leads to a different AO problem.}. Overall, we believe that both contributions are of independent interest. 

We end this discussion by referring to a few more related works that have appeared since the early version of this paper \cite{ICASSP2020} and of \cite{montanari2019generalization}. In \cite{kini2020analytic}, the authors adapt the setting proposed here, but they investigate DD under gradient descent on square loss (rather than on logistic loss). Combined with the results of our paper, the authors demonstrate the impact of loss on the features of DD. Another follow-up work \cite{mignacco2020role} investigates the impact of regularization on DD curves for the GM model . Finally, \cite{liang2020precise} studies the statistical properties of min-$\ell_1$-norm interpolating classifiers. 




%

%
\section{Learning model}\label{sec:model}


\subsection{Data generation models}\label{sec:data}
We study supervised binary classification under  two popular data models, 
\begin{itemize}
\item a \emph{discriminative} model: mixtures of Gaussian.
\item a \emph{generative} model: logistic regression.
\end{itemize}
Let $\x\in\R^d$ denote the feature vector and $y\in\{\pm 1\}$ denote the class label.

\vp
\noindent\textbf{Logistic model.}~First, we consider a discriminative approach which models the marginal probability $p(y|\x)$ as follows:
$$
y =\begin{cases}
+1, &\text{w.p.}~~\rhod(\x^T\etab_0),\\
-1, &\text{w.p.}~~1-\rhod(\x^T\etab_0),
\end{cases}
$$
where $\etab_0\in\Rb^d$ is the unknown weight (or, regressor) vector and  $f(t)$ is the sigmoid function: 
$$
 \rhod(t):=(1+e^{-t})^{-1}.
 $$
\noindent 
Throughout, we assume iid Gaussian feature vectors 
$\x\sim\Nn(0,\Id_d).$
For compactness let $\Rad{p}$ denote a symmetric Bernoulli distribution with probability $p$ for the value $+1$ and probability $1-p$ for the value $-1$. We summarize the logistic model with Gaussian features:
\bea\label{eq:yi_log}
y\sim\Rad{\rhod(\x^T\etab_0)},~~~~\x\sim\Nn(0,\Id_d).
\eea


\vp
\noindent\textbf{Gaussian mixtures (GM) model.}~A common choice for the generative case is to model the class-conditional densities $p(\x|y)$ with Gaussians (e.g., \cite[Sec.~3.1]{williams2006gaussian}).  Specifically, each data point belongs to one of two classes $\Cc_{\pm1}$ with probabilities $\pi_{\pm1}$ such that $\pi_{+1} + \pi_{-1}=1$. If it comes from class $\Cc_{\pm1}$, then the feature vector $\x\in\R^d$ is an iid Gaussian vector with mean $\pm\etab_0\in\R^d$ and the response variable $y$ takes the value of the class label $\pm1$:
\bea\label{eq:yi_GM}
y = \pm 1 \Leftrightarrow \x\sim\Nn\left(\pm\etab_0,\Id_d\right).
\eea


\subsection{Feature selection model for training data}\label{sec:train_model}
 During training, we assume access to $n$ data points generated according to either \eqref{eq:yi_log} or \eqref{eq:yi_GM}. 
We allow for the possibility that only a subset $\Sc\subset[d]$ of the total number of features is known during training. Concretely, for each feature vector $\x_i,~i\in[n]$, the learner only knowns a sub-vector $\w_i:=\x_i(\Sc):=\{\x_i(j),~j\in\Sc\},$
for a chosen set $\Sc\subset[d]$.
We denote the size of the \emph{known} feature sub-vector as $p:=|\Sc|$. We will often refer to $p$ as the \emph{model size}. Onwards, we choose $\Sc=\{1,2,\ldots,p\}$ \footnote{This assumption is without loss of generality for our asymptotic model specified in Section \ref{sec:setting}.}, i.e., select the features sequentially in the order in which they appear.

 Clearly, $1\leq p\leq d$. Overall, the training set $\Tc(\Sc)$ consists of $n$ data pairs:
\begin{align}\label{eq:TS}
&\quad\Tc(\Sc):=\left\{(\w_i,y_i), i=1,\dots,n\right\},\qquad \\&\w_i=\x_i(\Sc)\in\R^p~\text{ where }~(y_i,\x_i)\sim\eqref{eq:yi_log}~\text{or}~\eqref{eq:yi_GM}\nn.
\end{align}
When clear from context, we simply write 
$$
\Tc\sim\eqref{eq:yi_log}\quad\text{or}\quad\Tc\sim\eqref{eq:yi_GM},
$$
if the training set $\Tc$ is generated according to \eqref{eq:yi_log} or \eqref{eq:yi_GM}, respectively.




\subsection{Classification rule}\label{sec:class}
After choosing a model size $p=1,\ldots,d$, the learner uses  the training set $\Tc(\Sc)$ to obtain an estimate $\betabs\in\R^p$ of the first $p$ entries of the weight vector $\etab_0\in\R^d$. Then, for a newly generated sample $(\x,y)$ (and $\w:=\x(\Sc)$), 
she forms a \emph{linear classifier} and decides a label $\yh$ for the new sample as:
\begin{align}\label{eq:yhat}
\yh = \sign(\w^T\betabs).
\end{align}
%
The estimate $\betabs$ is obtained by minimizing the empirical risk:
\bea\label{eq:Remph}
\Rce(\betab):=\frac{1}{n}\sum_{i=1}^n\ell(y_i\w_i^T\betab),
\eea
 for certain loss function $\ell:\R\rightarrow\R$. A traditional way to minimize $\Rce$ is by constructing gradient descent (GD) iterations $\betab_{(k)},~k\geq0$ via $$
\betab_{(k+1)} = \betab_{(k)} - \eta_k\nabla\Rce(\betab_{(k)}),
$$
for step-size $\eta_k>0$ and arbitrary $\betab_{(0)}$. We run GD until convergence and set
$$\betabs:=\lim_{k\rightarrow\infty}\betab_{(k)}.$$

\noindent In this paper, we focus on \emph{logistic} loss in \eqref{eq:Remph}, i.e.,  $\ell(t):=\log\big(1+e^{-t}\big).$

%
%

\vp
\noindent\textbf{Classification error.} For a new sample $(\x,y)$ we measure test error of $\hat{y}$ by the expected \emph{risk} (or, \emph{test error}):
\begin{align}\label{eq:risk}
\Rc(\betabs) := \E\left[\mathds{1}\big(\yh\neq y \big)\right].
\end{align}
Here, $\mathds{1}(\Ec)$ denotes the indicator function of an event $\Ec$. Also, the expectation here is over the distribution of the \emph{new} data sample $(\x,y)$ generated according to either \eqref{eq:yi_log}~\text{or}~\eqref{eq:yi_GM}. In particular, note that $\Rc(\betabs)$ is a function of the training set $\Tc$. Along these lines, we also define the  \emph{excess risk}:
\bea\label{eq:eR}
\Rcex(\betabs) := \Rc(\betabs) - \Rc(\etab_0),
\eea
where $\Rc(\etab_0):=\E\left[\mathds{1}\big(y\neq \sign(\x^T\etab_0) \big)\right]$ is the risk of the best linear classifier that assumes knowledge of the entire feature vector $\x_i$ and of $\etab_0$ \cite{bartlett2006convexity}.
Finally, we further consider the cosine similarity between the estimate $\betabs$ and $\etab_0$ as a measure of evaluating estimation performance: 
\bea\label{eq:cos}
\Cc(\betabs):=\coss{\betabs}{\etab_0}=\frac{\inp{\betabs}{\betab_0}}{\|\betabs\|_2\|\etab_0\|_2}.
\eea

\vp
\noindent\textbf{Training error.} 
\noindent The training error of $\betab_\star$ is given by
$$
\Rct(\betabs):=\frac{1}{n}\sum_{i=1}^n \mathds{1}\big(\yh_i\neq y_i \big) = \frac{1}{n}\sum_{i=1}^n \mathds{1}\big((\w_i^T\betabs) y_i>0 \big).
$$


\subsection{Implicit bias of GD}\label{sec:GD}
\noindent Recent literature fully characterizes the converging behavior of GD iterations for logistic loss \cite{pmlr-v99-ji19a,soudry2018implicit,telgarsky2013margins}. There are two regimes of interest: 
\begin{itemize}
\item[(i)] When data are such that $\Rce(\betab)$ is strongly convex, then standard tools show that $\betab_{(k)}$ converges to the unique bounded minimizer of $\Rce(\betab)$.
\item[(ii)] When data are linearly separable, then the normalized iterates $\betab_{(k)}/\|\betab_{(k)}\|_2$ converge to the max-margin solution.
\end{itemize}

\noindent These deterministic properties guide our approach towards studying the classification error of the converging point of GD for logistic loss in Section \ref{sec:theory}. Specifically, instead of studying GD iterations directly, we study the classification performance of the max-margin classifier and of the minimum of the empirical logistic risk. We formalize these ideas next.

\subsubsection{Separable data}
The training set is separable if and only if there exists a linear classifier achieving zero training error, i.e., 
$
\exists \betab~:~y_i\w_i^T\betab\geq 1,~\forall i\in[n].
$ When data are separable,
 the normalized iterations of GD for logistic loss converge to the maximum margin classfier \cite{soudry2018implicit,pmlr-v99-ji19a}. Precisely,  
 for $k\rightarrow\infty$ it holds $\Big\|\frac{\betab_{(k)}}{\|\betab_{(k)}\|_2}-\frac{\betabh}{\|\betabh\|_2}\Big\|_2\rightarrow 0,$
 where $\betabh$ is the solution to the {hard-margin SVM}: 
\bea\label{eq:SVM}
\betabh=\arg\min_{\betab}~\|\betab\|_2~~\text{sub. to}~ y_i\w_i^T\betab\geq 1.
\eea

\subsubsection{Non-separable data}When the separability condition does \emph{not} hold, then $\Rce(\beta)$ is coercive. Thus, its sub-level sets are closed and GD iterations will converge \cite{pmlr-v99-ji19a} to the minimizer $\betabh$ of the empirical loss: 
\bea\label{eq:ML}
\betabh=\arg\min_{\betab}~\frac{1}{n}\sum_{i=1}^{n}\ell(y_i\w_i^T\betab),
\eea
where, $\ell(t)=\log(1+e^{-t})$. For the data model in \eqref{eq:yi_log}, $\betabh$ is the {maximum-likelihood (ML)} estimator; indeed, $\ell^\prime(t) = -\rhod(t)$. Thus, we often refer to \eqref{eq:ML} as the ML estimator.



%
\section{Sharp Asymptotics}\label{sec:theory}

We present sharp asymptotic formulae for the classification performance of GD iterations for the logistic loss under the learning model of Sections \ref{sec:data} and \ref{sec:train_model} and the linear asymptotic setting presented in Section \ref{sec:setting}. Our analysis sharply predicts the limiting behavior of the test error and of the cosine similarity as a function of the model size, i.e., the number of parameters $p$ used for training.

 The first key step of our analysis, makes use of the convergence results discussed in Section \ref{sec:GD} that relate GD for logistic loss to the convex programs \eqref{eq:ML} and  \eqref{eq:SVM} depending on whether the training data $\Tc$ are linearly separable. Specifically, we show in Section \ref{sec:PT} that, in the linear asymptotic regime with Gaussian features, the convergence behavior of GD as a function of the model size, undergoes a sharp phase-transition for both  data models (logistic and Gaussian-mixtures). The boundary of the phase-transition separates the so-called under- and over-parameterized regimes. Thus, for each one of the two corresponding regimes, GD converges to an associated convex program (cf. \eqref{eq:ML} and \eqref{eq:SVM}, respectively). The second key step of our analysis, uses the Convex Gaussian min-max Theorem (CGMT) \cite{COLT,Master} to sharply evaluate the classification performance of these convex programs in Section \ref{sec:asy}. Specifically, we build on a useful extension of the CGMT, which we present in Appendix \ref{sec:CGMT_new}. The proofs of the results presented in this sections are deferred to the Appendix.



\subsection{Asymptotic setting}\label{sec:setting}

\noindent Recall the following notation:

\vp
$\bullet$~ $d$: dimension of the ambient space,\\
\indent$\bullet$~ $n$: size of the training set,\\
\indent$\bullet$~ $p$:  number of parameters used in training (aka {model size}).\\

\noindent Our asymptotic results hold in a \emph{linear asymptotic regime} where $n,d,p\rightarrow+\infty$ such that 
\bea\label{eq:linear}
{d/n}\rightarrow \zeta\geq 1 \quad \text{and}\quad  {p/n}\rightarrow \kappa\in(0,\zeta]. 
\eea
 
We call $\kappa$ the \emph{overparameterization ratio} as it determines the ratio of parameters to be trained divided by the size of the training set. To quantify its effect on the test error, we decompose the feature vector $\x_i\in\R^d$ to its \emph{known} part $\w_i\in\R^p$ and to its unknown part $\z_i$:
$
\x_i:=[\w_i^T,\z_i^T]^T.\nn
$
Then, we let $\betab_0\in\R^p$ (resp., $\gammab_0\in\R^{d-p}$) denote the vector of weight coefficients corresponding to the known (resp., unknown) features such that 
$
\etab_0:=[\betab_0^T,\gammab_0^T]^T.
$
 In this notation, we study a sequence of problems of increasing dimensions $(n,d,p)$ as in \eqref{eq:linear} that further satisfy:
\bea
&\|\etab_0\|_2 \ras r\,, \label{eq:strengths} \\
&\|\betab_0\|_2 \ras s:=s(\kappa)\, ~~\text{ and }~~~ \|\gammab_0\|_2 \ras \sigma:=\sigma(\kappa) = \sqrt{r^2-s^2(\kappa)}. \nn
\eea
%
%
Above and throughout the paper, for a sequence of random variables $\mathcal{X}_{n,p,d}$ that converges almost-surely (resp., in probability) to a constant $c$ in the limit of \eqref{eq:linear}, we write $\mathcal{X}_{n,p,d}\ras c$ (resp. $\mathcal{X}_{n,p,d}\rP c$).

The parameters $s\in[0,r]$ and $\sigma$ in \eqref{eq:strengths} can be thought of as the useful signal strength and the noise strength, respectively. Our notation specifies that $s(\kappa)$ (hence also, $\sigma(\kappa)$) is a function of $\kappa$. We are interested in functions $s(\kappa)$ that are increasing in $\kappa$ such that the signal strength increases as more features enter into the training model; see Section~\ref{sec:explicit} for explicit parameterizations. For each triplet $(n,d,p)$ in the sequence of problems that we consider, the corresponding training set $\Tc=\Tc(\{1,2,\ldots,p\})$ follows \eqref{eq:TS}.

\subsection{Regimes of learning: Phase-transition}\label{sec:PT}
As discussed in Section \ref{sec:GD}, the behavior of GD changes depending on whether the training data is separable or not. The following proposition establishes a sharp phase-transition characterizing the separability of the training data under the data model of Section \ref{sec:model}.  

We reserve the following notation for random variables $G,H, Z$ and $Y_{r,s}$
\bea
H,G,Z&\simiid\Nn(0,1),\nn\\
\text{and}\quad Y_{r,s}&\sim\Rad{\rhod(s\,G+\sqrt{r^2-s^2}\,Z)},\label{eq:HGZ}
\eea for $s$ and $r$ as defined in \eqref{eq:strengths}.  We will often drop the subscripts $r$ and $s$ from $Y_{r,s}$ and simply write $Y$ when their values are clear from context. All expectations and probabilities are with respect to the randomness of $H,G,Z$. Also, let $(x)_{-}=\min\{x,0\}$.  


\begin{propo}[Phase transition]\label{propo:PT} 
Fix total signal strength $r$ and let $s(\kappa)\in(0,r]$ be an increasing function of $\kappa\in(0,\zeta]$. For a training set $\Tc$ that is generated by either of the two models in \eqref{eq:yi_log} or \eqref{eq:yi_GM} such that  \eqref{eq:strengths} is satisfied, consider the event 
$$
\Ecsep := \left\{ ~\exists \betab\in\R^p~:~y_i\w_i^T\betab\geq 1,~\forall i\in[n]~ \right\},
$$
under which the training data is separable. Recall from \eqref{eq:TS} that for all $i\in[n]$, $\w_i = \x_i\big(\{1,\ldots,p\}\big)$ are the $p$ (out of $d$) features that are used for training. 
Recall the notation in  \eqref{eq:HGZ} and define a random variable $V_{r,s(\kappa)}$ depending on the data generation model as follows
\bea\label{eq:V}
V_{r,s(\kappa)}:= \begin{cases}
G\,Y_{r,s(\kappa)} & ,\,\text{if}~~~ \Tc \sim \eqref{eq:yi_log},\\
G+s(\kappa) & ,\,\text{if}~~~ \Tc \sim \eqref{eq:yi_GM}.
\end{cases}
\eea
Further, define the following threshold function $g:(0,\zeta]\rightarrow\R_{+}$ that also depends on the data generation model:
\bea\label{eq:threshold_func}
g(\kappa):= 
\min_{t\in\R}\E\left(H+t\,V_{r,s(\kappa)}\right)_{-}^2.
\eea
Let $\kappa_\star\in[0,1/2]$ be the unique solution to the equation $g(\kappa)=\kappa$. 
Then, the following holds regarding $\Ecsep$:
\bea
\nn \kappa>\kappa_\star~&\Rightarrow~\Pr\left(~\text{the event $\Ecsep$ holds for all large $n$}~\right)=1,\\
\nn \kappa<\kappa_\star~&\Rightarrow~\Pr\left(~\text{the event $\Ecsep$ holds for all large $n$}~\right)=0.
\eea
\end{propo}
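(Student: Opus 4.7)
The plan is to identify $\Ecsep$ with the feasibility of the hard-margin SVM program \eqref{eq:SVM} and to analyze that feasibility via the extensions of the CGMT presented in Appendix~\ref{sec:CGMT_new}. The first step I would take is the projection trick: decompose $\w_i = u_i\tilde\betab_0 + \w_i^\perp$ along and orthogonal to $\tilde\betab_0 := \betab_0/\|\betab_0\|_2$. Under both data models, a direct calculation gives (i) $\w_i^\perp$ is an isotropic Gaussian on $\tilde\betab_0^\perp$ independent of $y_i$, (ii) $V_i := y_i u_i$ has the distribution $V_{r,s(\kappa)}$ defined in \eqref{eq:V}, and (iii) $\tilde\w_i^\perp := y_i\w_i^\perp \stackrel{d}{=} \w_i^\perp$ is independent of $V_i$. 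Writing $\betab = c\tilde\betab_0 + \betab^\perp$ and absorbing the factor $\|\betab_0\|_2 \ras s$, separability reduces to the feasibility of
$$
c\,V_i + (\tilde\w_i^\perp)^T\betab^\perp \geq 1,\qquad i=1,\ldots,n,
$$
with $(V_i,\tilde\w_i^\perp)$ iid and the two coordinates independent.

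Next I plan to invoke Farkas' lemma: the reduced system is infeasible iff there exists $\mub \in \Delta_n := \{\mub \in \R_+^n : \oneb^T\mub = 1\}$ with $\mathbf{V}^T\mub = 0$ and $\tilde W_\perp^T\mub = 0$, where $\mathbf{V} := (V_1,\ldots,V_n)^T$ and $\tilde W_\perp \in \R^{n\times(p-1)}$ stacks the $\tilde\w_i^\perp$'s. Hence $\Ecsep^c$ equals $\{\min_{\mub\in\Delta_n}[(\mathbf{V}^T\mub)^2 + \|\tilde W_\perp^T\mub\|_2^2] = 0\}$. Rewriting the square-root of the inner quadratic as a maximum over the Euclidean unit ball exposes a Gaussian bilinear form $\mub^T\tilde W_\perp \v$, to which the CGMT applies: the associated Auxiliary Optimization (AO) replaces this form by $\|\v\|_2\g^T\mub + \|\mub\|_2\h^T\v$ for independent Gaussians $\g\in\R^n$ and $\h\in\R^{p-1}$. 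Eliminating the inner variables in closed form yields
$$
\mathrm{AO}_n \;=\; \min_{\mub\in\Delta_n}\sqrt{(\mathbf{V}^T\mub)^2 + \bigl(\g^T\mub + \|\mub\|_2\|\h\|_2\bigr)_+^2},
$$
which vanishes iff there exists $\mub\in\Delta_n$ with $\mathbf{V}^T\mub=0$ and $\g^T\mub + \|\mub\|_2\|\h\|_2 \leq 0$. Rescaling to $\|\mub\|_2 = 1$, dualizing the equality constraint with multiplier $t\in\R$, and applying the identity $\min_{\mub\geq\mathbf{0},\,\|\mub\|_2\leq 1}\mathbf{a}^T\mub = -\sqrt{\sum_i(a_i)_-^2}$, I obtain the scalar reformulation: $\mathrm{AO}_n$ vanishes iff $\min_{t\in\R}\sum_i(g_i + tV_i)_-^2 \geq \|\h\|_2^2$. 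Dividing both sides by $n$ and invoking the LLN $\frac{1}{n}\sum_i(g_i + tV_i)_-^2 \ras \E(H + tV_{r,s(\kappa)})_-^2$ (uniformly for $t$ in a compact set) together with $\|\h\|_2^2/n \ras \kappa$, the asymptotic AO-vanishing condition becomes exactly $g(\kappa) \geq \kappa$. The feasibility- and almost-sure-version extensions of the CGMT in Appendix~\ref{sec:CGMT_new} will then transfer this dichotomy back to the primal and establish both implications of the statement.

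Finally, to prove existence and uniqueness of $\kappa_\star$, I would verify three properties of $g$: (a) continuity in $\kappa$, via dominated convergence and continuity of $s(\kappa)$; (b) the bound $g(\kappa) \leq g(0^+) = \E H_-^2 = 1/2$, since $V_{r,s}$ reduces to $\Nn(0,1)$ at $s=0$ in both models and $t=0$ is then a minimizer; (c) monotonicity: $g$ is non-increasing in $\kappa$, because larger $s(\kappa)$ makes $H + tV_{r,s}$ easier to push non-negative (for GM, $V$ shifts rightward; for logistic, $GY_{r,s}$ becomes more positively correlated with $G$ as $s$ grows). Together these yield a unique zero $\kappa_\star$ of the strictly-increasing map $\kappa\mapsto\kappa - g(\kappa)$ in $(0,1/2]$. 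The hardest step in the plan is the CGMT application: SVM feasibility is intrinsically unbounded in $\betab$, so the standard CGMT (which assumes compact primal sets) does not apply directly, and upgrading convergence from in-probability to almost-sure requires the exponentially-concentrated and feasibility-aware extensions developed in Appendix~\ref{sec:CGMT_new}.
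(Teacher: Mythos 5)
Your projection decomposition is essentially the paper's own first move (the paper rotates so $\betab_0 = [s,0,\ldots,0]^T$ and separates out the first coordinate, arriving at exactly the same random variable $V_{r,s}$). The genuinely different choice is what you do next: you pass through Gordan's theorem of the alternative, encoding $\Ecsep^c$ as $\big\{0 \in \operatorname{conv}\{y_i\w_i\}\big\}$ and studying the always-finite distance $\min_{\mub\geq 0,\,\oneb^T\mub=1}\big\|\sum_i\mu_i y_i\w_i\big\|_2$ via the CGMT. The paper instead analyzes the SVM min-max Lagrangian in \eqref{eq:origprob}, whose optimal value $\Phi^{(n)}$ equals $+\infty$ exactly when the data is not separable: infeasibility is encoded as unboundedness, which is precisely what Theorem~\ref{th:feasibility} is built to detect through the sequence of $(R,\Gamma)$-bounded AOs.

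This difference is where your plan has a genuine gap, concentrated in the $\kappa<\kappa_\star$ direction. Non-separability requires your convex-hull distance to equal zero \emph{exactly}, but the CGMT only transfers approximate values: if the scalarized AO concentrates near $0$, then so does your primal distance, and ``converges to $0$'' is strictly weaker than ``is $0$.'' Invoking the feasibility extensions of Appendix~\ref{sec:CGMT_new} does not repair this: Theorem~\ref{th:feasibility}(i) establishes that $\Phi(\G)=\infty$ --- an all-or-nothing statement about an objective that can be unbounded --- and says nothing about a bounded nonnegative minimum being exactly zero. (Indeed, your primal constraint set is compact, so the vanilla CGMT already applies, but it still stops short of exact-zeroness.) A signed formulation such as the margin $\max_{\|\betab\|\le 1}\min_i y_i\w_i^T\betab$ would at least preserve the sign information that your distance $(\,\cdot\,)_+$ destroys, but the cleanest way to get an exact statement is what the paper does: keep the SVM cost, which is infinite iff infeasible. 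Two smaller items: for $\kappa>\kappa_\star$ you derived only the AO-\emph{vanishing} criterion, not a quantitative lower bound on the AO that is bounded away from zero, which is what Gordon's inequality \eqref{eq:gmt} actually needs in order to conclude the primal distance is positive; and your monotonicity heuristic for $g$ needs the careful Gaussian reparametrization and integration-by-parts argument of Lemmas~\ref{lem:Ong}--\ref{lem:inc1} to be made rigorous, though it is aimed in the right direction.
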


 Put in words: the training data is separable iff $\kappa>\kappa_\star$. When this is the case, then the training error $\Rct$ can be driven to zero and we are in the \emph{interpolating regime}. 
In contrast, the training error is non-vanishing for smaller values of $\kappa$. 

In the case of the GM model, the threshold function $g(\kappa)$ takes a simple form as follows.
First, assume $\Tc\sim\eqref{eq:yi_GM}$ and substitute the value of $V=G+s$ in \eqref{eq:threshold_func}. Then, using the fact that $G$ and $H$ are independent Gaussians the threshold function simplifies to:
\begin{align}
g(\kappa) &= 
\min_{t\in\R}\E\left(G\sqrt{1+t^2}+t\,s\right)_{-}^2 \label{eq:g_GM}
\\
&=\min_{t\in\R}\Big\lbrace \left(1 + t^2 + t^2s^2\right) \cdot Q\left(\frac{ts}{\sqrt{1+t^2}}\right) -ts\sqrt{1+t^2}\cdot{\psi}\Big(-\frac{ts}{\sqrt{1+t^2}}\Big)\,\Big\rbrace. \nn
\end{align}
where $\psi(t)=\frac{1}{\sqrt{2\pi}}e^{-t^2/2}$ and $Q(x)=\int_{x}^{\infty}\psi(t)\mathrm{d}t$ are the density and tail function of the standard normal distribution, respectively. Recall from \eqref{eq:strengths} that the signal strength $s$ is a function of $\kappa$, which explains why $g$ is a function of $\kappa$.

The proposition above is an extension of the phase-transition result by Candes and Sur  \cite{candes2018phase} for the noiseless logistic model. Specifically, we extend their result to the noisy setting (to accommodate for the feature selection model in Section \ref{sec:train_model}) as well as to the Gaussian mixtures model.  Our analysis approach and proof technique are also very different to  \cite{candes2018phase}. Both approaches bare intimate connections and are motivated by corresponding results on sharp phase-transitions in compressed sensing \cite{Sto,Cha,TroppEdge}.  On the one hand, the proof in \cite{candes2018phase} is based on a purely geometric argument and specifically on an appropriate application of the approximate kinematic formula of \cite{TroppEdge}. This leads to a non-asymptotic characterization of the phase-transition. On the other hand, in this paper, we follow an approach that is based on Gaussian process inequalities \cite{rudelson2006sparse,Sto,Cha,StoLasso,OTH13} and specifically on the Convex Gaussian min-max Theorem (CGMT) \cite{COLT}. Our idea is exploiting the fact that data is linearly separable iff the solution to hard-margin SVM \eqref{eq:SVM} is bounded. Based on this, we are able to appropriately use the CGMT in order to show that the minimization \eqref{eq:SVM} is bounded almost surely iff $\kappa>\kappa_\star$. Previous applications of the CGMT have almost entirely focused on feasible minimization problems. In this work, we formulate Theorem \ref{th:feasibility} in Appendix \ref{sec:CGMT_new} as a useful corollary of the CGMT to further allow the study of feasibility questions. We anticipate that the theorem proves useful in other settings beyond the specifics considered in this paper. The detailed proof of Proposition \ref{propo:PT} is given in Appendix \ref{sec:proof_PT}.   

\subsection{High-dimensional asymptotics}\label{sec:asy}

For each increasing triplet $(n,d,p)$ and sequence of problem instances following the asymptotic setting of Section \ref{sec:setting}, let $\betabh\in\R^d$ be the sequence of vectors of increasing dimension corresponding to the convergence point of GD for logistic loss. The Propositions \ref{propo:ML} and \ref{propo:SVM} below characterize the asymptotic value of the cosine similarity and of the classification error of the sequence of $\betabh$'s for the under- and over-parameterized regimes, respectively. Recall that, that this task is equivalent to characterizing the statistical properties of the two convex optimization-based estimators \eqref{eq:ML} and \eqref{eq:SVM}.


Recall that we use $\ras$ to denote almost sure convergence in the limit of \eqref{eq:linear}. Also, we denote the proximal operator of the logistic loss as follows, 
$$
\prox{\ell}{x}{\la}:= \arg\min_{v}\frac{1}{2\la}(x-v)^2+\ell(v).
$$

%

\subsubsection{Non-separable data}
\begin{propo}[Asymptotics of ML]\label{propo:ML}
Fix total signal strength $r$ and overparameterization ratio $\kappa<\kappa_\star$. Denote $s=s(\kappa)\in(0,r]$. With these, consider a training set $\Tc$ that is generated by either of the two models in \eqref{eq:yi_log} or \eqref{eq:yi_GM}. Let $\betabh$ be given as in \eqref{eq:ML}. Recall the notation in  \eqref{eq:HGZ} and define a random variable $V=V_{r,s}$ depending on the data generation model as in \eqref{eq:V}. Let $(\mu,\alpha>0,\la>0)$ be the unique solution to the following system of three nonlinear equations in three unknowns,
\begin{align}
0 &= \mathbb{E}\left[V\cdot\ell^\prime(\prox{\ell}{\alpha H+\mu V}{\lambda})\right],\nn\\
\alpha^2\,\kappa &= \lambda^2\,\mathbb{E}\left[(\ell^\prime(\prox{\ell}{\alpha H+\mu V}{\lambda}))^2\right],\nn
\\
\kappa &= \lambda\,\E\Big[\frac{{\ell}^{\prime\prime}(\prox{\ell}{\alpha H+\mu V}{\lambda})}{1+\la{\ell}^{\prime\prime}(\prox{\ell}{\alpha H+\mu V}{\lambda})}\Big].\label{eq:ML_gen}
\end{align}
Then,  
\bea
\Cc(\betabh)\ras \frac{s\,\mu}{r\,\sqrt{\mu^2+\alpha^2}}\, \quad\text{and}\quad
\Rc(\betabh)\ras\Pro\left(\mu\,V+\alpha\,H<0\right).\nn
\eea
\end{propo}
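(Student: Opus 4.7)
The plan is to analyze the ML estimator in \eqref{eq:ML} via the Convex Gaussian Min-max Theorem (CGMT), using the almost-sure extension to be stated in Appendix \ref{sec:CGMT_new}.

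The first step is a ``projection trick'' that puts the empirical logistic risk into standard CGMT form. For any candidate $\betab\in\R^p$, decompose $\betab=\mu\,\betab_0/\|\betab_0\|+\tilde{\V}\betab'$, where $\tilde\V\in\R^{p\times(p-1)}$ is an orthonormal basis of the orthogonal complement of $\betab_0$. A direct computation shows that, for either generative model,
$$y_i\,\w_i^T\betab \;=\; \mu\,V_i \;+\; (\tilde\W\betab')_i,\qquad i=1,\ldots,n,$$
where $V_i\simiid V$ is exactly the variable in \eqref{eq:V}, and $\tilde\W\in\R^{n\times(p-1)}$ has i.i.d.\ $\Nn(0,1)$ entries that are jointly independent of $(V_i)$. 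For the logistic model this uses that $y_i$ depends on $\w_i$ only through $\w_i^T\betab_0/\|\betab_0\|$ and an independent noise; for the GM model one first replaces $\w_i$ by $\w_i':=y_i(\w_i-y_i\betab_0)$, which is i.i.d.\ $\Nn(0,\Iden_p)$. Introducing the convex conjugate $\ell^*$ and a Lagrange multiplier $\u\in\R^n$, the problem is recast as the convex-concave PO
$$\min_{\mu\in\R,\,\betab'\in\R^{p-1}}\ \max_{\u\in\R^n}\ \Big\{\tfrac{\mu}{n}\u^T\vec{V}+\tfrac{1}{n}\u^T\tilde\W\betab'-\tfrac{1}{n}\sum_{i=1}^n\ell^*(u_i)\Big\}.$$

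Next, CGMT replaces $\u^T\tilde\W\betab'$ by $\|\betab'\|\,\g^T\u-\|\u\|\,\h^T\betab'$ with independent $\g\sim\Nn(0,\Iden_n)$, $\h\sim\Nn(0,\Iden_{p-1})$. The resulting AO is simplified in three moves: (i) optimizing over the direction of $\betab'$ collapses $\h^T\betab'$ to $\alpha\|\h\|$ with $\alpha:=\|\betab'\|\ge 0$; (ii) Young's inequality $\alpha\|\h\|\|\u\|=\min_{\tau>0}\{\alpha^2\|\h\|^2/(2\tau)+\tau\|\u\|^2/2\}$ decouples $\|\u\|$; (iii) the now-separable max over each $u_i$ equals, by the conjugate--Moreau identity $(\ell^*(u)+\tfrac\tau2u^2)^*(x)=\envl{x}{\tau}$, the value $\envl{\mu V_i+\alpha g_i}{\tau}$. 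Using $\|\h\|^2/n\ras\kappa$ and the strong law of large numbers for the empirical Moreau-envelope term, the AO converges almost surely to the deterministic scalar saddle-point
$$\min_{\mu\in\R,\,\alpha\ge 0}\ \max_{\tau>0}\ \Phi(\mu,\alpha,\tau):=\E\!\left[\envl{\mu V+\alpha H}{\tau}\right]-\frac{\kappa\,\alpha^2}{2\tau}.$$
The almost-sure CGMT of Appendix \ref{sec:CGMT_new} then transfers the limiting values of the scalar saddle-point back to the original PO, identifying the optimal $\mu$ and $\alpha$ as the almost-sure limits of $\inp{\betabh}{\betab_0}/\|\betab_0\|$ and $\|\betabh-(\mu/s)\betab_0\|$, respectively.

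The three fixed-point equations in \eqref{eq:ML_gen} then arise as the stationary conditions $\partial_\mu\Phi=\partial_\alpha\Phi=\partial_\tau\Phi=0$, using the derivative formulas $\partial_x\envl{x}{\la}=\ellp(\proxl{x}{\la})$, $\partial_\la\envl{x}{\la}=-\tfrac12\ellp(\proxl{x}{\la})^2$, and $\partial_x\proxl{x}{\la}=1/(1+\la\,\elldd(\proxl{x}{\la}))$, together with the Gaussian integration-by-parts identity $\E[H\,\ellp(\proxl{\mu V+\alpha H}{\tau})]=\alpha\,\E[\elldd(\proxl{\mu V+\alpha H}{\tau})/(1+\tau\,\elldd(\proxl{\mu V+\alpha H}{\tau}))]$. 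After the identification $\la\equiv\tau$, these reduce exactly to \eqref{eq:ML_gen}. The two advertised asymptotics follow by translation: $\|\betabh\|^2\ras\mu^2+\alpha^2$ and $\inp{\betabh}{\etab_0}\ras s\mu$ give the formula for $\Cc(\betabh)$; and for a fresh test sample $(\x,y)$, the same decomposition as in Paragraph~1 applied to $y\,\w^T\betabh$ yields $y\,\w^T\betabh\overset{d}{=}\mu V+\alpha H$, hence $\Rc(\betabh)\ras\Pr(\mu V+\alpha H<0)$.

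The main obstacles are threefold. First, in the non-separable regime $\kappa<\kappa_\star$, one must combine Proposition \ref{propo:PT} with coercivity of the empirical logistic risk to derive a deterministic a-priori bound on $\|\betabh\|$, so that the PO can be restricted to a compact set at negligible cost and the CGMT applies. Second, upgrading the standard ``in-probability'' CGMT conclusion to almost-sure convergence requires the extension in Appendix \ref{sec:CGMT_new}. Third, and most delicate, one must establish existence and uniqueness of the solution of the scalar system \eqref{eq:ML_gen}: existence follows from coercivity of $\Phi(\mu,\alpha,\cdot)$ after partially optimizing $\tau$, while uniqueness requires showing that the reduced objective is strictly convex in $(\mu,\alpha)$, a property which does not follow from the joint min-max structure alone and has to be argued from properties of the Moreau envelope of the logistic loss.
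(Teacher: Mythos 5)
Your proposal is correct and follows essentially the same route as the paper's Appendix F: projection trick to put the problem in CGMT form, scalarization of the AO via the direction of $\betab'$ and the square-completion/Young step to arrive at the expected Moreau-envelope objective $-\kappa\alpha^2/(2\tau)+\E[\envl{\mu V+\alpha H}{\tau}]$, followed by passing to the unbounded PO via Lemma~\ref{lemma:5} and establishing uniqueness via strict convexity of the perspective of the envelope. The only cosmetic difference is that you reach the separable Moreau-envelope term by first passing to the convex conjugate $\ell^*$ and invoking the conjugate--Moreau identity, whereas the paper keeps the auxiliary primal variable $u_i$ (so the envelope appears as a min over $u_i$ rather than a max over the conjugate dual) — both are standard and lead to the identical scalar saddle point and the same stationarity system \eqref{eq:ML_gen}.
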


When data is generated according to \eqref{eq:yi_GM}, then $V=G+s$. Using this and Gaussian integration by parts, the system of three equations in \eqref{eq:ML_gen} simplifies to the following:
\begin{align}
\mu\,\kappa &= -s\,\mathbb{E}\left[\lambda\,\ell^\prime\Big(\,\prox{\ell}{G_{\mu,\alpha}}{\lambda}\,\Big)\right],\nn\\
\alpha^2\,\kappa &= \mathbb{E}\left[(\lambda\,\ell^\prime(\,\prox{\ell}{G_{\mu,\alpha}}{\lambda}\,))^2\right],\nn
\\
\kappa &= \E\Big[\frac{\lambda{\ell}^{\prime\prime}(\,\prox{\ell}{G_{\mu,\alpha}}{\lambda}\,)}{1+\la{\ell}^{\prime\prime}(\,\prox{\ell}{G_{\mu,\alpha}}{\lambda}\,)}\Big],
\label{eq:ML_GM}
\end{align}
where the expectations are over a \emph{single} Gaussian random variable $G_{\mu,\alpha}\sim \Nn\left(\,\mu s \,,\, \alpha^2+\mu^2\,\right)$.
We numerically solve the system of equations via a fixed-point iteration method first suggested in \cite{Master} in a similar setting (see also \cite{salehi2019impact,taheri2019sharp}). We empirically observe that reformulating the equations as in \eqref{eq:ML_GM} is critical for the iteration method to converge. Note that the performance of logistic regression (the same is true for SVM in Section \ref{sec:sep}) does \emph{not} depend on the prior probabilities for the two classes of the GM model with isotropic Gaussian features and symmetric class centers $\pm\etab_0$ in \eqref{eq:yi_GM}. The reason behind this is that our linear classifier \eqref{eq:yhat} does not include a constant offset, say $\yh = \sign(\w^T\betabs + \beta_c).$ There is nothing fundamental changing in our analysis and results when adding an offset $\beta_c$ other than the final asymptotic formulae becoming somewhat more complicated. Thus, we have decided to study \eqref{eq:yhat} without offset to keep the exposition simpler and focused on the main points.

The performance of logistic loss under the logistic model was recently studied in \cite{sur2019modern,salehi2019impact,taheri2019sharp}. Compared to this prior work: (i) our result extends to the mismatch model of Section \ref{sec:model}; (ii) we obtain a prediction for the classification error; (iii) we prove convergence in an almost-sure sense (rather than with probability 1 as in previous works). Also, to the best of our knowledge, this work is the first to formally prove uniqueness and existence of solutions to \eqref{eq:ML_gen} under non-separable data. Our proof is based on an extension of the CGMT presented as Corollary \ref{cor:characterization} in Appendix \ref{sec:CGMT_new}. The proof of Proposition \ref{propo:ML} is given in Appendix \ref{sec:proof_ML}. 
\subsubsection{Separable data}\label{sec:sep}
Here, we characterize the asymptotic generalization performance of hard-margin SVM under both models \eqref{eq:yi_log} and \eqref{eq:yi_GM}. 
\begin{propo}[Asymptotics of SVM]\label{propo:SVM}
Fix total signal strength $r$ and overparameterization ratio $\kappa<\kappa_\star$. Denote $s=s(\kappa)\in(0,r]$. With these, consider a training set $\Tc$ that is generated by either of the two models in \eqref{eq:yi_log} or \eqref{eq:yi_GM}. Let $\betabh$ be given as in \eqref{eq:SVM}. Recall the notation in  \eqref{eq:HGZ} and define a random variable $V=V_{r,s}$ depending on the data model as in \eqref{eq:V}. With these, define
\bea
\eta(q,\rho)&:=\mathbb{E} \Big(\rho \,V+H\,\sqrt{1-\rho^2}-\frac{1}{q}\Big)_{-}^2-(1-\rho^2)\kappa.\label{eq:eta_func}
\eea
Let $q^\star$ be the unique solution of the equation $$\min_{-1\leq\rho\leq 1}\eta(q,\rho) = 0.$$ Further let $\rho^\star$ be the unique minimum of $\eta(q^\star,\rho)$ for $\rho\in[-1,1]$.
Then, 
 \bea
\nn 
\Cc(\betabh) \ras \frac{\rho^\star \,s}{r}\, \quad\text{and}\quad
\Rc(\betabh)\ras\Pro\left(\rho^\star\,V+\sqrt{1-\rho^{\star 2}} \,H<0\right).\nn
\eea
\end{propo}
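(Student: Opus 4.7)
The plan is to reduce the hard-margin SVM \eqref{eq:SVM} to a scalarized auxiliary optimization (AO) via the CGMT, and then identify its minimizer with $(\rho^\star,q^\star)$. The first step is the projection trick of \cite{NIPS}. Set $\vb_0:=\betab_0/\|\betab_0\|$ and decompose $\w_i=(\w_i^T\vb_0)\vb_0+\w_i^\perp$ and $\betab=\mu\vb_0+\betab^\perp$ with $\w_i^\perp,\betab^\perp\perp\vb_0$. The SVM constraint $y_i\w_i^T\betab\geq 1$ becomes $\mu u_i+\widetilde\w_i^T\betab^\perp\geq 1$ with $u_i:=y_i\w_i^T\vb_0$ and $\widetilde\w_i:=y_i\w_i^\perp$, while $\|\betab\|^2=\mu^2+\|\betab^\perp\|^2$. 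Direct Gaussian calculations under each of \eqref{eq:yi_log} and \eqref{eq:yi_GM} show: (i) the scalars $u_i$ are i.i.d.~and converge in distribution to $V_{r,s}$ of \eqref{eq:V}; (ii) the matrix $\widetilde A\in\R^{n\times(p-1)}$ with rows $\widetilde\w_i^T$ has i.i.d.~standard Gaussian entries on $\vb_0^\perp$ and is \emph{independent} of $(u_i)_{i=1}^n$. Introducing Lagrange multipliers $\lambda\geq 0$ writes the SVM as the bilinear min-max
$$\min_{\mu,\betab^\perp}\max_{\lambda\geq 0}\,\tfrac12(\mu^2+\|\betab^\perp\|^2)+\lambda^T(\oneb-\mu u)-\lambda^T\widetilde A\betab^\perp,$$
i.e., in the form amenable to the CGMT.

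The second step applies a suitable extension of the CGMT to replace $\lambda^T\widetilde A\betab^\perp$ by $\|\betab^\perp\|\,\g^T\lambda+\|\lambda\|\,\h^T\betab^\perp$ with independent standard Gaussians $\g\in\R^n$, $\h\in\R^{p-1}$. Both constraint sets are a priori unbounded, so I invoke the extensions from Appendix~\ref{sec:CGMT_new}: Proposition~\ref{propo:PT} provides a.s.~separability for $\kappa>\kappa_\star$, and a standard SVM margin argument converts this into an a priori high-probability bound $\|\betabh\|\leq C$ that localizes the primal variable. Minimizing $\betab^\perp$ along $-\h$ with $\alpha:=\|\betab^\perp\|$, and factoring $\lambda=\beta\hat\lambda$ with $\beta\geq 0$, $\hat\lambda\geq 0$, $\|\hat\lambda\|=1$, the inner optimization reduces the AO to the feasibility-type scalar program
$$\min_{\mu\in\R,\,\alpha\geq 0}\,\tfrac12(\mu^2+\alpha^2)\quad\text{s.t.}\quad\tfrac1n\|(\oneb-\mu u-\alpha\g)_+\|_2^2\leq\alpha^2\,\tfrac{\|\h\|^2}{n}.$$
A uniform law of large numbers on compact $(\mu,\alpha)$-sets passes this to the deterministic limit
$$\min_{\mu\in\R,\,\alpha\geq 0}\,\tfrac12(\mu^2+\alpha^2)\quad\text{s.t.}\quad\E[(1-\mu V-\alpha H)_+^2]\leq\alpha^2\kappa.$$

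Third, set $q:=\sqrt{\mu^2+\alpha^2}$ and $\rho:=\mu/q\in[-1,1]$; dividing the constraint by $q^2$ rewrites it as $\eta(q,\rho)\leq 0$, so the program becomes: minimize $q$ over $q$ such that $\min_\rho\eta(q,\rho)\leq 0$. To identify this minimizer with $(q^\star,\rho^\star)$ I show $\rho\mapsto\eta(q,\rho)$ is strictly convex for each fixed $q$ (differentiate under the expectation and apply Cauchy--Schwarz to the resulting Gaussian integral), giving a unique inner minimizer, and that $q\mapsto\min_\rho\eta(q,\rho)$ is continuous and strictly decreasing with $\min_\rho\eta(q,\rho)\to+\infty$ as $q\to 0^+$ and $\min_\rho\eta(q,\rho)\to g(\kappa)-\kappa<0$ as $q\to\infty$, where the last inequality is exactly the content of Proposition~\ref{propo:PT} for $\kappa>\kappa_\star$. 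Existence and uniqueness of $(q^\star,\rho^\star)$ follow. The CGMT extensions then upgrade convergence of optimal values to a.s.~convergence of optimizers, $\mu\ras\rho^\star q^\star$ and $\alpha\ras\sqrt{1-\rho^{\star 2}}\,q^\star$. Using $\|\betab_0\|\ras s$ and $\|\etab_0\|\ras r$, $\Cc(\betabh)=\mu\|\betab_0\|/(\|\betabh\|\,\|\etab_0\|)\ras\rho^\star s/r$. For a fresh test pair $(\x,y)$ with $\w:=\x(\Sc)$, $y\w^T\betabh=\mu\,(y\w^T\vb_0)+y\w^{\perp T}\betab^\perp$: the first summand is distributed as $\mu V$ and the second, by Gaussianity and independence, as $\alpha H'$ with $H'\sim\Nn(0,1)$ independent of $V$, so cancelling the positive factor $q^\star$ yields $\Rc(\betabh)\ras\Pro(\rho^\star V+\sqrt{1-\rho^{\star 2}}\,H<0)$.

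The main obstacle is the application of the CGMT in this unbounded setting: its classical form requires compact constraint sets, whereas both the primal variable $\betab$ and the dual variable $\lambda\geq 0$ are a priori unbounded here. Establishing a.s.~(rather than in-probability) convergence additionally forces the uniqueness analysis of $(q^\star,\rho^\star)$ above; both needs are precisely what the extensions of the CGMT formulated in Appendix~\ref{sec:CGMT_new} are designed to address.
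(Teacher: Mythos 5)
Your overall blueprint — project onto $\betab_0$ and its orthocomplement, write the SVM as a bilinear min--max, pass to the scalarized AO via a CGMT extension that handles unbounded sets, reparametrize to polar-type coordinates $(q,\rho)$, and identify the minimizer with $(q^\star,\rho^\star)$ — matches the paper's route. The intermediate scalarized AO you derive and its deterministic limit coincide (up to an immaterial factor of $1/2$ in the objective and the sign convention $(x)_-=-(-x)_+$) with the paper's $\overline{\Ell}(\alpha,\mu)\leq 0$ feasibility constraint, and your identification of the asymptotic risk and cosine similarity follows the same computation.

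There is, however, a genuine gap in your uniqueness step for $\rho^\star$. You assert that $\rho\mapsto\eta(q,\rho)$ is strictly convex ``by differentiating under the expectation and Cauchy--Schwarz.'' This is not evident, and I doubt it holds in general for the logistic model. Note $\eta(q,\rho)=\frac{1}{q^2}\big[\E(\alpha H+\mu V-1)_-^2-\alpha^2\kappa\big]$ restricted to the circle $\alpha^2+\mu^2=q^2$: the first term inside brackets is convex in $(\alpha,\mu)$, but the second is concave, and in any case the restriction of a convex function to a circle is not convex in the angular variable. For the GM case $V=G+s$ the claim happens to hold because $\alpha H+\mu(G+s)\sim\Nn(\mu s,q^2)$ on that circle, so only the (affine-in-$\rho$) mean changes; but for the logistic model $V=GY$ one gets $\Nn(q\rho\,GY,\,q^2(1-\rho^2))$ conditionally, with both mean and variance varying in $\rho$, and direct differentiation produces the term $-\frac{2}{(1-\rho^2)^{3/2}}\E\big[(\phi-c)_-H\big]$, whose sign is not controlled. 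The paper avoids this issue entirely: uniqueness of $\rho^\star$ is proved in the original $(\alpha,\mu)$ variables, where the minimizer of the strictly convex map $(\alpha,\mu)\mapsto\alpha^2+\mu^2$ over the convex set $\{\overline{\Ell}\leq 0\}$ is automatically unique; translating back to $\rho$ gives uniqueness without any convexity-in-$\rho$ claim. You should swap in this argument.

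A smaller point: the remark that separability ``plus a standard SVM margin argument'' yields an a priori high-probability bound $\|\betabh\|\leq C$ is too quick — separability only gives a strictly positive margin, not one bounded away from zero, so the bound is not free. What actually yields a constant upper bound is the feasibility analysis itself (the Gordon-type upper bound $\phi_{k_0}^{(n)}\leq C_\delta^2$ for a ball of fixed radius $k_0$, fed into Theorem~\ref{th:feasibility}(ii)), which is precisely what the paper's sequence-of-bounded-AOs machinery packages. Phrased that way, the localization step is correct and not circular.
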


In addition to the stated results in Proposition \ref{propo:SVM}, our proof further shows that the optimal cost of the hard-margin SVM \eqref{eq:SVM} converges almost surely to $q^\star$. In other words, the margin $1/\|\betabh\|_2$ of the classifier converges in probability to $1/q^\star$. The dertailed proof of Proposition \ref{propo:SVM} is given in Appendix \ref{sec:SVM_prop}.

We note that under the GM model, the function $\eta$ in Proposition \ref{propo:SVM} simplifies to:
\bea\label{eq:eta_func_1}
\eta(q,\rho):=\mathbb{E} \big(G + \rho \,s-{1}/{q}\big)_{-}^2-(1-\rho^2)\kappa,
\eea
where the expectation is over a single Gaussian random variable $G\sim\Nn(0,1)$. Moreover, the formula predicting the risk simplifies to 
$$
\Rc(\betabh) \ras Q(\rho^\star\,s).
$$

\section{Numerical results and discussion}\label{sec:num}

\subsection{Feature selection models}\label{sec:explicit}
Recall that the number of features known at training is determined by $\kappa$. Specifically, $\kappa$ enters the formulae predicting the classification performance via the signal strength $s=s(\kappa)$. In this section, we specify two explicit models for feature selection and their corresponding functions $s(\kappa)$. Similar models are considered in \cite{breiman1983many,hastie2019surprises,belkin2019two}, albeit in a linear regression setting.
%
%
%

\vspace{5pt}
\noindent\textbf{Linear model}.~We start with a uniform feature selection model characterized by the following parametrization:
\bea\label{eq:uniform}
s^2 = s^2(\kappa) = r^2\cdot({\kappa}/{\zeta}),\quad\kappa\in(0,\zeta],
\eea
for fixed $r^2$ and $\zeta>1$. This models a setting where all coefficients of the regressor $\etab_0$ have equal contribution. Hence, the signal strength $s^2=\|\betab_0\|_2^2$ increases linearly with the number $p$ of features considered in training. 

\vspace{5pt}
\noindent\textbf{Polynomial model}.~In the linear model, adding more features during training results in a linear increase of the signal strength. In contrast, our second model assumes diminishing returns:
%
%
\bea\label{eq:poly}
s^2 = s^2(\kappa) = r^2\cdot\big(1-(1+\kappa)^{-\gamma}\big),~\kappa>0,
\eea
for some $\gamma\geq 1$. As $\kappa$ increases, so does the signal strength $s$, but the increase is less significant for larger values of $\kappa$ at a rate specified by $\gamma$.

\subsection{Risk curves}\label{sec:curves}

Figure \ref{fig:exp4str} assumes the logistic data model \eqref{eq:yi_log} and polynomial feature model \eqref{eq:poly} for $\gamma=2$ and three values of total signal strength $r$. The crosses (`$\times$') are simulation results obtained by running GD on synthetic data generated according to \eqref{eq:yi_log} and \eqref{eq:poly}. Specifically, the depicted values are averages calculated over $500$ Monte Carlo realizations for $p=150$. For each realization, we ran GD on logistic loss (see Sec.~\ref{sec:class}) with a fixed step size until convergence and recored the resulting risk.  Similarly, the squares (`$\square$') are simulation results obtained by solving SVM \eqref{eq:SVM} over the same number of different realizations and averaging over the recorded performance. As expected by \cite{pmlr-v99-ji19a} (also Sec.~ \ref{sec:GD}), the performance of GD matches that of SVM when data are separable. Also, as predicted by Proposition \ref{propo:PT}, the data is separable with high-probability when $\kappa>\kappa_\star$ (the threshold value $\kappa_\star$ is depicted with dashed vertical lines). This is verified by noticing the dotted (`$\bullet$') scatter points, which record (averages of) the training error. Recall from Section \ref{sec:class} that the training error is zero if and only if the data are separable. Finally, the solid lines depict the theoretical predictions of Proposition \ref{propo:ML} ($\kappa<\kappa_\star$) and Proposition \ref{propo:SVM} {($\kappa>\kappa_\star$)}. 
The results of Figure \ref{fig:exp4str} validate the accuracy of our predictions: our asymptotic formulae predict the classification performance of GD on logistic loss for all values of $\kappa$. Note that the curves correspond to excess risk defined in \eqref{eq:eR}; see also related Footnote \ref{footnote:eR}. Corresponding results for the cosine similarity are presented in Figure \ref{fig:app1} in Appendix \ref{sec:num_more}.

\begin{figure*}[t]
\begin{center}
\includegraphics[width=0.4\textwidth]{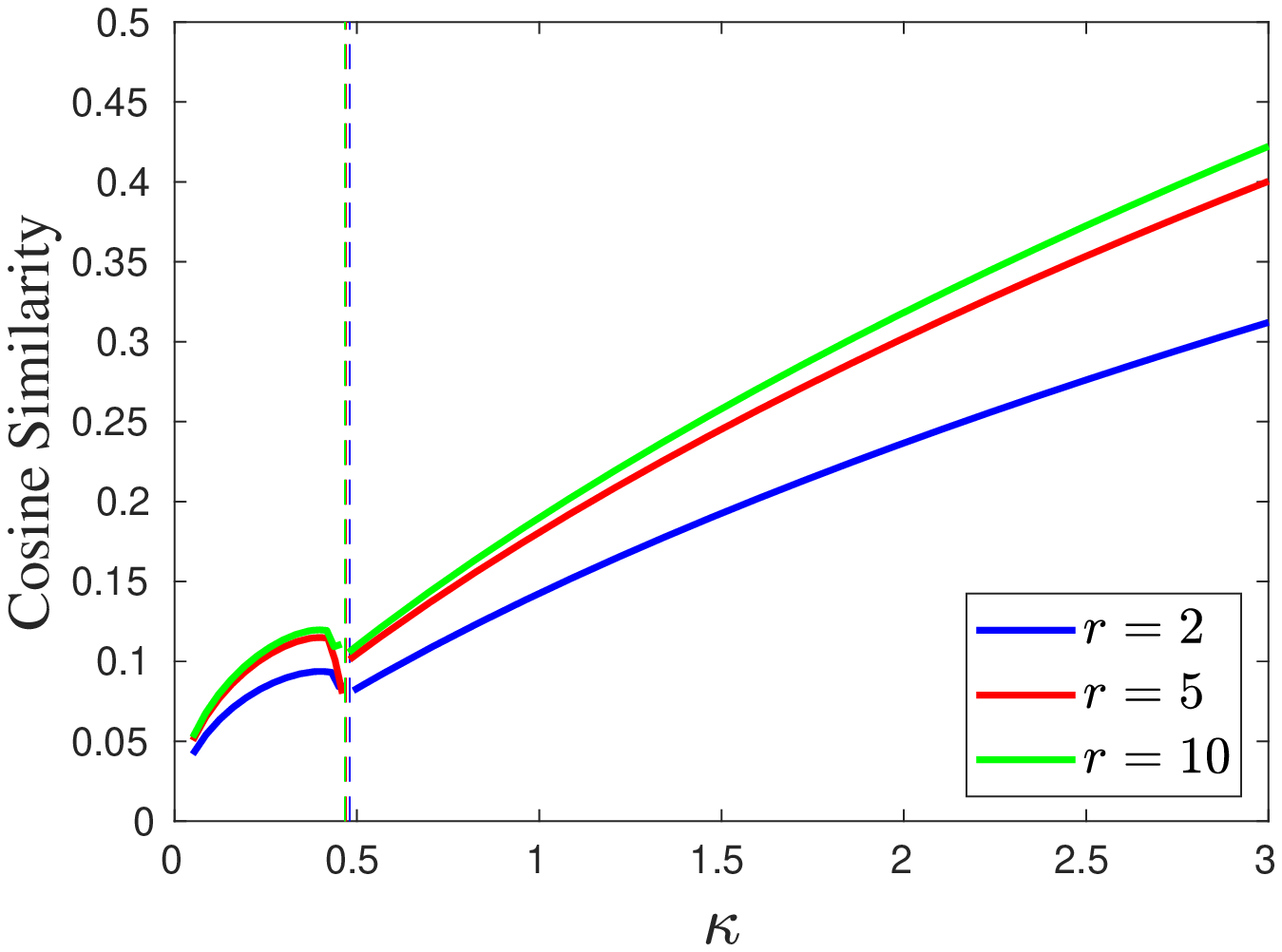}
\includegraphics[width=0.4\textwidth]{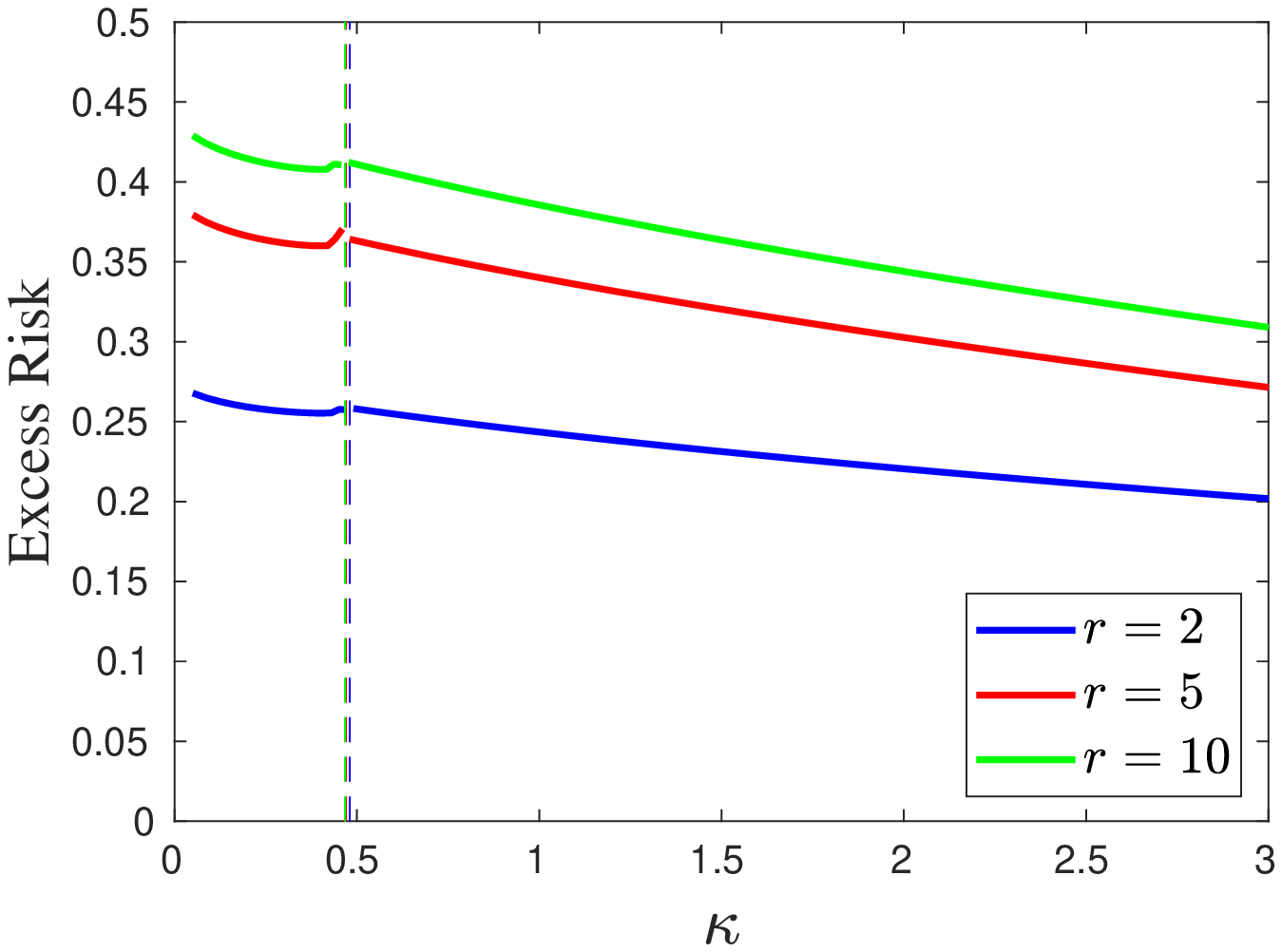}
\end{center}
\caption{Plots of the cosine similarity (left) and of the excess risk (right) as a function of $\kappa$ for the linear feature selection model (cf. \eqref{eq:uniform})  under logistic data. The curves shown are for $\zeta=3$ and three values of $r=2, 5$ and $10$.}
\label{fig:poly4str}
\end{figure*}

\begin{figure*}[t]
\begin{center}
\includegraphics[width=.3\textwidth]{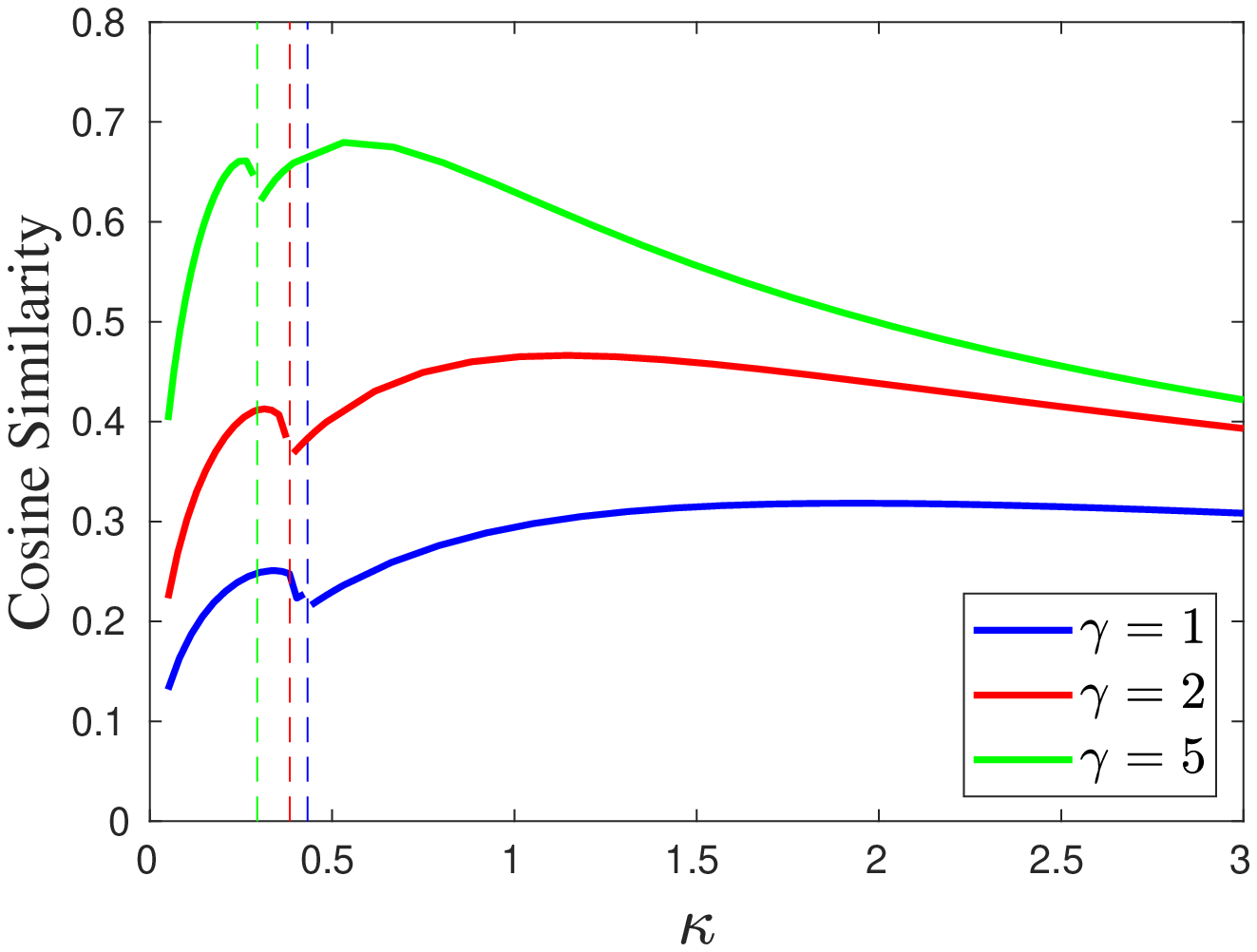}
\includegraphics[width=.3\textwidth]{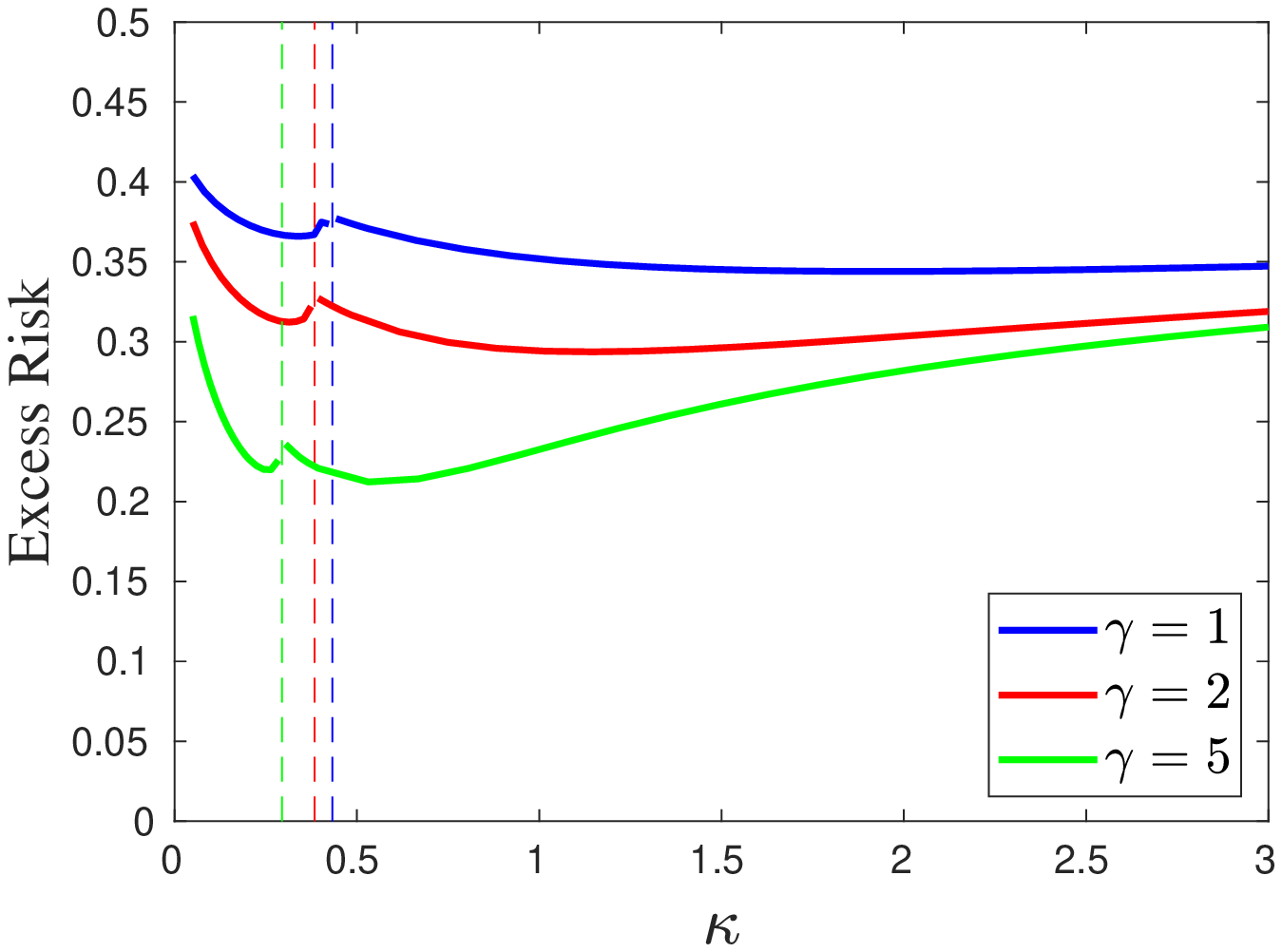}
\includegraphics[width=0.3\textwidth]{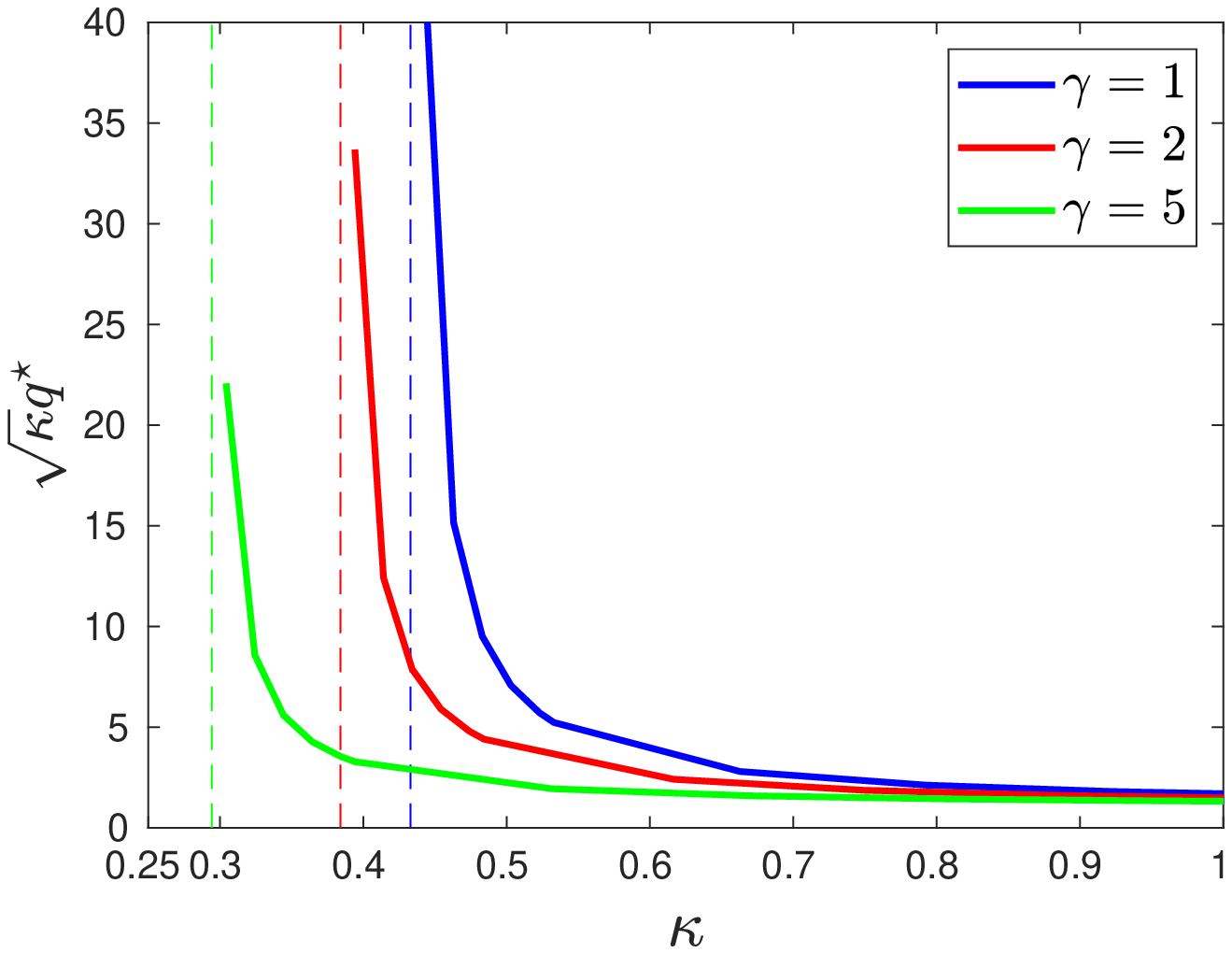}
\end{center}
\caption{Plots of the cosine similarity (left) and of the excess risk (middle) as a function of $\kappa$ for the polynomial feature selection model (cf. \eqref{eq:poly}) under logistic data. The curves shown are for $r=10$ and three values of $\gamma=1, 2$ and $5$. The rightmost plot shows the asymptotic prediction $\sqrt{\kappa}q^\star$ of Proposition \ref{propo:SVM} for the normalized margin $\sqrt{\kappa}\|\betabh\|_2$ as a function of $\kappa$ in the overparameterized regime. Note that $\sqrt{\kappa}q^*$ decreases monotonically with $\kappa$ and does not reveal the U-shape of the risk curve in the middle plot.} 
\label{fig:exp4a}
\end{figure*}

Under the logistic model \eqref{eq:yi_log}, Figures \ref{fig:poly4str} and \ref{fig:exp4a} depict
the cosine similarity and  excess risk curves of GD as a function of $\kappa$ for the linear and the polynomial model, respectively. Compared to Figure \ref{fig:exp4str}, we only show the theoretical predictions. Note that the linear model is determined by parameters $(\zeta, r)$ and the polynomial model by $(\gamma,r)$. Once these values are fixed, we compute the threshold value $\kappa_\star$. Then, we numerically evaluate the formulae of Proposition \ref{propo:ML} ($\kappa<\kappa_\star$) and Proposition \ref{propo:SVM} ($\kappa>\kappa_\star$).

\begin{figure*}[t]
\begin{center}
\includegraphics[width=0.4\textwidth]{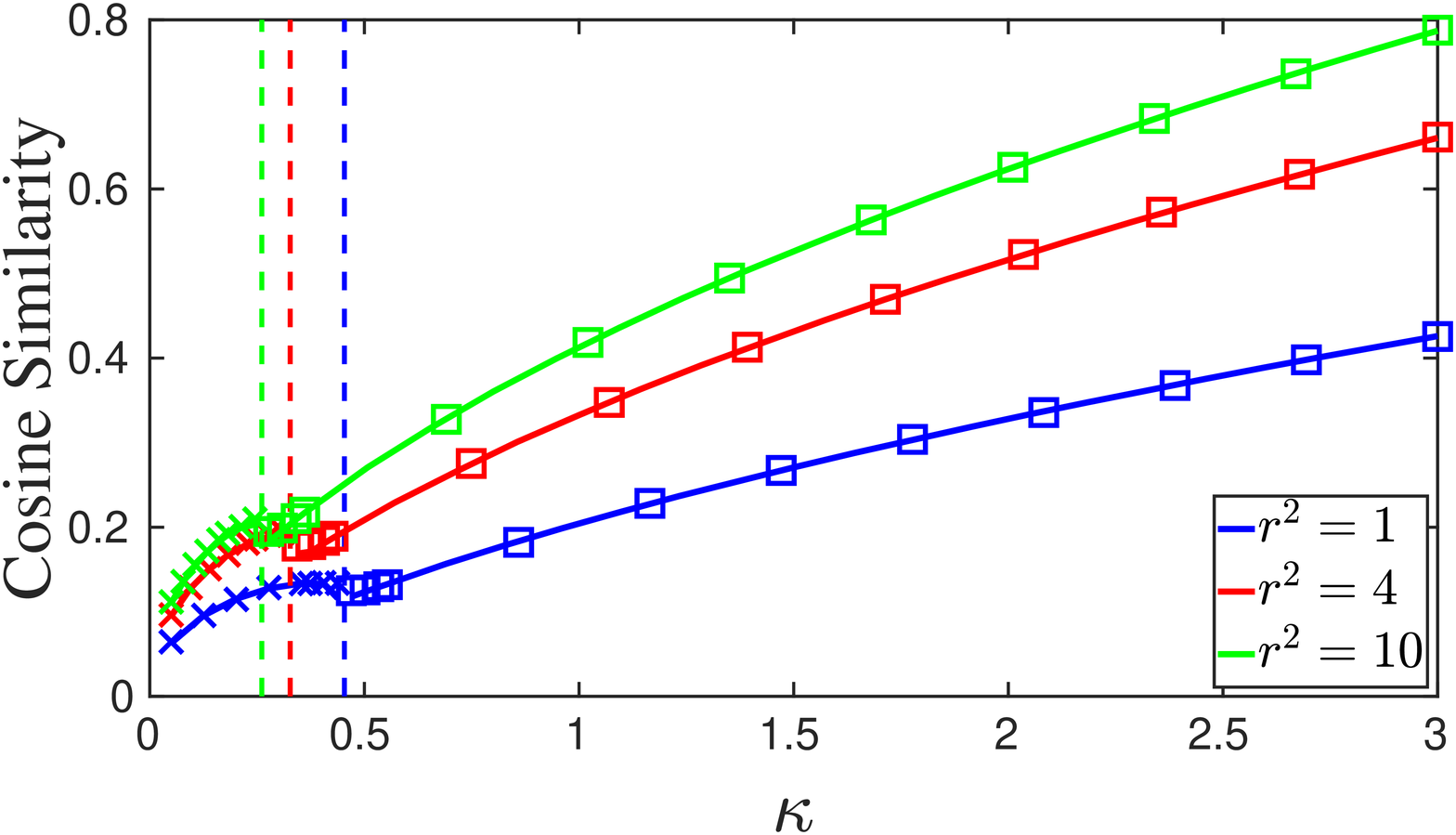}
\includegraphics[width=0.4\textwidth]{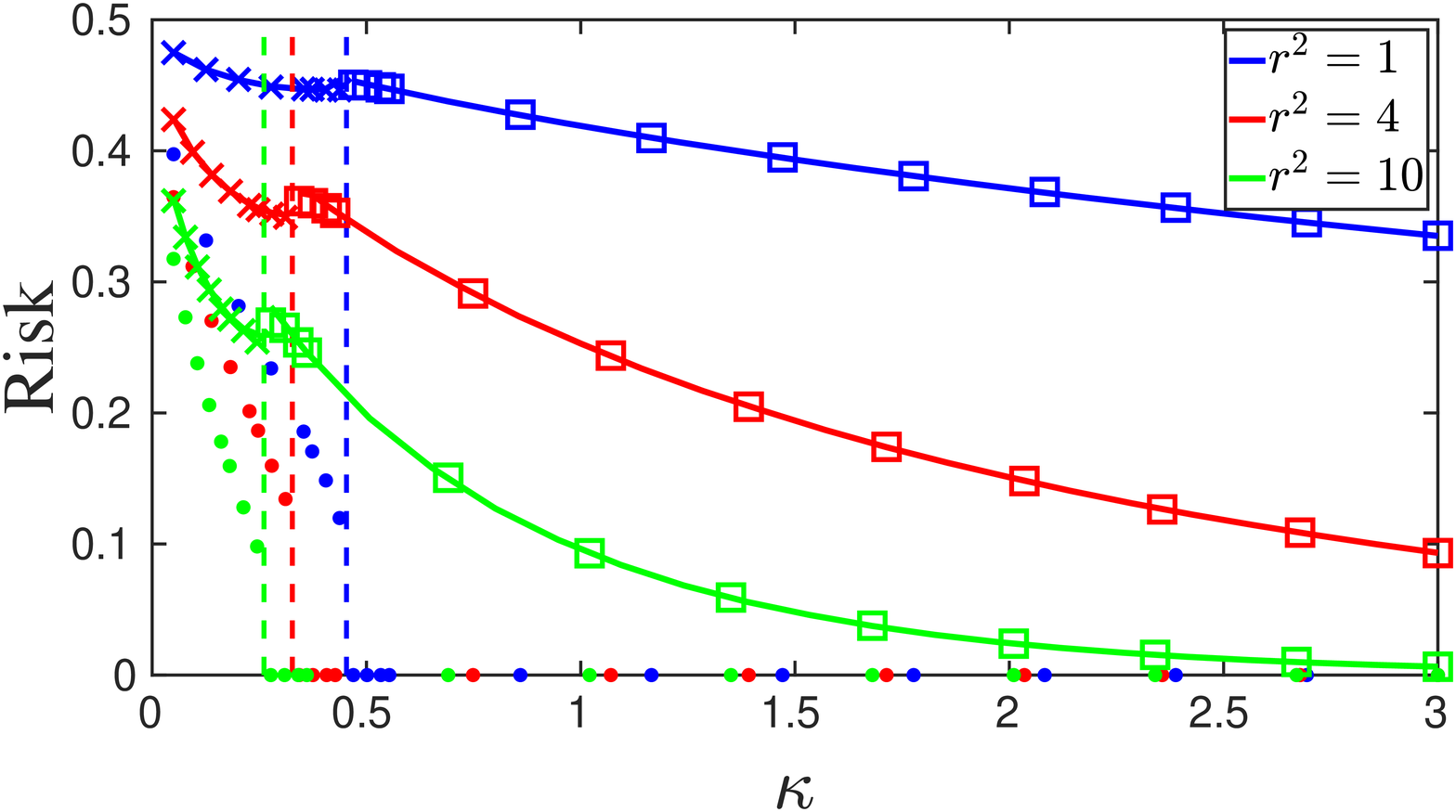}
\end{center}
\caption{Plots of the cosine similarity (left) and of the risk (right) as a function of $\kappa$ for the linear feature selection model (cf. \eqref{eq:uniform})  under data from a Gaussian mixture. The curves shown are for $\zeta=3$  and three values of the total signal strength $r^2 = 1, 4$ and $10$.} 
\label{fig:GM4r}
\end{figure*}

\begin{figure*}[t]
	\begin{center}
		\includegraphics[width=0.4\textwidth]{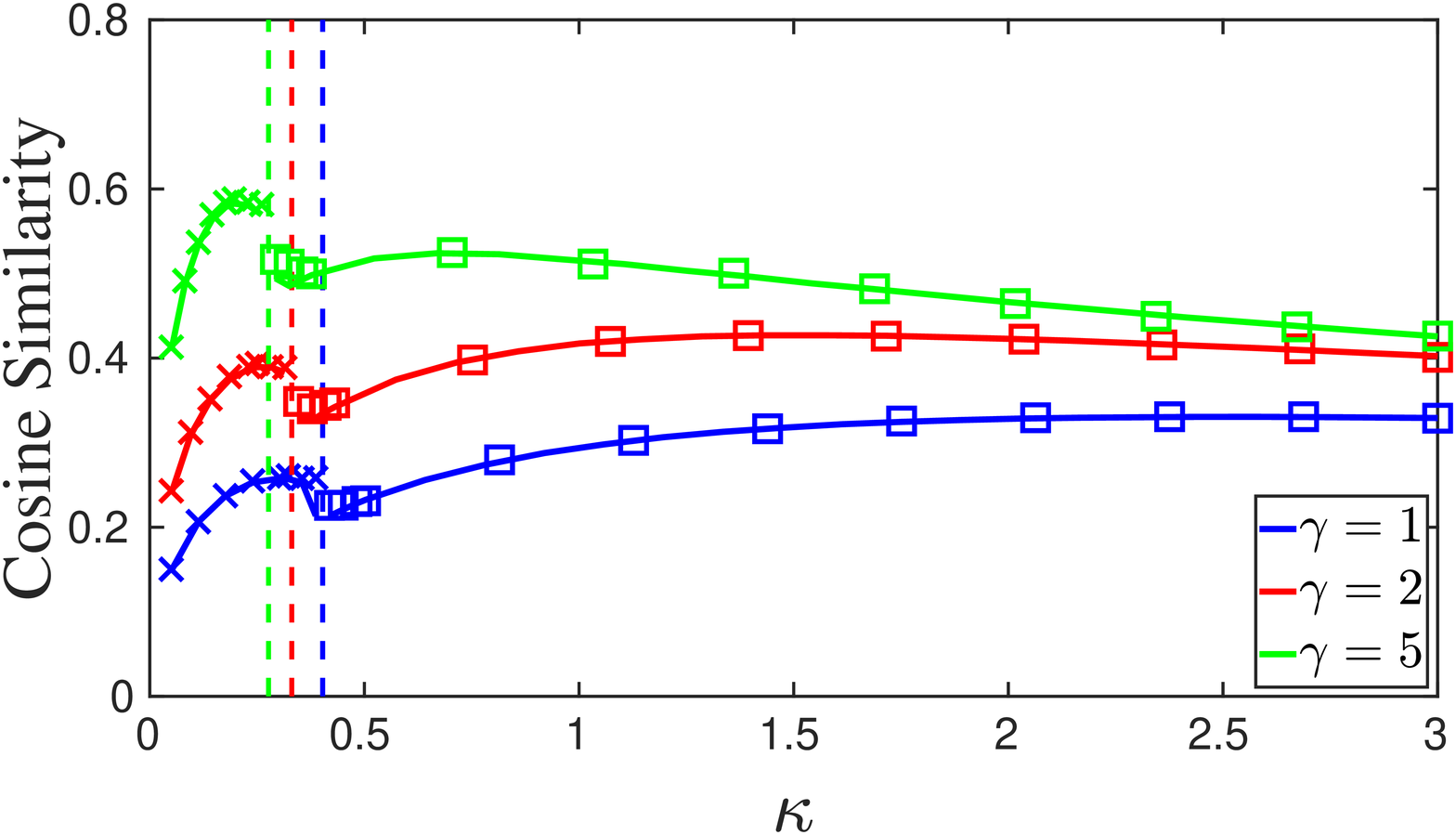}
		\includegraphics[width=0.4\textwidth]{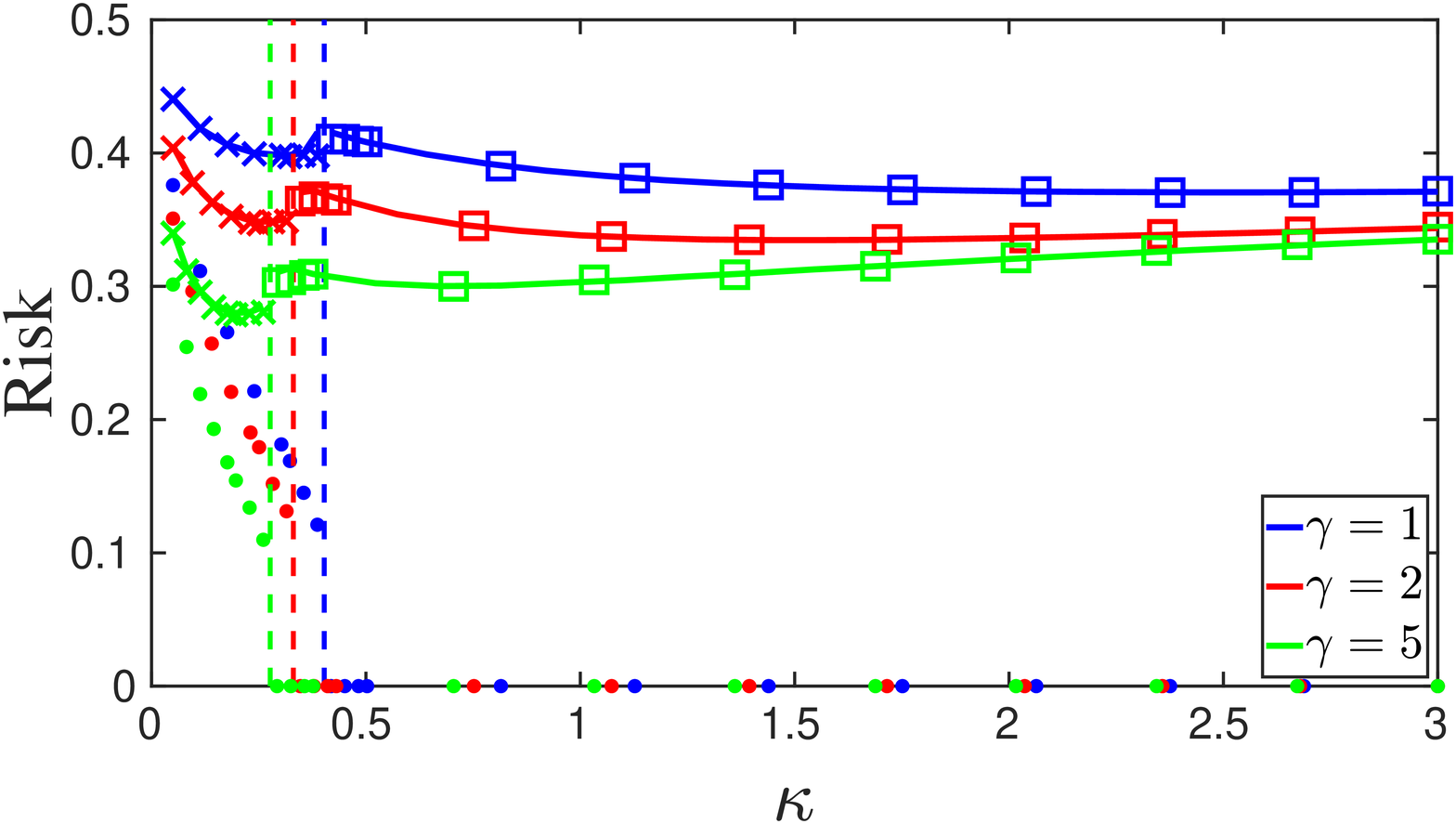}
%
		
	\end{center}
	\caption{Plots of the cosine similarity (left) and of the risk (right) as a function of $\kappa$ for the polynomial feature selection model (cf. \eqref{eq:poly}) under data from a Gaussian mixture. The three curves shown are for total signal strength $r=1$ and three values of the exponent $\gamma = 1, 2$ and $5$ in \eqref{eq:poly}.} 
	\label{fig:GMexpcos4a}
\end{figure*}

Finally, Figures \ref{fig:GM4r} and \ref{fig:GMexpcos4a} show risk and cosine-similarity curves for the Gaussian mixture data model \eqref{eq:yi_GM} with linear and polynomial feature selection rules, respectively. The figures compare simulation results to theoretical predictions similar in nature to Figure \ref{fig:exp4str}, thus validating the accuracy of the latter for the GM model. For the simulations, we generate data according to \eqref{eq:yi_GM} with $\pi_{+1}=1/2$, $n=200$ and $d=600$. The results are averages over $500$ Monte Carlo realizations.


\begin{remark}[Performance at high-SNR: logistic vs GM model] Comparing the curves for $r=2$ in Figures \ref{fig:exp4a} and \ref{fig:GMexpcos4a}, note that the cosine similarity under the GM model takes values (much) larger than those under the logistic model. While the value of $r$ plays the role of SNR in both cases, the two models \eqref{eq:yi_log} and \ref{eq:yi_GM} are rather different. The discrepancy between the observed behavior of the cosine similarity (similarly for test error) can be understood as follows. On the one hand, in the GM model (cf. \eqref{eq:yi_GM}) the features satisfy  $\x_i=y_i\etab_0+\eps_i$, $\eps_i\sim\Nn(0,1)$. Thus, learning the vector $\etab_0$ involves a \emph{linear} regression problem with SNR $r=\|\etab_0\|_2$. On the other hand, in the logistic model (cf. \eqref{eq:yi_log}) at high-SNR $r\gg 1$ it holds $y_i\approx \sign(\x_i^T\etab_0)$. Hence, learning $\etab_0$ involves solving a system of \emph{one-bit} measurements, which is naturally more challenging than linear regression.
\end{remark}

\subsection{Discussion on double descent}\label{sec:disc}

The double-descent behavior of the test error (resp. double-ascent behavior of the cosine similarity) as a function of the model complexity can be clearly observed in all the plots described above.

First, focus on the underparameterized regime  $\kappa<\kappa_\star$ (cf. area on the left of the vertical dashed lines). Here, the number $n$ of training examples is large compared to the size $p$ of the unknown weight vector $\betab_0$, which favors learning $\betab_0$ with better accuracy. However, since only a (small) fraction $p/d$ of the total number $d$ of features are used for training, the observations are rather noisy. This tradeoff manifests itself with a ``U-shaped curve" for $\kappa<\kappa_\star$. 

Next, as the size overparameterization ratio $\kappa$ increases beyond $\kappa_\star$ (cf. area on the right of the vertical dashed lines) the training data become linearly separable. The implicit bias of GD on logistic loss leads to convergence to the max-margin classifier. In all Figures \ref{fig:exp4str}--\ref{fig:GMexpcos4a}, it is clearly seen that the risk curves experience a second descent just after the interpolation threshold $\kappa_\star$. This observation analytically reaffirms similar empirical observations in more complicated learning tasks, architectures and datasets \cite{DDD}. Intuitively, the second descent can be attributed to: (a) the implicit bias of GD and (b) the fact that larger $\kappa$ implies smaller degree of model mismatch in the model of Section \ref{sec:train_model}. In fact, it can be seen that depending on the feature selection model the curve in the overparameterized regime can either be monotonically decreasing (e.g., Figure \ref{fig:poly4str}) or having a U-shape (e.g. Figure \ref{fig:exp4a}). In particular, the polynomial feature selection model \eqref{eq:poly} favors the later behavior and the ``U-shape" is more pronounced for larger values of the parameter $\gamma$ in \eqref{eq:poly}. At this point, it is worth noting that the potential U-shape in the overparameterized regime is \emph{not} predicted by classical bounds on the test error of margin-classifiers in terms of the normalized max-margin ${\sqrt{\kappa}}\big/{\|\hat\betab\|_2}$ \cite[Thm.~26.13]{shalev2014understanding}. This is demonstrated in Figure \ref{fig:exp4a} (right), which shows that the asymptotic limit $\sqrt{\kappa} q^\star$  of the normalized max-margin (cf. Proposition \ref{propo:SVM}) is monotonic in $\kappa$ rather than following a ``U-shape". 


 
Owing to the double-descent behavior, the risk curves have are two \emph{local minima} corresponding to the two regimes of learning. This observation is valid for all different data models depicted in Figures \ref{fig:exp4str}--\ref{fig:GMexpcos4a}. On the other hand, whether the \emph{global minimum} of the curves appears in the overparameterized regime or not, this depends on the nature of the training data. For example, for both the logistic and GM models under the linear feature model in Figures \ref{fig:poly4str} and \ref{fig:GM4r} the global minimum occurs at $\kappa_{\rm opt}>\kappa_\star$   in the interpolating regime for all SNR values. The value of $\kappa_{\rm opt}$ determines the optimal number of features that need to be selected during training to minimize the classification error. 
Note that for $\kappa_{\rm opt}$ the training error is zero, yet the classification performance is best.  However, for the polynomial model depending on the value of $\gamma$ it can happen that $\kappa_{\rm opt}<\kappa_\star$ (cf. Figures \ref{fig:exp4a} and \ref{fig:GMexpcos4a} for $\gamma=5$.).


The previous discussion related to Figures  \ref{fig:exp4str}--\ref{fig:GMexpcos4a} makes clear that important features of double-descent curves, such as the global minimum and the location of cusp of the curves critically depend on the data. The recent paper \cite{kini2020analytic} has extended the present analysis to GD on square-loss. When combined, \cite{kini2020analytic} and our paper demonstrate analytically that double descent behaviors can occur even in simple linear classification models. Moreover, they show analytically that the features of the curves depend both on the data (e.g., logistic vs GM) as well as on the learning algorithm (e.g., square vs logistic loss). These conclusions resemble, and thus might offer insights, to corresponding empirical findings in the literature \cite{DDD,belkin2018reconciling,spigler2019jamming}. We refer the interested reader to \cite{kini2020analytic} for further details on how the choice of loss in \eqref{eq:Remph} impacts the double descent. 

We conclude this section, with an investigation of the dependence of the double-descent curves on the size of the training set. Specifically, we fix $d$ (the dimension of the ambient space) and study double descent for three distinct values of the training-set size, namely $n=\frac{1}{3}d, \frac{2}{3}d,$ and $d$. For these three cases, we plot the test error vs the model size $p=\frac{\kappa}{\zeta} d$ in Figure \ref{fig:overpd} for the linear (Left) and polynomial (Right) feature-selection model.  First, observe that as the training size grows larger (i.e., $\zeta$ decreases), the interpolation threshold shifts to the right (thus, it also increases) under both models. Second, for the linear model,  larger $n$ improves the performance for all model sizes. On the other hand, for the polynomial model, larger $n$ does not necessarily imply that the global minima of the corresponding curve corresponds to better test error. Moreover, the curves cross each other. This implies that more training examples do not necessarily help, and could potentially hurt the classification performance, whether at the underparameterized or the overparameterized regimes. For example, for model size $p\approx 0.4 d$, the test error is lowest for $n=\frac{1}{3}d$ and largest for $n=d$. The effect of the size of the training set on the DD curve was recently studied \emph{empirically} in \cite{DDD}; see also \cite{nakkiran2019more} for an analytical treatment in a linear regression setting. Our investigation is motivated by and theoretically corroborates the observations reported therein. Also please refer to \cite{kini2020analytic} for corresponding results on square-loss. 


\begin{figure}[t]
	\begin{center}
		\includegraphics[width=0.4\textwidth]{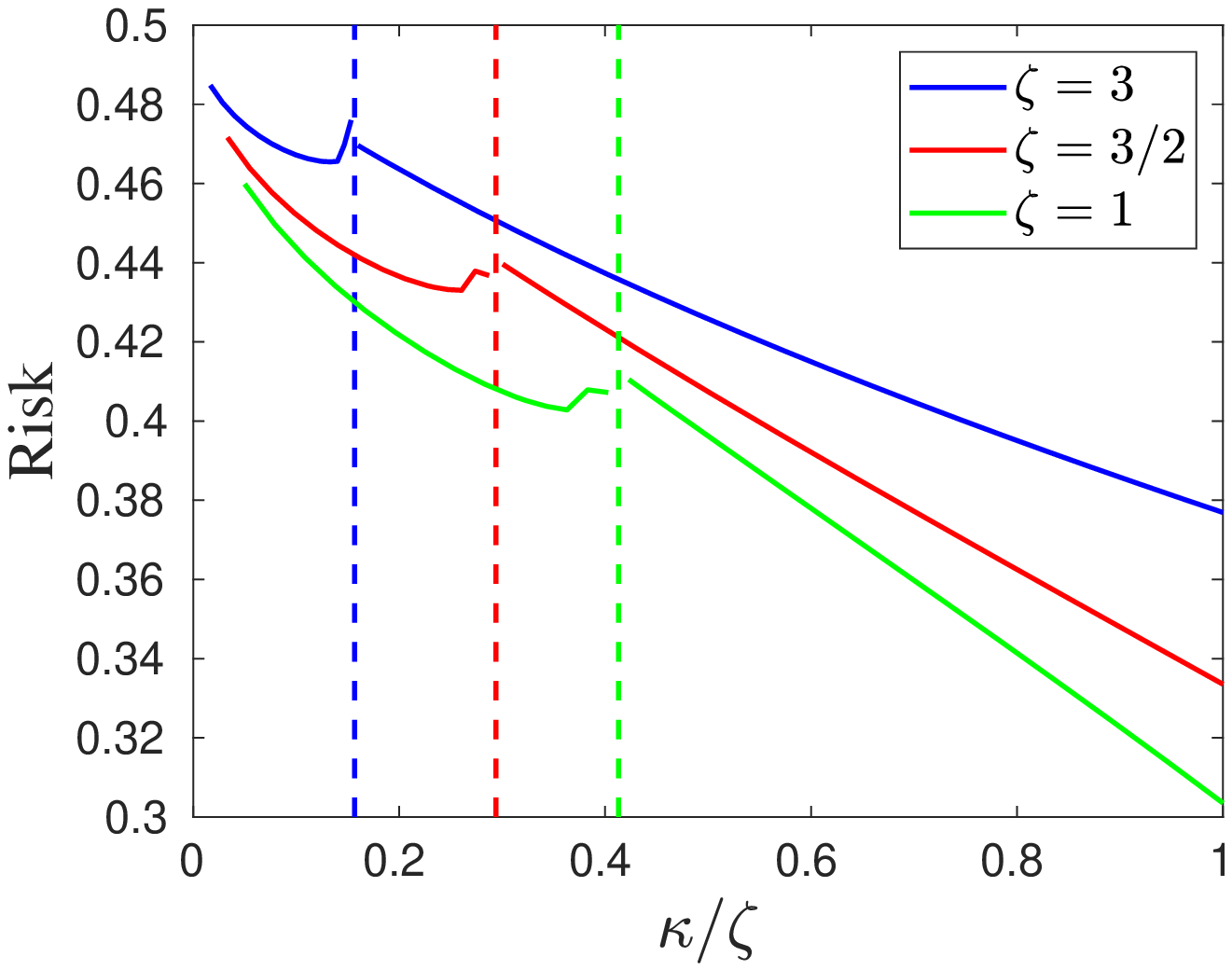}
		\includegraphics[width=0.4\textwidth]{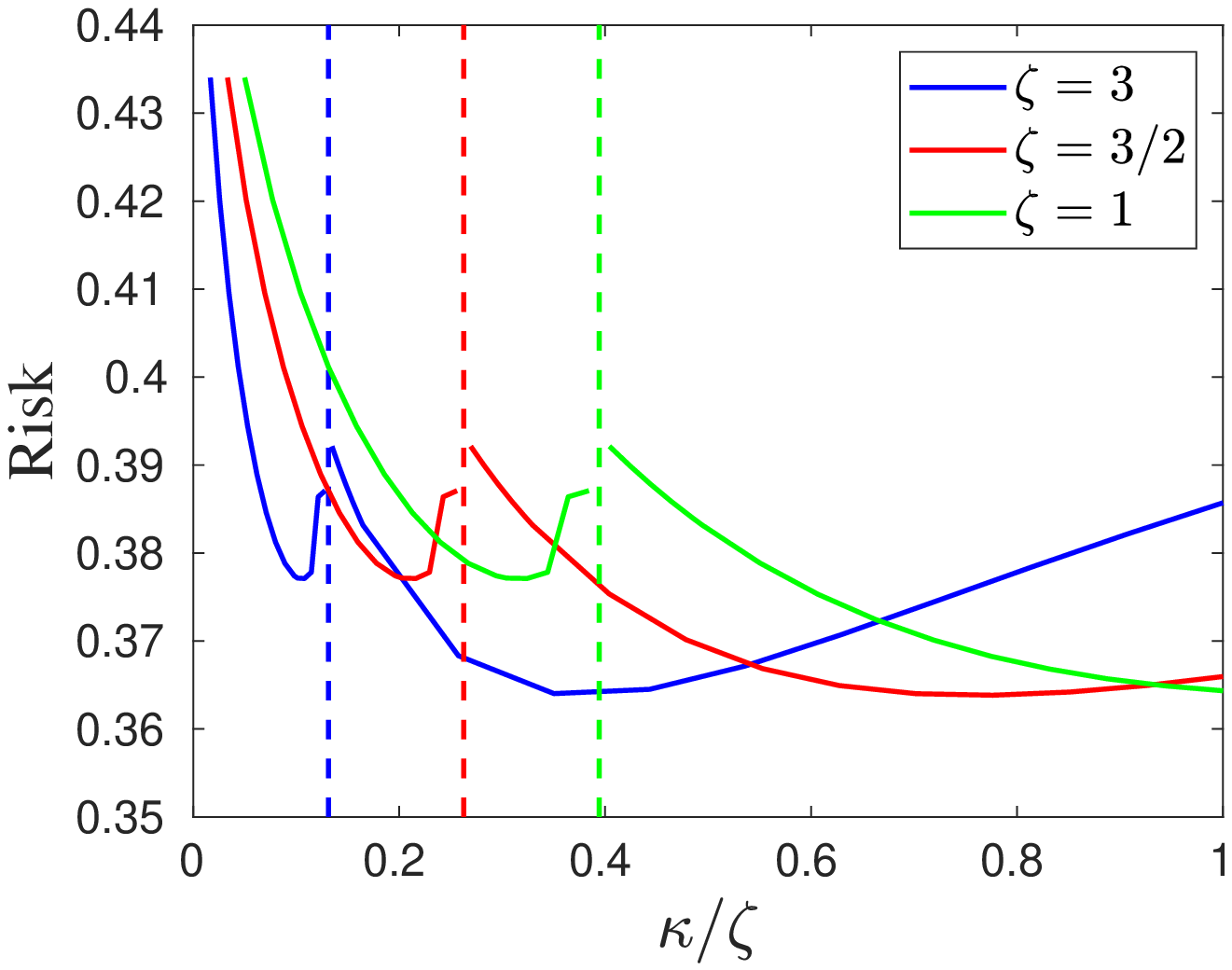}
	\end{center}
	\caption{ Studying the effect of training-set size on the double-descent curves. Smaller values of $\zeta$ correspond to larger training sets (aka larger $n$). The risk is plotted as a function of the model size (aka $p$) normalized by the space dimension $d$. Both plots are for the logistic data model. (Left) Linear feature-selection model for $r=5$. (Right) Polynomial feature-selection model for $r=5$ and $\gamma=2$.
	 }
	\label{fig:overpd}
\end{figure}

\section{Future work}
We studied the classification performance of GD on logistic loss under the logistic and GM models with isotropic Gaussian features. We further used these results to demonstrate double-descent curves in binary linear classification under such simple models. Specifically, the proposed setting is simple enough that it allows a principled analytic study of several important features of double-descent (DD) curves, such as the dependence of the curve's transition threshold and global minimum on:  (i) the learning model and SNR, (ii) the size of the training set and (iii) the loss function (in combination with the results of \cite{kini2020analytic}). 

We believe that this line of work can be extended in several aspects and we briefly discuss a few of them here. To begin with, it is important to better understand the effect of correlation $\Sigma=\E[\x\x^T]$  on the DD curve. The papers \cite{hastie2019surprises,lolas2020regularization} and \cite{montanari2019generalization} derive sharp asymptotics on the performance of min-norm estimators under correlated Gaussian features for linear regression and linear classification, respectively. While this allows to numerically plot DD curves, the structure of $\Sigma$ enters the asymptotic formulae through unwieldy expressions. Thus, understanding how different correlation patterns affect DD to a level that may provide practitioners with useful insights and easy take-home-messages remains a challenge; see \cite{bartlett2019benign} for related efforts. Another important extension is that to multi-class settings. The recent work \cite{DDD} includes a detailed empirical investigation of the dependence of DD on label-noise and data-augmentation. Moreover, the authors identify that ``DD occurs not just as a function of model size, but also as a function of training epochs". It would be enlightening to analytically study whether these type of behaviors are already present in simple models similar to those considered here.  Yet another important task is carrying out the analytic study to more complicated --nonlinear-- data models; see \cite{mei2019generalization,montanari2019generalization} for progress in this direction.

\bibliography{compbib}

\onecolumn
\appendix


\section{Main Analysis Tool}\label{sec:CGMT}

In Section \ref{sec:CGMT_new} we present our main analysis tools: Theorems \ref{th:feasibility} and \ref{th:characterization}, which are extensions of the CGMT and can potentially be used beyond the scope of this paper. For the reader's convenience, we first recall the CGMT in Section \ref{sec:CGMT}.

\subsection{The Convex Gaussian Min-Max Theorem (CGMT)}\label{sec:CGMT}

The CGMT is an extension of Gordon's Gaussian min-max inequality (GMT) \cite{Gor88}. In the context of high-dimensional inference problems, Gordon's inequality was first successfully used in the study oh sharp phase-transitions in noiseless Compressed Sensing \cite{Sto,Cha,TroppEdge,Sto,TroppUniversal}. More recently, \cite{StoLasso} (see also \cite[Sec.~10.3]{TroppEdge}) discovered that Gordon's inequality is essentially tight for certain convex problems. A concrete and general formulation of this idea was given by \cite{COLT} and was called the CGMT. 


In order to summarize the essential ideas, consider the following two Gaussian processes:
\begin{subequations}\label{eq:POAO}
\begin{align}
X_{\w,\ub} &:= \ub^T \G \w + \psi(\w,\ub),\\
Y_{\w,\ub} &:= \norm{\w}_2 \g^T \ub + \norm{\ub}_2 \h^T \w + \psi(\w,\ub),\label{eq:AO_obj}
\end{align}
\end{subequations}
where: $\G\in\mathbb{R}^{n\times d}$, $\g \in \mathbb{R}^n$, $\h\in\mathbb{R}^d$, they all have entries iid Gaussian; the sets $\mathcal{S}_{\w}\subset\R^d$ and $\mathcal{S}_{\ub}\subset\R^n$ are compact; and, $\psi: \mathbb{R}^d\times \mathbb{R}^n \to \mathbb{R}$. For these two processes, define the following (random) min-max optimization programs, which are refered to as the \emph{primary optimization} (PO) problem and the \emph{auxiliary optimization} AO:
\begin{subequations}
\begin{align}\label{eq:PO_loc}
\Phi(\G)&=\min\limits_{\w \in \mathcal{S}_{\w}} \max\limits_{\ub\in\mathcal{S}_{\ub}} X_{\w,\ub},\\
\label{eq:AO_loc}
\phi(\g,\h)&=\min\limits_{\w \in \mathcal{S}_{\w}} \max\limits_{\ub\in\mathcal{S}_{\ub}} Y_{\w,\ub}.
\end{align}
\end{subequations}

According to the first statement of the CGMT\footnote{In fact, this is only a slight reformulation of Gordon's original comparison inequality \cite{Gor88}; see \cite{COLT}}, for any $c\in\R$, it holds:
\begin{equation}\label{eq:gmt}
\mathbb{P}\left( \Phi(\G) < c\right) \leq 2 \mathbb{P}\left(  \phi(\g,\h) < c \right).
\end{equation}
In other words, a high-probability lower bound on the AO is a high-probability lower bound on the PO. The premise is that it is often much simpler to lower bound the AO rather than the PO. 

However, the real power of the CGMT comes in its second statement, which asserts that if the PO is \emph{convex} then the AO in  can be used to tightly infer properties of the original PO, including the optimal cost and the optimal solution.
More precisely, if the sets $\mathcal{S}_{\w}$ and $\mathcal{S}_{\ub}$ are convex and \emph{bounded}, and $\psi$ is continuous \emph{convex-concave} on $\mathcal{S}_{\w}\times \mathcal{S}_{\ub}$, then, for any $\nu \in \mathbb{R}$ and $t>0$, it holds
\begin{equation}\label{eq:cgmt}
\mathbb{P}\left( \abs{\Phi(\G)-\nu} > t\right) \leq 2 \mathbb{P}\left(  \abs{\phi(\g,\h)-\nu} > t \right).
\end{equation}
In words, concentration of the optimal cost of the AO problem around $q^\ast$ implies concentration of the optimal cost of the corresponding PO problem around the same value $q^\ast$.  Asymptotically, if we can show that $\phi(\g,\h)\rP q^\ast$, then we can conclude that $\Phi(\G)\rP q^\ast$. Moreover, starting from \eqref{eq:cgmt} and under appropriate strict convexity conditions, the CGMT shows that concentration of the optimal solution of the AO problem implies concentration of the optimal solution of the PO around the same value. For example, if minimizers of \eqref{eq:AO_loc} satisfy $\norm{\w_\phi(\g,\h)}_2 \rP \alpha^\ast$ for some $\alpha^\ast>0$, then, the same holds true for the minimizers of \eqref{eq:PO_loc}: $\norm{\w_\Phi(\G)}_2 \rP \alpha^\ast$ \cite[Theorem 6.1(iii)]{Master}. Thus, one can analyze the AO to infer corresponding properties of the PO, the premise being of course that the former is simpler to handle than the latter. In \cite{Master}, the authors introduce a principled machinery that allows to (a) express general convex empirical-risk minimization (ERM) problems for noisy linear regression in the form of the PO and (b) simplify the AO from a (random) optimization over vector variables to an easier optimization over only few scalar variables, termed the ``scalarized AO".


\subsection{Novel extensions of the CGMT}\label{sec:CGMT_new}

As mentioned in the previous section, the CGMT applies to optimization problems in which the optimization sets are convex and \emph{compact} sets. While convexity is frequently met in practice, the compactness condition is not always satisfied (this is specifically true for compactness of the ``dual" variable $\ub$ in \eqref{eq:PO_loc}). Some existing ideas allowing to circumvent this problem are developed in \cite{Master} but, they are essentially suitable to high-dimensional linear regression problems and cannot be directly extended to handle any optimization problem. The difficulty becomes even more pronounced in problems like the hard-margin SVM that are not always feasible. 

In this work, we extend the CGMT in two directions. 
First, we show that the behavior of any convex optimization problem in the form of the CGMT can be characterized through that of  \emph{a sequence of} AO problems. Particularly, we  illustrate in Theorem \ref{th:feasibility} 
 how these sequences can help identify whether the underlying optimization problem is feasible or not. When feasibility conditions are met, we show in Theorem \ref{th:characterization} that the precise characterization of the PO boils down  to an analysis of the optimal costs of the sequence of AO problems.
While our motivation stems from the analysis of the  hard-margin SVM, we believe that the generalization of the CGMT presented here can be of independent interest offering a principled way for future statistical performance studies of related optimization-based inference algorithms.

The proofs of Theorems \ref{th:feasibility} and \ref{th:characterization} are presented in Appendix \ref{sec:CGMT_new_proofs}.


\paragraph{Notation.} Before stating our main results, let us first introduce some necessary notation and some useful facts. As previously (cf. \eqref{eq:PO_loc}),
let $\Phi({\bf G})$ be the primary optimization problem:
\begin{equation}
\Phi({\bf G})= \inf_{{\bf w}\in \mathcal{S}_{\bf w}}\sup_{{\bf u}\in \mathcal{S}_{\bf u}} X_{{\bf w},{\bf u}},
	\label{eq:pr}
\end{equation}
where $X_{\w,\ub}$ is defined in \eqref{eq:POAO}. We assume that  $X_{\w,\ub}$  is convex-concave and that the constraint sets $\Sc_\w\subset\Rb^d$ and $\Sc_\ub\subset\Rb^n$ are convex. However, in contrast to the assumption of the CGMT, the sets $\Sc_\w$ and $\Sc_\ub$ are no longer necessarily bounded. As such, for fixed $R$ and $\Gamma$, we consider the following ``$(R,\Gamma)$-bounded" version of \eqref{eq:pr}:
\bea\label{eq:PO_b}
\Phi_{R,\Gamma}({\bf G})=\min_{\substack{{\bf w}\in \mathcal{S}_{\bf w}\\ \|{\bf w}\|_2\leq R}} \max_{\substack{{\bf u}\in\mathcal{S}_{\bf u}\\ \|{\bf u}\|_2\leq \Gamma}} X_{{\bf w},{\bf u}}.
\eea
Clearly, 
\begin{equation}
	\Phi({\bf G})=\inf_{R\geq 0}\min_{\substack{{\bf w}\in\mathcal{S}_{\bf w}\\ \|{\bf w}\|_2\leq R}} \sup_{\Gamma \geq 0} \max_{\substack{{\bf u}\in\mathcal{S}_{\bf u}\\ \|{\bf u}\|_2\leq \Gamma}} X_{{\bf w},{\bf u}}.
	\label{eq:unbounded}
\end{equation}
Since the function $\max_{\substack{{\bf u}\in\mathcal{S}_{\bf u}\\ \|{\bf u}\|\leq \Gamma}} X_{{\bf w},{\bf u}}$ is convex in ${\bf w}$ and concave in $\Gamma$, the order of min-sup in \eqref{eq:unbounded} can be flipped \cite{Sion} leading to the following connection between the PO and its $(R,\Gamma)$-bounded version:
\begin{equation}
\Phi({\bf G})=\inf_{R\geq 0}\sup_{\Gamma \geq 0}\Phi_{R,\Gamma}({\bf G}).
	\label{eq:PO_original}
\end{equation}
We associate with $\Phi_{R,\Gamma}({\bf G})$ the following AO problem:
\begin{equation}
	\phi_{R,\Gamma}({\bf g},{\bf h})= \min_{\substack{{\bf w}\in \mathcal{S}_{\bf w}\\ \|{\bf w}\|\leq R}} \max_{\substack{{\bf u}\in\mathcal{S}_{\bf u}\\ \|{\bf u}\|_2\leq \Gamma}} Y_{{\bf w},{\bf u}}.
	\label{eq:AOpr}
\end{equation}
Note in the definitions of $\Phi_{R,\Gamma}(\G)$ and $\phi_{R,\Gamma}(\g,\h)$ in \eqref{eq:PO_b} and \eqref{eq:AOpr} that the involved optimization problems are over \emph{bounded} convex sets. Thus, the CGMT directly establishes a link between the two. Nevertheless, what is of interest to us is the PO problem $\Phi(\G)$ in \eqref{eq:pr} that optimizes over (possibly) unbounded sets. 

Theorems \ref{th:feasibility} and \ref{th:characterization} below establish a connection between $\Phi(\G)$ and a sequence (over $R$ and $\Gamma$) of the AO problems $\phi_{R,\Gamma}(\g,\h)$. In order to show this, it is convenient to further define $\phi_{R}({\bf g},{\bf h})$ as follows:
\begin{align}
		\phi_{R}({\bf g},{\bf h})&:=\sup_{\Gamma \geq 0} \ \  \phi_{R,\Gamma}({\bf g},{\bf h}) \label{eq:2o}
	\end{align}
Using the fact that $\Gamma\mapsto \phi_{R,\Gamma}(\g,\h)$ is increasing it can be further shown that (see Section \ref{sec:limsupproof}):
\bea\label{eq:2u}
\phi_{R}({\bf g},{\bf h}) = \lim_{\Gamma \rightarrow \infty} \ \  \phi_{R,\Gamma}({\bf g},{\bf h}).
\eea
Finally, starting from \eqref{eq:2o} and applying the min-max inequality \cite[Lem.~36.1]{Roc70} we see that
\bea\label{eq:supmin}
\phi_{R}(\g,\h) \leq \min_{\substack{\w\in\Sc_\w \\ \|\w\|_2\leq R}} \sup_{\ub\in\Sc_\ub} Y_{\w,\ub}.
\eea
Note that equality does not necessarily hold in \eqref{eq:supmin} since $Y_{\w,\ub}$ is not necessarily convex-concave.

As a last remark, we note that $\Phi(\G), \Phi_{R,\Gamma}(\G)$ and $\phi_{R,\Gamma}(\G)$ are all indexed by the dimensions $n$ and $d$. We have not explicitly accounted for this dependence in our notation for simplicity. 

\paragraph{Feasibility.} First, Theorem \ref{th:feasibility} shows how studying feasibility of the sequence of ``$(R,\Gamma)$-bounded" AO problems in \eqref{eq:AOpr} can determine the feasibility of the original PO problem (in which the constraint sets are potentially unbounded).

\begin{thm}[Feasibility]
	\label{th:feasibility}$~$ Recall the definitions of $\Phi(\G), \phi_{R,\Gamma}(\g,\h)$ and $\phi_R(\g,\h)$ in \eqref{eq:pr}, \eqref{eq:AOpr} and \eqref{eq:2u}, respectively. Assume that $(\w,\ub)\mapsto X_{\w,\ub}$ in \eqref{eq:POAO} is convex-concave and that the constraint sets $\Sc_\w,\Sc_\ub$ are convex (but not necessarily bounded). The following two statements hold true.
	
	\begin{enumerate}
		\item[(i)] Assume that { for any fixed $R,\Gamma\geq 0$} there exists a  positive constant $C$ (independent of $R$ and $\Gamma$) and a continuous increasing function $f:\R_+\rightarrow\R$ tending to infinity such that, for $n$ sufficiently large (independent of $R$ and $\Gamma$):
			\begin{equation}
				\phi_{R,\Gamma}({\bf g},{\bf h})\geq C\,f(\Gamma).
				\label{eq:cond_inf}
			\end{equation}
	Then, with probability $1$, for $n$ sufficiently large,
	$			\Phi({\bf G})=\infty$. 
\item[(ii)] Assume that there exists $k_0\in\mathbb{N}$ and a positive constant $C$ such that:
	\begin{equation}
		\mathbb{P}\left[ \left\{\phi_{k_0}({\bf g},{\bf h})\geq C\right\}, \ \text{i.o.}\right]=0.
			\label{eq:bnd_cond}
	\end{equation}
			Then, 
$			\mathbb{P}\left[\Phi({\bf G})=\infty, \ \text{i.o}\right]=0.$
	\end{enumerate}
\end{thm}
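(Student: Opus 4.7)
My plan rests on the identity \eqref{eq:PO_original}, which rewrites the unbounded PO as $\Phi(\G)=\inf_{R\geq 0}\sup_{\Gamma\geq 0}\Phi_{R,\Gamma}(\G)$. This reduces the analysis to the family of $(R,\Gamma)$-bounded minimax problems, to which the standard CGMT (and Gordon's inequality) are directly applicable since the constraint sets $\{\w\in\Sc_\w:\|\w\|\leq R\}$ and $\{\ub\in\Sc_\ub:\|\ub\|\leq\Gamma\}$ are now bounded convex. Each direction of the theorem then amounts to pushing either a lower bound (for (i)) or an upper bound (for (ii)) from the AO value $\phi_{R,\Gamma}(\g,\h)$ to the PO value $\Phi_{R,\Gamma}(\G)$, followed by interchanging the sup/inf in $\Gamma, R$ with the almost-sure statement in $n$. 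The non-trivial ingredient is uniform control in the continuous parameters $(R,\Gamma)$, which I would handle via the monotonicity $\Gamma\mapsto\Phi_{R,\Gamma}(\G)$ (increasing) and $R\mapsto\Phi_{R,\Gamma}(\G)$ (decreasing), together with a countable-union argument along integer grids and a Borel--Cantelli bound.

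For part (i), I would fix arbitrary $R\geq 0$ and $\Gamma\geq 0$ and apply Gordon's one-sided inequality \eqref{eq:gmt} to the bounded pair $(\Phi_{R,\Gamma},\phi_{R,\Gamma})$, giving $\mathbb{P}(\Phi_{R,\Gamma}(\G)<Cf(\Gamma))\leq 2\mathbb{P}(\phi_{R,\Gamma}(\g,\h)<Cf(\Gamma))$. The hypothesis \eqref{eq:cond_inf} forces the right-hand side to vanish for all $n\geq n_0$, hence $\Phi_{R,\Gamma}(\G)\geq Cf(\Gamma)$ almost surely for every such $n$. Since $n_0$ is independent of $(R,\Gamma)$, a countable union over $\Gamma=m\in\mathbb{N}$ (combined with monotonicity of $\Gamma\mapsto\Phi_{R,\Gamma}$) yields $\sup_{\Gamma\geq 0}\Phi_{R,\Gamma}(\G)=+\infty$ eventually almost surely, for every $R\geq 0$. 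Taking the infimum over $R$ and invoking \eqref{eq:PO_original} then gives $\Phi(\G)=+\infty$ for all large $n$, almost surely.

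For part (ii), the starting point is the inequality $\Phi(\G)\leq\Phi_{k_0}(\G)=\sup_{\Gamma\geq 0}\Phi_{k_0,\Gamma}(\G)$, obtained by restricting the outer infimum in \eqref{eq:PO_original} to $R=k_0$. It suffices to show $\sup_\Gamma\Phi_{k_0,\Gamma}(\G)<\infty$ eventually almost surely. The assumption \eqref{eq:bnd_cond} translates to $\phi_{k_0}(\g,\h)<C$ for all large $n$ a.s., and by \eqref{eq:2u} this uniformly bounds $\phi_{k_0,\Gamma}(\g,\h)<C$ for every $\Gamma\geq 0$. To transfer this upper bound to the PO side I would invoke the convex-concave (``upper'') direction of the CGMT \eqref{eq:cgmt}: for each fixed $\Gamma$ and $\epsilon>0$, the quantity $\mathbb{P}(\Phi_{k_0,\Gamma}(\G)>C+\epsilon)$ is controlled by tail probabilities of $\phi_{k_0,\Gamma}(\g,\h)$ around $C$, which are small (and summable in $n$) by the hypothesis. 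Borel--Cantelli then yields $\Phi_{k_0,m}(\G)\leq C+\epsilon$ eventually a.s.\ for each $m\in\mathbb{N}$, and the monotone identity $\sup_\Gamma\Phi_{k_0,\Gamma}=\lim_{m\to\infty}\Phi_{k_0,m}$ completes the argument. The main obstacle is this final interchange of a countable supremum with ``eventually almost surely'': it requires a summable CGMT bound across $n$ and relies crucially on the convex-concave version of the CGMT, since the ``$\phi$ small implies $\Phi$ small'' direction does not follow from the raw Gordon inequality alone.
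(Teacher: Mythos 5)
Your high-level blueprint --- decompose $\Phi(\G)=\inf_R\sup_\Gamma\Phi_{R,\Gamma}(\G)$, apply Gordon/CGMT to the bounded pair $(\Phi_{R,\Gamma},\phi_{R,\Gamma})$, and use monotonicity in $\Gamma$ together with integer grids --- is exactly the right one and matches the paper. However, both parts of your proof have a genuine gap at the same place: you never make the step from the hypothesized ``eventually a.s.'' statement about the AO to the \emph{summable-in-$n$} probability bound that a Borel--Cantelli conclusion requires.

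For part (i), you assert that \eqref{eq:cond_inf} ``forces the right-hand side to vanish for all $n\geq n_0$.'' But \eqref{eq:cond_inf} as verified in Section \ref{sec:Part I} is a statement along the random sequence (``with probability $1$, for $n$ large enough, $\phi_{R,\Gamma}^{(n)}\geq Cf(\Gamma)$''), which is not the same as $\mathbb{P}(\phi_{R,\Gamma}^{(n)}<Cf(\Gamma))=0$ for each fixed $n\geq n_0$. The paper's proof bridges this by observing that the AO bad events $A_n:=\cup_k\cap_m\{\phi_{k,m}(\g,\h)\leq x+\epsilon\}$ are \emph{independent across $n$} (each $n$ uses fresh draws of $\g,\h$), and invoking the \emph{converse} of the Borel--Cantelli lemma to deduce $\sum_n\mathbb{P}(A_n)<\infty$ from $\mathbb{P}(A_n,\text{i.o.})=0$. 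Only then does Gordon's inequality transfer this summability to $\sum_n\mathbb{P}(\Phi^{(n)}\leq x)<\infty$, and forward Borel--Cantelli finishes. Without this independence/converse-BC step your claim that the probability ``vanishes'' is unjustified. For part (ii), you flag the right obstacle yourself (``interchange of a countable supremum with eventually a.s.'') but do not resolve it, and your parenthetical ``(and summable in $n$) by the hypothesis'' again presupposes exactly the summability that the hypothesis \eqref{eq:bnd_cond} does not directly provide.

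The second point is more of a technique note: the paper avoids the $\lim_m$ interchange entirely by manipulating the $\cup_k\cap_m$ and $\lim_m$ events \emph{inside} the probability \emph{before} applying Gordon/CGMT, so that by the end one has a single per-$n$ inequality $\mathbb{P}(\Phi^{(n)}\geq C+1)\leq 2\,\mathbb{P}(\phi_{k_0}^{(n)}\geq C)$ (and similarly $\mathbb{P}(\Phi^{(n)}\leq x)\leq 2\,\mathbb{P}(A_n)$). One application of converse BC, then one application of forward BC, closes the argument. Your order --- apply the comparison per fixed $(R,m)$, run BC per $(R,m)$, then union --- leads directly to the uniformity-in-$n$ problem you correctly identify. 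If you reorder as the paper does, and insert the independence/converse-BC step, the proof goes through.
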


\begin{remark}\label{rem:feas} Condition \eqref{eq:bnd_cond} of Statement (ii) asks for a constant high-probability upper bound of $\phi_{k_0}(\g,\h)$ for some $k_0\in\mathbb{N}$. In view of \eqref{eq:supmin} it suffices to upper bound the problem on the RHS, i.e., to show that:
\bea\label{eq:bnd_cond_easy}
\mathbb{P}\left[ \Big\{ \min_{\substack{\w\in\Sc_\w \\ \|\w\|_2\leq R}} \sup_{\ub\in\Sc_\ub} Y_{\w,\ub} \geq C \Big\}, \ \text{i.o.}\right]=0.
\eea
This observation can be useful as it is often the case that the ``unbounded" optimization over $\ub$ is easy to perform. 
\end{remark}

\paragraph{Optimal cost and optimal solution.} Theorem \ref{th:characterization} below shows that if the sequence of AO problems are all feasible, then their limiting cost determines the optimal cost of the original PO problem. Compared to the CGMT, note that it is not required that $\Sc_\w$ and $\Sc_\ub$ are compact and also the convergence results hold in almost-sure sense.

\begin{thm}
	\label{th:characterization}
	Assume the same notation as in Theorem \ref{th:feasibility}. Further 
	let $\mathcal{S}$ be an  open subset of $\mathcal{S}_{\bf w}$ and $\mathcal{S}^{c}=\mathcal{S}_{\bf w}\backslash \mathcal{S}$. Denote by $\widetilde{\phi}_{R,\Gamma}({\bf g},{\bf h})$ the optimal cost of  \eqref{eq:AOpr} when the minimization over ${\bf w}$ is now further constrained over ${\bf w}\in\mathcal{S}^c$. Define $\widetilde{\phi}_{R}({\bf g},{\bf h})$ in a similar way to \eqref{eq:2o}.
	Assume that there exists $\overline{\phi}$ and $k_0\in\mathbb{N}$ such that  the following statements hold true:
 \begin{subequations}\label{eq:conditions}
\begin{align}
	&\text{For any}~ \eps>0:\quad\mathbb{P}\left[\cup_{k=k_0}^\infty\Big\{ \phi_{k}({\bf g},{\bf h}) \leq \overline{\phi}-\epsilon\Big\}, ~~\text{i.o.}\right]=0,\label{eq:condition1}\\
	&\text{For any}~ \eps>0:\quad\mathbb{P}\left[\Big\{\phi_{k_0}({\bf g},{\bf h}) \geq \overline{\phi}+\epsilon\Big\}, ~~\text{i.o.}\right]=0\label{eq:condition2},\\
	&\text{There exists}~ \zeta>0:\quad\mathbb{P}\left[\cup_{k=k_0}^\infty\Big\{ \widetilde\phi_{k}({\bf g},{\bf h}) \leq \overline{\phi}+\zeta\Big\}, ~~\text{i.o.}\right]=0.\label{eq:condition3}
\end{align}
\end{subequations}
	Then,
	\begin{equation}
	\Phi({\bf G})\ras  \overline{\phi}.
		\label{eq:almost_sure}
	\end{equation}
Furthermore, letting $\w_\Phi$ denote a minimizer of the PO in \eqref{eq:pr}, it also holds that
	\begin{equation}
	\mathbb{P}\left[{\bf w}_{\Phi}\in \mathcal{S}, \ \ \text{for sufficiently large} \  n\right]=1.
		\label{eq:property}
	\end{equation}
\end{thm}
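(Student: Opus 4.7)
The plan is to transfer the almost-sure control of the AO sequence $\phi_k(\g,\h)$ provided by conditions \eqref{eq:condition1}--\eqref{eq:condition3} to the PO cost $\Phi(\G)$ through the identity $\Phi(\G)=\inf_{R\ge 0}\Phi_R(\G)$ from \eqref{eq:PO_original}, with $\Phi_R(\G)=\sup_{\Gamma\ge 0}\Phi_{R,\Gamma}(\G)$. Each $(R,\Gamma)$-bounded subproblem has convex-concave objective on compact convex sets, so the two-sided CGMT \eqref{eq:cgmt} applies to the pair $(\Phi_{R,\Gamma},\phi_{R,\Gamma})$, while Gordon's inequality \eqref{eq:gmt} applies more generally, even without convexity. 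The key preparatory step is to interchange $\sup_\Gamma$ with the CGMT/Gordon tail bound: since $\Phi_{R,\Gamma}$ and $\phi_{R,\Gamma}$ are non-decreasing in $\Gamma$, the events $\{\Phi_{R,\Gamma}\le c\}$ and $\{\phi_{R,\Gamma}\le c\}$ decrease in $\Gamma$, and continuity of probability along decreasing intersections yields
\begin{equation*}
\Pr(\Phi_R(\G)\le c)=\lim_{\Gamma\to\infty}\Pr(\Phi_{R,\Gamma}(\G)\le c)\le 2\lim_{\Gamma\to\infty}\Pr(\phi_{R,\Gamma}(\g,\h)\le c)=2\Pr(\phi_R(\g,\h)\le c),
\end{equation*}
together with the analogous upper-tail statement $\Pr(\Phi_R\ge c)\le 2\Pr(\phi_R\ge c)$ in the convex-concave regime.

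For the upper bound on $\Phi(\G)$, condition \eqref{eq:condition2} gives $\phi_{k_0}(\g,\h)\le \overline{\phi}+\epsilon$ almost surely, eventually in $n$, for every $\epsilon>0$. The interchange identity above transfers this to $\Phi_{k_0}(\G)\le \overline{\phi}+\epsilon$, and since $\Phi(\G)\le \Phi_{k_0}(\G)$ by \eqref{eq:PO_original}, one obtains $\limsup_n \Phi(\G)\le \overline{\phi}$ a.s. For the lower bound, use that $\Phi_k(\G)$ is non-increasing in $k$ with limit $\Phi(\G)$, so that $\{\Phi(\G)\le c\}=\bigcup_k \{\Phi_k(\G)\le c\}$ is an increasing union (and similarly on the AO side for $\phi_k$). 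Consequently,
\begin{equation*}
\Pr(\Phi(\G)\le \overline{\phi}-\epsilon)=\lim_{k\to\infty}\Pr(\Phi_k(\G)\le \overline{\phi}-\epsilon)\le 2\lim_{k\to\infty}\Pr(\phi_k(\g,\h)\le \overline{\phi}-\epsilon),
\end{equation*}
and condition \eqref{eq:condition1} forces the right-hand side to vanish in the ``not i.o." sense, giving $\liminf_n \Phi(\G)\ge \overline{\phi}$ a.s. Combining the two bounds yields \eqref{eq:almost_sure}.

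For the localization \eqref{eq:property}, apply the same machinery to the restricted AO $\widetilde{\phi}_{R,\Gamma}$, i.e., with $\w\in\Sc^c\cap\{\|\w\|\le R\}$; Gordon's inequality still applies (no convexity of $\Sc^c$ is needed for the one-sided bound), and condition \eqref{eq:condition3} with strict gap $\zeta>0$ yields $\liminf_n \widetilde{\Phi}(\G)\ge \overline{\phi}+\zeta/2$ a.s., where $\widetilde{\Phi}(\G):=\inf_{\w\in\Sc^c}\sup_{\ub}X_{\w,\ub}$. If the minimizer $\w_\Phi$ lay in $\Sc^c$ for a subsequence of $n$'s, then $\widetilde{\Phi}(\G)\le \sup_\ub X_{\w_\Phi,\ub}=\Phi(\G)$ along that subsequence, and the a.s.\ limit $\Phi(\G)\to \overline{\phi}$ would contradict the strict gap; hence $\w_\Phi\in\Sc$ almost surely, eventually.

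The main obstacle lies in two places. First, the dual set $\Sc_\ub$ (and the primal set $\Sc_\w$) is unbounded, so the CGMT cannot be invoked directly; the commutation of $\sup_\Gamma$ and $\inf_R$ with the Gordon/CGMT probability bounds is driven by the monotonicity-in-$(R,\Gamma)$ structure together with continuity of probability along monotone sequences of events, as sketched above. Second, the CGMT natively supplies only convergence in probability, whereas the theorem demands almost-sure convergence: this gap is closed using the Lipschitz dependence of $\Phi_{R,\Gamma}$ on the Gaussian matrix $\G$, which by Gaussian isoperimetric concentration gives exponentially small tails and, via Borel--Cantelli, upgrades the ``not i.o." statements on the AO side of conditions \eqref{eq:condition1}--\eqref{eq:condition3} to the corresponding ``not i.o." conclusions on the PO side required for \eqref{eq:almost_sure} and \eqref{eq:property}.
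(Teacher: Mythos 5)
Your overall architecture is right and matches the paper's: pass from the full PO $\Phi(\G)$ to the $(R,\Gamma)$-truncated $\Phi_{R,\Gamma}(\G)$ via $\Phi(\G)=\inf_R\sup_\Gamma\Phi_{R,\Gamma}(\G)$, apply Gordon/CGMT to each truncated pair, and commute the $\sup_\Gamma$ and $\inf_R$ (i.e., $\lim_m$, $\lim_k$) through the probability inequalities by exploiting monotonicity of $\Phi_{R,\Gamma}$ and $\phi_{R,\Gamma}$ in each index and continuity of probability along monotone unions/intersections. Your treatment of the lower bound via $\{\Phi\le\overline\phi-2\eps\}\subseteq\cup_k\{\Phi_k\le\overline\phi-\eps\}$, of the upper bound via $\Phi\le\Phi_{k_0}$, and of the localization via a strict gap between $\widetilde\Phi(\G)$ and $\Phi(\G)$ all mirror what the paper does, and you correctly note that Gordon (one-sided) suffices for the restricted, possibly non-convex set $\Sc^c$.

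The genuine gap is in your last paragraph, where you attribute the ``in probability $\to$ almost sure" upgrade to Lipschitz dependence of $\Phi_{R,\Gamma}$ on $\G$ and Gaussian isoperimetry. That mechanism is both unnecessary and, as stated, does not close the gap: the Lipschitz constant of $\G\mapsto\Phi_{R,\Gamma}(\G)$ scales like $R\Gamma$ and therefore blows up as you send $\Gamma\to\infty$, and even for fixed $(R,\Gamma)$ concentration only controls deviations from $\E\Phi_{R,\Gamma}$, not from $\overline\phi$. The paper's actual mechanism is much simpler and is precisely what is needed to turn conditions \eqref{eq:condition1}--\eqref{eq:condition3} into a summable bound on the PO side: the AO events $A_n$ appearing in the hypotheses are generated by independent draws of $(\g,\h)$ across $n$, so by the \emph{converse} of the second Borel--Cantelli lemma, $\Pr[A_n,\text{ i.o.}]=0$ forces $\sum_n\Pr[A_n]<\infty$. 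Gordon/CGMT bounds $\Pr[\,\cdot\,\text{ bad PO event at }n\,]\le 2\Pr[A_n]$, so the PO probabilities are summable as well, and the \emph{forward} Borel--Cantelli lemma then gives the ``not i.o." conclusions \eqref{eq:almost_sure} and \eqref{eq:property}. Without this independence/converse-Borel--Cantelli step, the passage from ``the RHS does not occur i.o." to ``the LHS does not occur i.o." in your sketch is unjustified, since forward Borel--Cantelli alone only runs from summability to not-i.o.\ and not conversely.
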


\vp
While the constraint set $\Sc_\ub$ is often unbounded, it is common that the constraint set $\Sc_\w$ can be assumed to be bounded. When this is the case, the conditions of Theorem \ref{th:characterization} can be simplified as shown in the corollary below, which is an immediate consequence of Theorem \ref{th:characterization}.

\begin{cor}
	\label{cor:characterization}
	Consider the same setting as in Theorem \ref{th:characterization} and further assume that there exists $k_0\in\mathbb{N}$ such that $\mathcal{S}_w\subset \left\{{\bf w} \  | \ \|{\bf w}\|_2\leq k_0 \right\}$. 
	Assume that there exists $\overline{\phi}$ such that  the following statements hold true:
 \begin{subequations}\label{eq:conditions1}
\begin{align}
	&\text{For any}~ \eps>0:\quad\mathbb{P}\left[\Big\{\phi_{k_0}({\bf g},{\bf h}) \leq \overline{\phi}-\epsilon\Big\}, ~~\text{i.o.}\right]=0,\label{eq:condition1_1}\\
	&\text{For any}~ \eps>0:\quad\mathbb{P}\left[\Big\{\phi_{k_0}({\bf g},{\bf h}) \geq \overline{\phi}+\epsilon\Big\}, ~~\text{i.o.}\right]=0\label{eq:condition2_11},\\
	&\text{There exists}~ \zeta>0:\quad\mathbb{P}\left[\Big\{ \widetilde\phi_{k_0}({\bf g},{\bf h}) \leq \overline{\phi}+\zeta\Big\}, ~~\text{i.o.}\right]=0.\label{eq:condition3_1}
\end{align}
\end{subequations}
	Then,
	\begin{equation}
	\Phi({\bf G})\ras  \overline{\phi}.
		\label{eq:almost_sure_1}
	\end{equation}
Furthermore, letting $\w_\Phi$ denote a minimizer in \eqref{eq:pr}, it also holds that
	\begin{equation}
	\mathbb{P}\left[{\bf w}_{\Phi}\in \mathcal{S}, \ \ \text{for sufficiently large} \  n\right]=1.
		\label{eq:property_1}
	\end{equation}
 
\end{cor}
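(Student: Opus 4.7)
The plan is to derive Corollary \ref{cor:characterization} as a direct consequence of Theorem \ref{th:characterization} by observing that when $\mathcal{S}_{\w}$ is itself contained in a ball of radius $k_0$, the sequence of bounded AO problems indexed by $k\geq k_0$ is \emph{constant}. Concretely, for any $k\geq k_0$ and any $\Gamma\geq 0$, the constraint $\|\w\|_2\leq k$ is inactive given $\w\in\mathcal{S}_{\w}$, so the AO problem in \eqref{eq:AOpr} with radius $k$ coincides with the one with radius $k_0$. Hence
\begin{equation*}
\phi_{k,\Gamma}(\g,\h)=\phi_{k_0,\Gamma}(\g,\h)\qquad\text{for all }k\geq k_0,\ \Gamma\geq 0,
\end{equation*}
and taking the supremum over $\Gamma$ in the definition \eqref{eq:2o} yields $\phi_k(\g,\h)=\phi_{k_0}(\g,\h)$ for every $k\geq k_0$. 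The same reasoning, applied to the constrained optimization over $\mathcal{S}^c$, gives $\widetilde{\phi}_k(\g,\h)=\widetilde{\phi}_{k_0}(\g,\h)$ for all $k\geq k_0$.

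With this observation, the countable unions appearing in the hypotheses \eqref{eq:condition1} and \eqref{eq:condition3} of Theorem \ref{th:characterization} collapse to single events:
\begin{equation*}
\bigcup_{k=k_0}^{\infty}\{\phi_k(\g,\h)\leq \overline{\phi}-\epsilon\}=\{\phi_{k_0}(\g,\h)\leq\overline{\phi}-\epsilon\},
\end{equation*}
and analogously for $\widetilde{\phi}_k$. Consequently, the three assumptions \eqref{eq:condition1_1}--\eqref{eq:condition3_1} of the corollary are \emph{equivalent} to the three assumptions \eqref{eq:condition1}--\eqref{eq:condition3} of Theorem \ref{th:characterization}, with the same value of $\overline{\phi}$ and the same $k_0$. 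In particular, the hypotheses of the theorem are satisfied.

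Invoking Theorem \ref{th:characterization} then yields both conclusions of the corollary: $\Phi(\G)\ras\overline{\phi}$ as in \eqref{eq:almost_sure_1}, and the event $\{\w_{\Phi}\in\mathcal{S} \text{ for all sufficiently large } n\}$ has probability one, as in \eqref{eq:property_1}. No further argument is needed beyond confirming the collapsing-union observation above. The only minor point worth double-checking is the monotonicity $\phi_{k,\Gamma}\leq \phi_{k+1,\Gamma}$ used implicitly to justify that enlarging the feasible ball beyond $k_0$ does not change the optimum in our bounded-$\mathcal{S}_{\w}$ setting; this is immediate since we are taking an infimum of the same objective over an identical feasible region once $k\geq k_0$. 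Thus the corollary follows with essentially no additional work beyond Theorem \ref{th:characterization} itself.
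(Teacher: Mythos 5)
Your argument is correct and is exactly the route the paper intends (the paper calls the corollary ``an immediate consequence'' of Theorem~\ref{th:characterization}): once $\mathcal{S}_{\w}\subset\{\w : \|\w\|_2\le k_0\}$, the ball constraint $\|\w\|_2\le k$ is vacuous for all $k\ge k_0$, so $\phi_{k,\Gamma}=\phi_{k_0,\Gamma}$ and $\widetilde\phi_{k,\Gamma}=\widetilde\phi_{k_0,\Gamma}$ for every $\Gamma$, hence $\phi_k=\phi_{k_0}$ and $\widetilde\phi_k=\widetilde\phi_{k_0}$, the unions in \eqref{eq:condition1} and \eqref{eq:condition3} collapse, and Theorem~\ref{th:characterization} applies verbatim. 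One small slip: you invoke ``monotonicity $\phi_{k,\Gamma}\le\phi_{k+1,\Gamma}$'' in passing; the monotonicity in the radius $k$ of a minimization actually runs the other way ($\phi_{k,\Gamma}\ge\phi_{k+1,\Gamma}$, since enlarging the feasible set can only decrease the minimum), but this is moot here because the feasible set is literally the same for all $k\ge k_0$, so the sequence is constant rather than merely monotone.
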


\begin{remark}[A  meta-theorem: connection to the ``unbounded" AO]\label{rem:minsup} Similar to Remark \ref{rem:feas},  it suffices for  \eqref{eq:condition2_11} to hold that the following (often easier to check) condition is true:
\begin{align}
\mathbb{P}\Big[ \Big\{ \min_{\substack{\w\in\Sc_\w \\ \|\w\|_2\leq k_0}} \sup_{\ub\in\Sc_\ub} Y_{\w,\ub} \geq \overline{\phi} + \eps \Big\}, \ \text{i.o.}\Big]=0. \label{eq:minsupAO}
\end{align}
	Further as noted in Remark \ref{rem:feas} this formulation can be useful as it is often the case that the ``unbounded" optimization over $\ub$ in \eqref{eq:minsupAO} is easy to perform. What allows us to replace \label{eq:condition2_1} with \eqref{eq:minsupAO} is the min-max inequality in \eqref{eq:supmin}. If equality were true when ${\bf w}$ is constrained to either $\Sc_w$ or the (possibly) non-convex set $\mathcal{S}^c:=\mathcal{S}_{\bf w}\backslash \mathcal{S}$,   
	then, \eqref{eq:condition1_1} and \eqref{eq:condition3_1} would be equivalent to the following:
	\begin{subequations}\label{eq:minsupAO2}
\begin{align}
&	\mathbb{P}\Big[ \Big\{ \min_{\substack{\w\in{\mathcal{S}_\w} \\ \|\w\|_2\leq k_0}} \sup_{\ub\in\Sc_\ub} Y_{\w,\ub} \leq \overline{\phi} + \eps \Big\}, \ \text{i.o.}\Big]=0, \label{eq:unbounded_AO_rel}\\
&		\mathbb{P}\Big[ \Big\{ \min_{\substack{\w\in\Sc^c \\ \|\w\|_2\leq k_0}} \sup_{\ub\in\Sc_\ub} Y_{\w,\ub} \leq \overline{\phi} + \zeta \Big\}, \ \text{i.o.}\Big]=0. \label{eq:unbounded_AO_rel2}
\end{align}
\end{subequations}
Similar to \eqref{eq:minsupAO}, this formulation is often preferred in practice as it relates to the ``easier" unbounded AO. It turns out that a sufficient condition for the equivalence described above is that
the objective function $Y_{\w,\ub}$ of the AO is convex in $\w$. To see this, note that under this convexity condition, $$\mathcal{G}_\Gamma(\w):= \displaystyle{\max_{{{\bf u}\in\mathcal{S}_{\bf u},\, \|{\bf u}\|_2\leq \Gamma}}} Y_{\w,\ub},$$ is convex in ${\bf w}$. Moreover, $\Gamma\mapsto\mathcal{G}_\Gamma(\w)$ converges (point-wise in $\w$) to $\mathcal{G}(\w):= {\sup_{\substack{{\bf u}\in\mathcal{S}_{\bf u}}}} Y_{\w,\ub}$. But point-wise convergence of convex functions implies uniform convergence over compact sets \cite[Thm.~10.8]{Roc70}, i.e., $\mathcal{G}_\Gamma$ converges \emph{uniformly}  to $\mathcal{G}$ over the bounded sets  $\tilde{\mathcal{S}}\cap \left\{\w \ \ | \ \ \|\w\|_2\leq k_0\right\}$, where we denote by $\tilde{\mathcal{S}}$ either of the two sets $\Sc_\w$ or $\Sc^c$ in \eqref{eq:minsupAO2}. Based on this and recalling \eqref{eq:2u}, it follows that 
$$
\phi_{k_0}(\g,\h)= \lim_{\Gamma\to \infty} \min_{\substack{\w\in\tilde{\mathcal{S}} \\ \|\w\|_2\leq k_0}} \max_{\substack{\ub\in\Sc_\ub\\ \|{\bf u}\|_2\leq \Gamma}} Y_{\w,\ub}\,
$$
 is equal to 
 $$
 \min_{\substack{\w\in\tilde{\mathcal{S}} \\ \|\w\|_2\leq k_0}} \sup_{{\ub\in\Sc_\ub}} Y_{\w,\ub}\,,
 $$
provided that $Y_{\w,\ub}$ is convex. 
	Unfortunately, the form of the AO in \eqref{eq:AO_obj} is \emph{not} convex in $\w$ as is.  
	Nevertheless, the analysis of the AO (as prescribed in \cite{Master}) often leads to an equivalent ``scalarized version" (aka optimization over only a few scalar variables) which possesses the desired convexity property. From the discussion above, one may then use \eqref{eq:minsupAO2} towards applying Corollary \ref{cor:characterization} to the convex scalarized AO. 
\end{remark}

\begin{remark}[On applying Corollary \ref{cor:characterization}]\label{rem:recipeCor}
In practice Corollary \ref{cor:characterization} can be used in conjunction with Lemma \ref{lemma:5} in Appendix \ref{app:tech} to handle the unboundedness of the set $\mathcal{S}_{{\bf w}}$. At a high level, the recipe is as follows. Start with the following ``single-bounded version" of the PO
	\begin{equation}
	\Phi_{R}({\bf G}):=\min_{\substack{{\bf w}\in\mathcal{S}_{\bf w}\\ \|{\bf w}\|_2\leq R}} \sup_{{\bf u}\in\mathcal{S}_{\bf u}} X_{{\bf w},{\bf u}},
		\label{eq:single_PO}
	\end{equation}
	which allows the use of Corollary \ref{cor:characterization}. Next, suppose that applying the corollary allows to show that any  minimizer $\hat{\bf w}_R$ of \eqref{eq:single_PO} satisfies $\|{\bf w}_R\|\ras \theta^\star$ for some $\theta^\star<R$ (and independent of $R$). Then, by Lemma \ref{lemma:5}, it follows that any solution $\hat{\bf w}_\Phi$ of \eqref{eq:pr} must also satisfy $\|\hat{\bf w}_\Phi\|\ras \theta^\star$. Hence, this validates the equivalence of \eqref{eq:single_PO} for sufficiently large $R$ (e.g., any $R\geq2\theta^\star$) to the unbounded PO in \eqref{eq:pr}. We apply this recipe in Appendix \ref{sec:proof_ML}, which studies the statistical properties of logistic-loss minimization in the underparameterized regime  $\kappa<\kappa_\star$. 
\end{remark}

\section{Proofs for Appendix \ref{sec:CGMT_new}}\label{sec:CGMT_new_proofs}

\subsection{Proof of \eqref{eq:2o} = \eqref{eq:2u}}\label{sec:limsupproof}

We prove \eqref{eq:2u} using the definition in \eqref{eq:2o}. For convenience, we drop here the arguments $(\g,\h)$ to lighten notation. We need to consider separately the cases $\phi_R=\infty$ and $\phi_R\neq \infty$. If $\phi_{{R}}=\infty$, then necessarily $\lim_{\Gamma\to\infty} \phi_{R,\Gamma}=\infty$, since otherwise, $\phi_{{R},\Gamma}$ would be bounded and so would be $\sup_{\Gamma\geq 0} \phi_{R,\Gamma}$, contradicting the fact that $\phi_{R}=\infty$. On the other hand, if $\phi_R\neq \infty$, from the ``$\epsilon$-definition'' of the supremum, for $\tilde{\epsilon}>0$, there exists $\Gamma_0\geq 0$ such that $\phi_R\leq \phi_{R,{\Gamma_0}}+\tilde{\epsilon}$. As $\Gamma\mapsto \phi_{R,\Gamma}$ is increasing, for any $\Gamma\geq \Gamma_0$,
$$
\phi_{R,\Gamma}\leq \phi_{R}\leq \phi_{R,\Gamma_0}+\tilde{\epsilon}\leq \phi_{R,\Gamma}+\tilde{\epsilon}.
$$
Letting $\Gamma \to\infty$ in the left and right-hand sides of the above inequality, we obtain:
$$
\lim_{\Gamma\to\infty}\phi_{R,\Gamma}\leq \phi_{R}\leq \lim_{\Gamma\to\infty}\phi_{R,\Gamma}+\tilde{\epsilon}.
$$
As $\tilde{\epsilon}$ may be chosen arbitrarily small, we conclude that $\phi_R=\lim_{\Gamma\to\infty} \phi_{R,\Gamma}$, as desired. 

\vp
For the rest of the proofs, it is also convenient to note that, due to the convex-concave property of the objective in the PO,   the  single-bounded version of the PO introduced in \eqref{eq:single_PO} writes also as follows:
	\begin{align}
		\Phi_R({\bf G})&:=\sup_{\Gamma\geq 0} \ \  \Phi_{R,\Gamma}({\bf G}). \label{eq:1o}
	\end{align}
Similar to what was shown above, we can equivalently write:
\begin{align}
	\Phi_R({\bf G})&=\lim_{\Gamma\to\infty}\min_{\substack{{\bf w}\in\mathcal{S}_{\bf w}\\ \|{\bf w}\|_2\leq R}}  \max_{\substack{{\bf u}\in\mathcal{S}_{\bf u}\\ \|{\bf u}\|_2\leq \Gamma}} X_{{\bf w},{\bf u}}\label{eq:1u}\,.
\end{align}

\subsection{Proof of Theorem \ref{th:feasibility}}
	\subsubsection{Proof of the statement (i)}  
	From \eqref{eq:PO_original} and the ``$\epsilon$-definition" of the infimum, if $\Phi({\bf G})\neq \infty$, for $\epsilon>0$ sufficiently small, there exists $k\in\mathbb{N}$ such that:
	$$
	\Phi({\bf G})\geq \Phi_{k,m}({\bf G})-\epsilon,  \ \ \forall \ \ m\in\mathbb{N}.
	$$
	Hence, for any fixed $x>0$,
	\bea
	\mathbb{P}\left[\left\{\Phi({\bf G})\leq x\right\}\right]\leq \mathbb{P}\left[\cup_{k=1}^\infty\cap_{m=1}^\infty \left\{\Phi_{k,m}({\bf G})\leq x+\epsilon\right\}\right],\nn
	\eea
	For $k\in\mathbb{N}^\star$, the events $\mathcal{E}_k=\left\{\cap_{m=1}^\infty\left\{\Phi_{k,m}({\bf G})\leq x+\epsilon\right\}\right\}$ form an increasing sequence of events, hence, 
	$$
	\mathbb{P}\left[\cup_{k=1}^\infty \mathcal{E}_k \right]=\lim_{k\to\infty} \mathbb{P}(\mathcal{E}_k).
	$$
	In a similar way, $\Phi_{k,m}({\bf G})\geq \Phi_{k,m-1}({\bf G})$, hence the sequence of events:
	$$
	\mathcal{E}_{k,m}=\left\{\Phi_{k,m}({\bf G})\leq x+\epsilon\right\},
	$$
	is decreasing, thus yielding,
	$$
	\mathbb{P}\left[\cap_{m=1}^\infty \mathcal{E}_{k,m}\right]=\lim_{m\to\infty} \mathbb{P}(\mathcal{E}_{k,m})\,.
	$$
	Hence, we have:
	$$
	\mathbb{P}\left[\left\{\Phi({\bf G})\leq x\right\}\right]\leq \lim_{k\to\infty}\lim_{m\to\infty} \mathbb{P}\left[\Phi_{k,m}({\bf G})\leq x+\epsilon\right]\,.
	$$
	We may now use Gordon's inequality \eqref{eq:gmt}
	$$
\mathbb{P}\left[\Phi_{k,m}({\bf G})\leq x+\epsilon\right]\leq 2\mathbb{P}\left[\phi_{k,m}({\bf g},{\bf h})\leq x+\epsilon\right],
	$$
	to conclude that
	\bea\label{eq:Ann}
	\lim_{k\to\infty}\lim_{m\to\infty}\mathbb{P}\left[\Phi_{k,m}({\bf G})\leq x+\epsilon\right] \leq 2\lim_{k\to\infty}\lim_{m\to\infty} \mathbb{P}\left[\phi_{k,m}({\bf g},{\bf h})\leq x+\epsilon\right]=2\mathbb{P}\left[\cup_{k=1}^\infty\left\{\cap_{m=1}^\infty\left\{\phi_{k,m}({\bf g},{\bf h})\leq x+\epsilon\right\}\right\}\right].
	\eea
Consider now the event $\mathcal{E}$ defined as follows:
	$$
	\mathcal{E}:=\left\{\cup_{k=1}^\infty \cap_{m=1}^\infty \left\{\phi_{k,m}({\bf g},{\bf h})\geq Cf(m)\right\}, \ \ \text{for} \ \ n\  \ \text{sufficiently large}\right\}.
	$$
	From \eqref{eq:cond_inf}, it holds that $\mathbb{P}[\mathcal{E}]=1$, since by assumption: when  $n$ is sufficiently large, for every $k$ and $m$,  $\phi_{k,m}({\bf g},{\bf h})\geq Cf(m)$.  
Therefore, the sequence of events $$A_n:=\left\{\cup_{k=1}^\infty\left\{\cap_{m=1}^\infty\left\{\phi_{k,m}({\bf g},{\bf h})\leq x+\epsilon\right\}\right\}\right\}\,,$$ in \eqref{eq:Ann} does not occur infinitely often. To see this, note that since $\lim_{m\to\infty}f(m)=\infty$, there exists $m_0$ such that for all $m\geq m_0$, $Cf(m)\geq x+\epsilon$.  
	Moreover, since $A_n$ are independent, each event being generated by independent realizations of ${\bf h}$ and ${\bf g}$, by the converse of Borel Cantelli lemma, we find that $\sum_{n=1}^\infty  \mathbb{P}(A_n)<\infty$.  Using this fact in combination with the inequalities above, we obtain that:
	$
	\sum_{n=1}^\infty \mathbb{P}\left[\Phi({\bf G})\leq x\right]<\infty.
	$
	Therefore, using the Borel-Cantelli lemma, we conclude for any $x>0$, $\Phi({\bf G})>x$ for $n$ sufficiently large. Hence $\Phi({\bf G})=\infty$, a.s.
\vp

\subsubsection{Proof of statement (ii)}
Recall that 
 $
 \Phi_{k_0}({\bf G})=\lim_{m\to\infty} \Phi_{k_0,m}({\bf G}) = \sup_{m\to\infty}  \Phi_{k_0,m}({\bf G}),
$
and so, $\Phi({\bf G})\leq \Phi_{k_0}({\bf G})$. Therefore, for any $1>\epsilon>0$ we have:
	\begin{align}
		\mathbb{P}\left[\Phi({\bf G})\geq C+1\right]\leq \mathbb{P}\left[\Phi_{k_0}({\bf G})\geq C+1\right]&=\mathbb{P}\left[\lim_{m\to\infty}\Phi_{k_0,m}({\bf G})\geq C+1 \right]\nn\\
		&\leq \mathbb{P}\left[\cup_{m=1}^\infty \left\{\Phi_{k_0,m}({\bf G})\geq C+1-\epsilon\right\}\right]\label{eq:whyeps}\\
		&=\lim_{m\to\infty}\mathbb{P}\left[\left\{\Phi_{k_0,m}({\bf G})\geq C+1-\epsilon)\right\}\right],\label{eq:last}
	\end{align}
	where \eqref{eq:last} follows from the fact that $\left\{\left\{\Phi_{k_0,m}({\bf G})\geq C+1-\epsilon\right\}\right\}_{m\in\mathbb{N}}$ forms an increasing sequence of events. Also, \eqref{eq:whyeps} is true as follows. If $\lim_{m\to\infty}\Phi_{k_0,m}({\bf G})<\infty$, then, $\lim_{m\to\infty}\Phi_{k_0,m}({\bf G})\leq \Phi_{k_0,m_0}({\bf G})+\epsilon$ for sufficiently large $m_0$. Else, if $\lim_{m\to\infty}\Phi_{k_0,m}({\bf G})=\infty$, then necessarily $\Phi_{k_0,m_0}({\bf G})\geq C+1-\epsilon$ sufficiently large $m_0$.
	
	 But, from the CGMT \eqref{eq:cgmt},
	$$
	\mathbb{P}\left[\left\{\Phi_{k_0,m}({\bf G})\geq C+1-\epsilon)\right\}\right]\leq 2\mathbb{P}\left[\phi_{k_0,m}({\bf g},{\bf h})\geq C+1-\epsilon\right].
	$$
	Hence, 
	\begin{align*}
		\mathbb{P}\left[\left\{\Phi({\bf G)}\geq C+1)\right\}\right]\leq 2\lim_{m\to\infty}\mathbb{P}\left[\phi_{k_0,m}({\bf g},{\bf h})\geq C+1-\epsilon\right]&=2\mathbb{P}\left[\cup_{m=1}^\infty \left\{\phi_{k_0,m}({\bf g},{\bf h})\geq C+1-\epsilon\right\}\right]\\
		&\leq 2\mathbb{P}\left[\cup_{m=1}^\infty \left\{\phi_{k_0,m}({\bf g},{\bf h})\geq C\right\}\right]\\
		&\leq 2\mathbb{P}\left[\lim_{m\to\infty} \phi_{k_0,m}({\bf g},{\bf h})\geq C\right]= 2\mathbb{P}\left[\left\{\phi_{k_0}({\bf g},{\bf h})\geq C\right\}\right].
	\end{align*}
 The equality in the first line above follows because $\left\{\left\{\phi_{k_0,m}({\g,\h})\geq C+1-\epsilon\right\}\right\}_{m\in\mathbb{N}}$ forms an increasing sequence of events. For the inequality in the second line, recall that $0<\eps<1$. The last inequality is again because $\phi_{k_0,m}({\g,\h})$ is increasing in $m$.
 
It follows from the inequality above together with \eqref{eq:bnd_cond} and applying he converse of the Borel-Cantelli lemma that $\sum_{n=1}^\infty \mathbb{P}\left[\lim_{m\to\infty} \left\{\phi_{k_0,m}({\bf g},{\bf h})\geq C\right\}\right]<\infty$.  Hence, $\sum_{n=1}^\infty\mathbb{P}\left[\left\{\Phi({\bf G})\geq C+1)\right\}\right]<\infty$ and with probability $1$ for sufficiently large $n$, 
	$
	\Phi({\bf G})\neq \infty,
	$
	as desired.

\subsection{Proof of Theorem \ref{th:characterization}}
	The proof consists in showing that each one of \eqref{eq:condition1}, \eqref{eq:condition2} and \eqref{eq:condition3} imply one of the following three statements respectively:
\begin{subequations}\label{eq:conditions}
\begin{align}
	&\text{For any}~ \eps>0:\quad\mathbb{P}\left[\Big\{\Phi({\bf G}) \leq \overline{\phi}-2\epsilon\Big\}, ~~\text{i.o.}\right]=0,\label{eq:condition1_pr}\\
	&\text{For any}~ \eps>0:\quad\mathbb{P}\left[\Big\{\Phi({\bf G}) \geq \overline{\phi}+{2\epsilon}\Big\}, ~~\text{i.o.}\right]=0\label{eq:condition2_pr},\\
	&\text{There exists}~ \zeta>0:\quad\mathbb{P}\left[\Big\{\widetilde{\Phi}({\bf G}) \leq \overline{\phi}+2\zeta\Big\}, ~~\text{i.o.}\right]=0,\label{eq:condition3_pr}
\end{align}
\end{subequations}
	where $\widetilde{\Phi}({\bf G})$ is the optimal cost of the optimization in \eqref{eq:pr} where the minimization over ${\bf w}$ is constrained over ${\bf w}\in\mathcal{S}^{c}$. 
	Clearly the combination of \eqref{eq:condition1_pr} and \eqref{eq:condition2_pr} yields \eqref{eq:almost_sure}. To prove how \eqref{eq:property} follows from \eqref{eq:condition3_pr}, consider the event:
	$$
	\mathcal{E}=\left\{\widetilde{\Phi}({\bf G})\geq \overline{\phi}+2\zeta, \ \ {\Phi}({\bf G})\leq \overline{\phi}+\zeta\right\}.
	$$
	In this event, it is easy to see that $\widetilde{\Phi}({\bf G}) > \Phi({\bf G})$. Therefore, ${\bf w}_{\Phi}\in\mathcal{S}$. From \eqref{eq:condition1_pr} and \eqref{eq:condition3_pr}, $\mathbb{P}(\mathcal{E}, \text{i.o.})=1$, which yields $\mathbb{P}\left[{\bf w}_\Phi\in\mathcal{S}, \ \text{i.o} \right]=1$.
	\noindent	 Thus, it remains to prove that \eqref{eq:condition1_pr}, \eqref{eq:condition2_pr} and \eqref{eq:condition3_pr} are by-products of \eqref{eq:condition1}, \eqref{eq:condition2} and \eqref{eq:condition3}, respectively . 

	\vp
\noindent {\bf Proof of \eqref{eq:condition1_pr}:}
	In the event $\left\{\Phi({\bf G})\leq \overline{\phi}-2\epsilon\right\}$, $\Phi({\bf G})\neq \infty$. 
	From the ``$\epsilon$-definition: of the infimum, there exists $\tilde{k}\in\mathbb{N}$ such that:
	$$
	\Phi({\bf G})\geq \Phi_{k_0}({\bf G})-\epsilon\geq \Phi_{k}({\bf G})-\epsilon,  \ \ \forall \ \text{integer} \  \  k\geq \tilde{k}.
	$$
	Since $\Phi_{k}({\bf G})\geq \Phi_{k,m}({\bf G})$ for all $m\in\mathbb{N}$, we thus have, 
	$$
	\Phi({\bf G})\geq \Phi_{k,m}({\bf G})-\epsilon  \ \ \forall \ \text{integer} \  \  k\geq \tilde{k} \ \text{and} \ m\in\mathbb{N}.
	$$
	Hence,
	$$
	\mathbb{P}\left[\Phi({\bf G})\leq \overline{\phi}-2\epsilon  \ \right]\leq \mathbb{P}\left[\cup_{k=1}^\infty \cap_{m=1}^\infty \left\{\Phi_{k,m}({\bf G})\leq \overline{\phi}-\epsilon\right\}\right]=\lim_{k\to\infty}\lim_{m\to\infty} \mathbb{P}\left[\Phi_{k,m}({\bf G})\leq \overline{\phi}-\epsilon \right],
	$$
	where the last equality follows from the fact that the sequence $\left\{\mathcal{E}_k\right\}_{k\in\mathbb{N}^*}$ with $\mathcal{E}_k:=\left\{\cap_{m=1}^\infty  \Phi_{k,m}({\bf G})\leq \overline{\phi}-\epsilon\right\}$ forms an increasing sequence of events, while the sequence of events $\left\{\mathcal{E}_{k,m}\right\}_{m\in\mathbb{N}^*}$ with $\mathcal{E}_{k,m}:=\left\{\Phi_{k,m}({\bf G})\leq \overline{\phi}-\epsilon\right\}$ is decreasing.
	Using Gordon's inequality \eqref{eq:gmt}, 
	\begin{align}
		\lim_{k\to\infty}\lim_{m\to\infty}\mathbb{P}\left[\left\{\Phi_{k,m}({\bf G})\leq \overline{\phi}-\epsilon\right\}\right]\leq 2\lim_{k\to\infty}\lim_{m\to\infty}\mathbb{P}\left[\phi_{k,m}({\bf g},{\bf h})\leq \overline{\phi}-\epsilon\right]&=2\mathbb{P}\left[\cup_{k=1}^\infty \cap_{m=1}^{\infty} \left\{\phi_{k,m}({\bf g},{\bf h})\leq \overline{\phi}-\epsilon \right\}\right] \nn \\
		&\leq2\mathbb{P}\left[\cup_{k=1}^\infty  \left\{\lim_{m\to\infty}\phi_{k,m}({\bf g},{\bf h})\leq \overline{\phi}-\epsilon\right\} \right]\nn\\
		&=2\mathbb{P}\left[\cup_{k=k_0}^\infty \left\{ \lim_{m\to\infty}\phi_{k,m}({\bf g},{\bf h})\leq \overline{\phi}-\epsilon \right\}\right],\nn
	\end{align}
	where the last equality follows from the fact that $\left\{\left\{\lim_{m\to\infty}\phi_{k,m}({\bf g},{\bf h})\leq \overline{\phi}-\epsilon\right\} \right\}_{k\in\mathbb{N}^*}$ forms an increasing sequence of events.

\vp
	Now from \eqref{eq:condition1}, the sequence of events $A_n:=\left\{\cup_{k=k_0}^\infty \left\{\lim_{m\to\infty}\phi_{k,m}({\bf g},{\bf h})\leq \overline{\phi}-\epsilon \right\}\right\}_{n\in\mathbb{N}}$ does not occur infinitely often. Since $\left\{A_n\right\}$ are independent, each event being generated by independent realizations of ${\bf h}$ and ${\bf g}$, by the converse of Borel-Cantelli lemma, $\sum_{n=1}^\infty \mathbb{P}(A_n) <\infty$. Keeping track of the inequalities above and using this fact, we find that:
	$$
	\sum_{n=1}^\infty \mathbb{P}\left[\left\{\Phi({\bf G})\leq \overline{\phi}-\epsilon\right\}\right]<\infty. 
	$$
	Hence \eqref{eq:condition1_pr} holds true.
	\vp

	\noindent {\bf Proof of \eqref{eq:condition2_pr}:} We start by noticing that 
	$$\Phi({\bf G})\geq \overline{\phi}+\epsilon \Longrightarrow \Phi_{k_0}({\bf G})\geq \overline{\phi}+\epsilon.$$
Also, recall that 	$
 \Phi_{k_0}({\bf G})= \lim_{m\to\infty}\Phi_{k_0,m}({\bf G}).
	$ 
	With this at hand, we obtain
	\begin{align}
		\mathbb{P}\left[\Phi({\bf G})\geq \overline{\phi}+2\epsilon\right]\leq \mathbb{P}\left[\Phi_{k_0}({\bf G}) \geq \overline{\phi}+2\epsilon\right]&= \mathbb{P}\left[\lim_{m\to\infty}\Phi_{k_0,m}({\bf G}) \geq \overline{\phi}+2\epsilon\right]\nn\\
		&\leq \mathbb{P}\left[\cup_{m=1}^\infty \left\{\Phi_{k_0,m}({\bf G})\geq \overline{\phi}+\epsilon\right\}\right] \label{eq:why2eps}\\
		&=\lim_{m\to\infty} \mathbb{P}\left[\left\{\Phi_{k_0,m}({\bf G})\geq \overline{\phi}+\epsilon\right\}\right]\,, \label{eq:exp1}
	\end{align}
		where: \eqref{eq:why2eps} follows by the same argument explaining \eqref{eq:whyeps}; and, \eqref{eq:exp1} follows from the fact that $\left\{\left\{\Phi_{k_0,m}({\bf G})\geq \overline{\phi}+\epsilon\right\}\right\}_{m\in\mathbb{N}^*}$ is an increasing sequence of events. 
%
%
	
	But, from the CGMT \eqref{eq:cgmt},
	$$
\mathbb{P}\left[\left\{\Phi_{k_0,m}({\bf G})\geq \overline{\phi}+\epsilon\right\}\right]\leq 2\mathbb{P}\left[\phi_{k_0,m}({\bf g},{\bf h})\geq \overline{\phi}+\epsilon\right].
	$$ Hence,
	\begin{align}
		\mathbb{P}\left[\left\{\Phi({\bf G})\geq \overline{\phi}+\epsilon\right\}\right]\leq 2\lim_{m\to\infty} \mathbb{P}\left[\phi_{k_0,m}({\bf g},{\bf h})\geq \overline{\phi}+\epsilon\right]&=2\mathbb{P}\left[\cup_{m=1}^\infty\left\{\phi_{k_0,m}({\bf g},{\bf h})\geq \overline{\phi}+\epsilon\right\}\right]\nonumber\\
		&\leq 2\mathbb{P}\left[\Big\{\lim_{m\to\infty}\phi_{k_0,m}({\bf g},{\bf h})\geq \overline{\phi}+\epsilon\Big\}\right],
		\label{eq:pr22}
	\end{align}
where we used once more the fact that	$\left\{\left\{\phi_{k_0,m}({\g,\h})\geq \overline{\phi}+\epsilon \right\}\right\}_{m\in\mathbb{N}}$ forms an increasing sequence of events.

	It follows from \eqref{eq:condition2} along with the converse of Borel-Cantelli lemma that $\sum_{n=1}^\infty \mathbb{P}\left[\left\{\lim_{m\to\infty}\phi_{k_0,m}({\bf g},{\bf h})\geq \overline{\phi}+\epsilon\right\}\right]<\infty$. Combining this with \eqref{eq:pr22} yields to the desired, namely \eqref{eq:condition2_pr}.

\vp
	\noindent {\bf Proof of \eqref{eq:condition3_pr}:}  The proof of \eqref{eq:condition3_pr} is identical to the proof of  \eqref{eq:condition1_pr}; thus, it is omitted for brevity.    


\section{Hard-margin SVM}\label{sec:SVM}


In this section we analyze the statistical properties of hard-margin SVM. The analysis is based on the two theorems of Section \ref{sec:CGMT_new}. First, in Section \ref{sec:SVM_AO} we show how to bring the SVM minimization in the form of a PO, we write the equivalent AO and we simplify it to a scalar optimization problem. Next, in Section \ref{sec:proof_PT} we apply Theorem \ref{th:feasibility} to study the feasibility of SVM. As mentioned, the SVM problem is feasible iff the data are linearly separable. Hence, this section proves Proposition \ref{propo:PT}. In the regime $\kappa>\kappa_\star$ where the problem is feasible, we apply Theorem \ref{th:characterization} to prove Proposition \ref{propo:SVM} in Section \ref{sec:SVM_prop}. Throughout, we focus on the logistic data model \eqref{eq:yi_log}. Treatment for the GM model \eqref{eq:yi_GM} is almost identical (if not simpler). To avoid repetitions, we only sketch the main part where the two models need different treatment, which is the formulation of the PO and of the equivalent AO. We present this in Section \ref{sec:sketch_GM}.

\subsection{The Auxiliary Problem of Hard-margin SVM}\label{sec:SVM_AO}
\noindent\textbf{Identifying the PO.} The max-margin solution is obtained by solving the following problem:
\begin{equation}\label{eq:SVM_app}
\betabh = \arg\min_{\betab} \|\betab\|_2 \qquad \text{subject to}\quad y_i\w_i^T\betab \geq 1,
\end{equation}
and is feasible only when the training data is separable. The minimization in  \eqref{eq:SVM_app} is equivalent to solving: 
\begin{equation}\label{eq:origprob}
\min_{\betab} \max_{u_i \leq 0} ~\|\betab\|_2^2 + \frac{1}{n}\sum_{i=1}^n u_i\left( y_i \w_i^T\betab - 1 \right).
\end{equation}
Based on the rotational invariance of the Gaussian measure, we may assume without loss of generality that $\betab_0=[s,0,\ldots,0]^T,$
where only the first coordinate is nonzero. For convenience, we also write
 \bea\label{eq:notation_proof}
 \w_i = [w_i,\mathbf{v}_i^T]^T\quad\text{ and }\quad\betab = [\mu, \tilde{\betab}^T]^T.
 \eea
 In this new notation, the response variables are expressed as follows:
\bea\label{eq:y_pf_SVM}
y_i \sim \Rad{\rhod(s\,w_i+\sigma z_i)},~~~z_i\sim\Nn(0,1).
\eea
Further using this notation and considering the change of variable $\tilde{u}_i=\frac{{u}_i}{\sqrt{n}}$, \eqref{eq:origprob} becomes
\begin{equation}
	\min_{\mu, \tilde{\betab}}~ \max_{\tilde{u}_i \leq 0}~ \mu^2 + \|\tilde{\betab}\|_2^2 + \frac{1}{\sqrt{n}}\sum_{i=1}^n \tilde{u}_iy_iw_i\mu + \frac{1}{\sqrt{n}}\sum_{i=1}^{n} \tilde{u}_iy_i\mathbf{v}_i^T\tilde{\betab} - \frac{1}{\sqrt{n}}\sum_{i=1}^n \tilde{u}_i.
\end{equation}
Letting $\mathbf{1}$ be an $n$-dimensional vector with elements all ones,	$\mathbf{D}_y = \text{diag}(y_1,\ldots,y_n)$,  $\tilde{\mathbf{u}} = [\tilde{u}_1,\ldots,\tilde{u}_n]^T$, and   writing  $$\mathbf{W} = \left[\begin{matrix} \w_1^T \\ \vdots \\ \w_n^T \end{matrix}\right] = \left[ \w_{n\times 1} | \mathbf{V}_{n\times (p-1)} \right],$$
leads to the following optimization problem
\begin{equation}
	\Phi^{(n)}:=\min_{\mu, \tilde{\betab}}~ \max_{\tilde{u}_i \leq 0}~ \frac{1}{\sqrt{n}}\tilde{\mathbf{{u}}}^T\mathbf{D}_y\mathbf{V}\tilde{\betab} + \frac{1}{\sqrt{n}}\left(\tilde{\mathbf{{u}}}^T\mathbf{D}_y\w\right)\mu - \frac{1}{\sqrt{n}}\tilde{\mathbf{u}}^T\mathbf{1} + \mu^2 + \|\tilde{{\betab}}\|_2^2,
	\label{eq:primary}
\end{equation}
which has the same form of a primary optimization (PO) problem as required  by the CGMT (cf. \eqref{eq:PO_loc}), with the single exception that the feasibility sets are not compact.  To solve this issue, we pursue the approach presented in Section \ref{sec:CGMT_new}.


\vp
\noindent\textbf{Forming the AO.} 
 In view of \eqref{eq:AOpr}, we associate the PO in \eqref{eq:primary} with the following ``$(R,\Gamma)$'--bounded" auxiliary optimization problems:
\bea
\phi_{R,\Gamma}^{(n)}:=&\min_{\substack{\mu, \tilde{\betab}\\ \mu^2+\|\tilde{\betab}\|^2\leq R^2}}~\max_{\substack{\tilde{\bf u}\leq 0\\ \|\tilde{\bf u}\|\leq \Gamma}}\frac{1}{\sqrt{n}}\|\tilde{\betab}\|_2\g^{T}{\bf D}_y{\bf u}+\frac{1}{\sqrt{n}}\|{\bf D}_y\tilde{\bf u}\|_2\h^{T}\tilde{\betab}+\frac{1}{\sqrt{n}}\left(\tilde{\bf u}^{T}{\bf D}_y{\bf w}\right)\mu-\frac{1}{\sqrt{n}}\tilde{\bf u}^{T}{\bf 1}+\mu^2+\|\tilde{\betab}\|_2^2\nonumber\\
&=\min_{\substack{\mu, \tilde{\betab}\\ \mu^2+\|\tilde{\betab}\|^2\leq R^2}}~\max_{0\leq \theta\leq \Gamma}\max_{\substack{\tilde{\bf u}\leq 0\\ \|\tilde{\bf u}\|= \theta}}\frac{1}{\sqrt{n}}\|\tilde{\betab}\|_2\g^{T}{\bf D}_y{\bf u}+\frac{1}{\sqrt{n}}\|{\bf D}_y\tilde{\bf u}\|_2\h^{T}\tilde{\betab}+\frac{1}{\sqrt{n}}\left(\tilde{\bf u}^{T}{\bf D}_y{\bf w}\right)\mu-\frac{1}{\sqrt{n}}\tilde{\bf u}^{T}{\bf 1}+\mu^2+\|\tilde{\betab}\|_2^2,
\eea
where $\mathbf{g}\sim \mathcal{N}(0, {\bf I}_n)$ and $\mathbf{h} \sim \mathcal{N}(0,{\bf I}_{p-1})$. Having identified the AO problem, the next step is to simplify them so as to reduce them to problems involving optimization only over a few number of scalar variables. This will facilitate in the next step  of inferring their asymptotic behavior. 

\vp
\noindent\textbf{Scalarization of the AO.} To simplify the AO problem, we start by optimizing over the direction of the optimization variable  $\tilde{\ub}$. In doing so, we obtain:
\begin{equation}
\phi_{R,\Gamma}^{(n)}=\min_{\substack{\mu,\tilde{\betab}\\ \mu^2+\|\tilde{\betab}\|_2^2\leq R^2}} \max_{0\leq \theta\leq \Gamma} \theta \left\|\left(\frac{1}{\sqrt{n}}\|\tilde{\betab}\|_2{\bf D}_y \g+\frac{\mu}{\sqrt{n}}{\bf D}_y{\bf w}-\frac{{\bf 1}}{\sqrt{n}}\right)_{-}\right\|_2 +\theta \ \frac{1}{\sqrt{n}}  \h^{T}\tilde{\betab} +\mu^2+\|\tilde{\betab}\|_2^2,~
\label{eq:AO_inter}
\end{equation}
where we used the fact that $\|\mathbf{D}_y\tilde{\ub}\|_2=\|\tilde{\ub}\|_2$.
To proceed, note that for any $\theta$, the direction of $\tilde{\betab}$ that minimizes the objective in \eqref{eq:AO_inter} is given by $\frac{-\h}{\|\h\|}$. We can use this fact to show (see Lemma \ref{lem:Abla_help1} in Appendix \ref{app:tech}) that $\phi_{R,\Gamma}^{(n)}$ simplifies to:
$$
\phi_{R,\Gamma}^{(n)}=\min_{\substack{\mu,\tilde{\betab}\\ \mu^2+\|\tilde{\betab}\|_2^2\leq R^2}} \max_{0\leq \theta\leq \Gamma} \theta \left\|\left(\frac{1}{\sqrt{n}}\|\tilde{\betab}\|_2{\bf D}_y \g+\frac{\mu}{\sqrt{n}}{\bf D}_y{\bf w}-\frac{{\bf 1}}{\sqrt{n}}\right)_{-}\right\|_2 -\theta \  \frac{\|\tilde{\betab}\|_2}{\sqrt{n}} \|\h\|_2 +\mu^2+\|\tilde{\betab}\|_2^2.
$$
In the above optimization problem, it is easy to see that $\tilde{\betab}$ appears only through its norm, which we henceforth denote $\alpha:=\|\tilde{\betab}\|_2$. Further optimizing over $\theta$, we obtain the following scalar optimization problem:
\bea
\phi_{R,\Gamma}^{(n)}=\min_{\substack{\mu,\alpha \geq 0\\ \mu^2+\alpha^2\leq R^2}} \Gamma\, \max\left(\left\|\left(\frac{\alpha}{\sqrt{n}}{\bf D}_y \g+\frac{\mu}{\sqrt{n}}{\bf D}_y{\bf w}-\frac{{\bf 1}}{\sqrt{n}}\right)_{-}\right\|_2 - \  \frac{\alpha}{\sqrt{n}} \|\h\|_2 ,0\right) +\mu^2+\alpha^2 \label{eq:similar1}.
\eea
It is important to note that the new formulation of the AO problem is obtained from a deterministic analysis that did not involve any asymptotic approximation. 

\vp
\noindent\textbf{Asymptotic behavior of the AO.} In view of \eqref{eq:similar}, let us consider the sequence of functions $$\Ell_n(\alpha,\mu):= \left\|\left(\frac{\alpha}{\sqrt{n}}{\bf D}_y \g+\frac{\mu}{\sqrt{n}}{\bf D}_y{\bf w}-\frac{{\bf 1}}{\sqrt{n}}\right)_{-}\right\|_2 - \  \frac{\alpha}{\sqrt{n}} \|\h\|_2 ,$$ for $\alpha\geq 0$ and $\alpha^2+\mu^2\leq R^2$. It is easy to see that $\Ell_n$ is jointly convex in its arguments $(\alpha,\mu)$ and converges almost surely in the limit of \eqref{eq:linear} to:
\begin{align}\label{eq:oell}
\overline{\Ell}:(\alpha,\mu)\mapsto \sqrt{\mathbb{E}\left(\alpha\,H+\mu\,G\,Y-1\right)_{-}^2}-\alpha \sqrt{\kappa},
\end{align}
where, as in \eqref{eq:HGZ},
$$
H,G,Z \overset{\text{i.i.d.}}{\sim} \mathcal{N}(0,1),\qquad Y \sim \text{Rad}\left( f(s\,G + \sigma Z) \right).
$$

The convergence above holds point-wise in $(\alpha,\mu)$. But, convergence of convex functions is uniform over compact sets \cite[Cor.~II.1]{AG1982}. Therefore, for an arbitrary fixed compact set $\Cc$ and any $\eps>0$, for $n$ sufficiently large it holds that
\bea\label{eq:uni}
\overline{\Ell}(\alpha,\mu)-\epsilon \leq \Ell_n(\alpha,\mu)\leq \overline{\Ell}(\alpha,\mu)+\epsilon,\qquad\forall(\alpha,\mu)\in\Cc.
\eea

%




\subsection{Proof of Proposition~\ref{propo:PT}}\label{sec:proof_PT}

\noindent\textbf{Organization of the proof.} We organize the proof of Proposition \ref{propo:PT} in three parts. In the first two parts, which are both presented in this section, we prove the following two statements in the order in which they {appear}:
\bea
 \kappa<g(\kappa)~&\Rightarrow~\Pr\left(~\text{The event} \ \ \Ecsep~ \text{holds for all large} \ n\right)=0,\label{eq:proof_PT_step2}\\
 \kappa>g(\kappa)~&\Rightarrow~\Pr\left(~\text{The event} \ \  \Ecsep~ \text{holds for all large} \  n\right)=1.\label{eq:proof_PT_step1}
\eea
In the third part, we prove that the function $g(\kappa)$ is decreasing. Hence, the equation $\kappa=g(\kappa)$ admits a unique solution $\kappa_*$ and  one of the following two statements holds true for any $\kappa$:
$
\kappa > \kappa_* \Rightarrow  \kappa>g(\kappa)$ or $\kappa < \kappa_* \Rightarrow  \kappa<g(\kappa)$.
This third part of the proof is deferred to Appendix \ref{sec:g(kappa)_proof}.


\vp
\subsubsection{Part I: Proof of \eqref{eq:proof_PT_step2}}\label{sec:Part I}
Here, we prove that when 
\begin{equation}
\kappa < \inf_{t\in\mathbb{R}} \mathbb{E}\left(H+tGY\right)_{-}^2 =: g(\kappa),
	\label{eq:cond247}
\end{equation}
holds, then, $\Phi^{(n)}=\infty$, for large enough $n$. Equivalently, when \eqref{eq:cond247} holds, then, the hard-margin SVM program \eqref{eq:SVM_app} is infeasible with probability one. To prove the desired we will apply Theorem \ref{th:feasibility}(i). Specifically, in view of \eqref{eq:cond_inf}, it suffices  to prove that under \eqref{eq:cond247}, for any fixed $R,\Gamma$, there exists constant $C>0$ (independent of $\Gamma$ {and $R$}) such that for sufficiently large $n$ {(that can be taken independently of $R$ and $\Gamma$):}
\begin{equation}
\phi_{R,\Gamma}^{(n)}\geq C\Gamma.
		\label{eq:unboundedness}
	\end{equation}

In the remaining of the proof, we show that \eqref{eq:unboundedness} holds.
{Recall  from \eqref{eq:similar1} that,
$$
\phi_{R,\Gamma}^{(n)}= \min_{\substack{\mu,\alpha \geq 0\\ \mu^2+\alpha^2\leq R^2}} \Gamma \max({\Ell}_n(\alpha,\mu),0)+\alpha^2+\mu^2.
$$
Hence, 
\begin{equation}
	\phi_{R,\Gamma}^{(n)}\geq  \Gamma \max(\min_{\mu,\alpha\geq 0}{\Ell}_n(\alpha,\mu),0). \label{eq:lb}
\end{equation}
The function $(\alpha,\mu) \mapsto {\Ell}_n(\alpha,\mu)$ is jointly convex in the variables $(\alpha,\mu)$ and converges pointwise to $(\alpha,\mu) \mapsto\overline{\Ell}(\alpha,\mu)$. Thus, for any $\alpha>0$, $\mu\mapsto L_n(\alpha,\mu)$ is convex and converges to $\mu\mapsto \overline{\Ell}(\alpha,\mu)$. Moreover, it is easy to see that $\lim_{\mu\to\pm\infty}\overline{\Ell}(\alpha,\mu)\to\infty$. Hence, using \cite[Lemma 10]{Master}, $\min_{\mu}\Ell_n(\alpha,\mu)$ converges to $\min_{\mu}\overline{\Ell}(\alpha,\mu)$. Similarly, $\alpha\mapsto \min_{\mu} {\Ell}_n(\alpha,\mu)$ is convex and converges pointwise to $\alpha\mapsto \min_{\mu} \overline{\Ell}(\alpha,\mu)$. Moreover, 
\begin{align}
	\lim_{\alpha\to\infty} \overline{\Ell}(\alpha,\mu)&\geq \alpha \big(\inf_{t\in\R} \sqrt{\mathbb{E}(H+tGY)_{-}^2}-\sqrt{\kappa}\big) \nn \\
	&\stackrel{\alpha\to \infty}{\longrightarrow} \infty \label{eq:pret}
\end{align}
because of \eqref{eq:cond247}.
Hence, we can use again \cite[Lemma 10]{Master}, to find that
$$
\min_{\mu,\alpha\geq 0}{\Ell}_n(\alpha,\mu)\, {\ras}\,  \min_{\substack{\mu,\alpha\geq 0}}\overline{\Ell}(\alpha,\mu).
$$
Thus, in view of \eqref{eq:lb}, for any $\epsilon>0$, it holds for $n$ sufficiently large (independent of $R$ and $\Gamma$) that
%
\bea\label{eq:uni_lower}
\phi_{R,\Gamma}^{(n)} \geq  \Gamma \max(\min_{\substack{\mu,\alpha\geq 0}}\overline{\Ell}(\alpha,\mu)-\epsilon,0).\eea
Thus, the desired inequality \eqref{eq:unboundedness} holds provided {that there exists $C$ such that }
\begin{equation}
\inf_{\substack{ \alpha\geq 0\\ \mu\in\mathbb{R}} }  \overline{\Ell}(\alpha,\mu)>C.\label{eq:condEq}
\end{equation}
To see this, note that in this case we can choose $\epsilon$ sufficiently small (say, smaller than $\inf_{\mu,\alpha\geq 0} 0.5\overline{\Ell}(\alpha,\mu)$) in \eqref{eq:uni_lower}. 

Hence, it suffices to prove that \eqref{eq:uni_lower} holds under \eqref{eq:cond247}.
To show the desired, we first prove that  the optimization over $\mu$ in \eqref{eq:uni_lower} when $\alpha$ is constrained to be in the vicinity of $0$ can be assumed over a compact set. This can be seen as follows. 
For any $0\leq \alpha\leq 1$, it holds
$$
\overline{\Ell}(\alpha,\mu)\geq \sqrt{\mathbb{E}\left(\alpha\,H+\mu\,G\,Y-1\right)_{-}^2{1}_{\{H<0\}}}-\alpha\sqrt{\kappa}\geq \sqrt{\mathbb{E}\left(\mu GY-1\right)_{-}^2{1}_{\{H<0\}}} -\sqrt{\kappa}.
$$
But, $\sqrt{\mathbb{E}\left(\mu GY-1\right)_{-}^2{1}_{\{H<0\}}}$ grows unboundedly large as $\mu\to \infty$ or $\mu\to -\infty$. Hence, for any $0<\tilde{\eta}\leq 1$,
$$
\lim_{|\mu|\to\infty }\min_{0\leq \alpha \leq \tilde{\eta}} \overline{\Ell}(\alpha,\mu) =\infty,
$$
thus proving that the minimum over $\mu$ in \eqref{eq:uni_lower} is bounded. To be concrete, we can (and we do) assume henceforth that $\mu$ belongs to some compact set  of $\mathbb{R}$ when $\alpha$ is constrained in $[0,\tilde{\eta}]$. For fixed $\mu\in\mathcal{A}$,  note that $\lim_{\alpha\downarrow 0} \overline{\Ell}(\alpha,\mu)=\sqrt{\mathbb{E}\left(\mu GY-1\right)_{-}^2}.$ Furthermore, this convergence is uniform over compact sets due to convexity of the function $\overline{\Ell}$, hence,
$$
\lim_{\alpha \downarrow 0} \min_{\mu \in \mathcal{A}} \overline{\Ell}(\alpha, \mu)=\min_{\mu\in\mathcal{A}}\sqrt{\mathbb{E}\left(\mu GY-1\right)_{-}^2}>\inf_{\mu\in\mathbb{R}} \sqrt{\mathbb{E}\left(\mu GY-1\right)_{-}^2}>0.
$$
Since $\alpha\mapsto \min_{\mu\in\mathcal{A}} \overline{L}(\alpha,\mu)$ is continuous and thus uniformly continuous over compacts, there exists $1>\eta>0$, such that
\begin{equation}
\min_{0\leq \alpha\leq \eta} \inf_{\mu\in\mathcal{A}} \overline{\Ell}(\alpha, \mu)> 0.5\inf_{\mu\in\mathbb{R}} \sqrt{\mathbb{E}\left(\mu GY-1\right)_{-}^2}>0 .
	\label{eq:1}
\end{equation}
On the other hand,
\begin{equation}
	\min_{\eta\leq \alpha}  \inf_{\mu\in\mathbb{R}} \overline{\Ell}(\alpha,\mu)\geq \inf_{\substack{t_2\geq 0\\ t_1\in\mathbb{R}}} \eta \Big(\sqrt{\mathbb{E}\left(H+t_1GY-\frac{1}{t_2}\right)_{-}^2}-\sqrt{\kappa}\Big)\geq  \eta\,\big(\inf_{t_1\in\mathbb{R}}\sqrt{\mathbb{E}\left(H+t_1GY\right)_{-}^2}-\sqrt{\kappa}\big) = \eta \,(\sqrt{g(\kappa)} - \sqrt{\kappa}).
	\label{eq:2}
\end{equation}
Note that the right-hand side above is strictly positive because of \eqref{eq:cond247}.

Thus, combining \eqref{eq:1} and \eqref{eq:2}, we conclude with the desired, i.e. 
if \eqref{eq:cond247} holds, then \eqref{eq:condEq} and consequently \eqref{eq:unboundedness} hold.

\vp
\subsubsection{Part II: Proof of \eqref{eq:proof_PT_step1}}\label{sec:PartII}
We will now prove that if \begin{equation}
\kappa > \inf_{t\in\mathbb{R}} \mathbb{E}\left(H+tGY\right)_{-}^2 =: g(\kappa),
	\label{eq:condC}
\end{equation} then, the PO in \eqref{eq:SVM_app} is feasible (eqv. $\Phi^{(n)}$ is bounded) with probability 1. To prove this, we will apply Theorem \ref{th:feasibility}(ii). Specifically, in view of \eqref{eq:bnd_cond} it will suffice  to  show that  under \eqref{eq:condC} the AO is feasible with probability 1, for sufficiently large $n$. 

First, we will show that for any $\delta>0$  the following set is non-empty: 
\bea\label{eq:nonempty}
\mathcal{B}_\delta:=\left\{(\alpha,\mu)\in\mathbb{R}_{+}\times \mathbb{R} \ \ | \ \ \overline{\Ell}(\alpha,\mu)\leq -\delta\right\}\,\neq\,\emptyset.
\eea
To see this, let $t^\star$ be such that:
$$
t^\star=\arg\min_{t\in \mathbb{R}}\mathbb{E}(H+tGY)_{-}^2 \,.
$$
Clearly, $t^\star$ is finite since $\lim_{t\to \pm\infty} \mathbb{E}(H+tGY)_{-}^2=\infty$. Thus, setting $\mu=t^\star \alpha$ and using \eqref{eq:condC}, we find that
$$
\lim_{\alpha\to\infty} \overline{\Ell}(\alpha,t^\star\alpha)=\lim_{\alpha\to\infty} \alpha \Big[\sqrt{\mathbb{E}\left(H+t^\star GY-\frac{1}{\alpha}\right)_{-}^2}-\sqrt{\kappa}\Big]=-\infty\,.
$$
This and continuity of $\overline{\Ell}(\alpha,\mu)$ prove \eqref{eq:nonempty}.

Having shown \eqref{eq:nonempty}, let 
\bea \label{eq:C_delta}
C_\delta:=\min_{(\alpha,\mu)\in\mathcal{B}_\delta} \sqrt{\alpha^2+\mu^2} < \infty.
\eea

Next, we prove \eqref{eq:bnd_cond} by showing that the AO problem is upper bounded by $C_\delta^2$ with probability 1. Specifically, we work with the sufficient condition \eqref{eq:bnd_cond_easy}. Towards that end, choose $k_0$ be an integer strictly greater than $C_\delta$ and let\footnote{Recall from  \eqref{eq:2o} that $\phi_{k_0}^{(n)}$ is defined as $\phi_{k_0}^{(n)}=\displaystyle{\sup_{m>0}}~\displaystyle{\min_{(\alpha,\mu)\in\Cc_k} \max_{0\leq\theta\leq m}\theta \,\Ell_n(\alpha,\mu) +\alpha^2+\mu^2}$. Thus, inequality ``$\leq$" holds in \eqref{eq:less_A0} by the min-max theorem; moreover inequality suffices here for our purposes as explained in Remark \ref{rem:feas}. Nevertheless, since the objective function of the scalarized AO in \eqref{eq:less_A0} is convex-concave, equality holds in \eqref{eq:less_A0} by applying the Sion's min-max theorem \cite{Sion}.}
\bea
\phi_{k_0}^{(n)}&=\displaystyle{\min_{\substack{\mu,\alpha\geq 0\\ \mu^2+\alpha^2\leq k_0^2}}\sup_{\theta\geq 0}\,\theta L_n(\alpha,\mu) +\mu^2+\alpha^2} \label{eq:less_A0} \\
& =  \displaystyle{\min_{\substack{(\alpha,\mu)\in\Cc_{k_0}\\ L_n(\alpha,\mu)\leq 0}} \mu^2+\alpha^2},\label{eq:less_A}
\eea
where we defined
$$\mathcal{C}_{k_0}:=\left\{(\alpha,\mu)\in \mathbb{R}_{+}\times \mathbb{R}~|~\alpha^2+\mu^2\leq k_0^2\right\}.$$
Moreover, by uniform convergence of the AO in \eqref{eq:uni} we know that for all sufficiently large $n$: $L_n(\alpha,\mu)\leq \overline{\Ell}(\alpha,\mu)+\delta$, uniformly for all $(\alpha,\mu)\in\Cc_{k_0}$.
%
Hence, recalling \eqref{eq:nonempty}, it holds that $$\left\{(\alpha,\mu)~|~(\alpha,\mu)\in \mathcal{C}_{k_0}\cap \mathcal{B}_\delta \right\}\subset\left\{(\alpha,\mu)~|~(\alpha,\mu)\in \mathcal{C}_{k_0} \text{ and } L_n(\alpha,\mu)\leq 0 \right\}.$$ 
And so, continuing from \eqref{eq:less_A}, we have shown that for sufficiently large $n$:
\bea\label{eq:UB_step1}
\phi_{k_0}^{(n)}\displaystyle{\leq \min_{(\mu,\alpha)\in \mathcal{C}_{k_0}\cap\mathcal{B}_\delta} \mu^2+\alpha^2 \leq C_\delta^2},
\eea
where the last inequality follows by definition of $C_\delta$ in \eqref{eq:C_delta}.
This shows \eqref{eq:bnd_cond_easy} and the proof is complete by applying Theorem \ref{th:feasibility}(ii).
\vp

\subsection{Proof of Proposition \ref{propo:SVM}}\label{sec:SVM_prop}




\subsubsection{Preliminaries and strategy}\label{sec:pre_SVM} Throughout this section we assume that \eqref{eq:condC} holds. We will prove Proposition \ref{propo:SVM} by applying Theorem \ref{th:characterization}.  Specifically, let $$k_0^2= \lceil(q^\star)^2\rceil+1,$$
where $q^\star$ is as defined in Proposition \ref{propo:SVM}. In view of Theorem \ref{th:characterization}, we need to prove the following
 \begin{subequations}\label{eq:cond}
\begin{align}
&\text{For any}~ \eps>0:\quad\mathbb{P}\left[\cup_{k=k_0}^\infty\Big\{\phi_{k}^{(n)} \leq (q^\star)^2-\epsilon\Big\}, ~~\text{i.o.}\right]=0,\label{eq:cond1}\\
	&\text{For any}~ \eps>0:\quad\mathbb{P}\left[\Big\{\phi_{k_0}^{(n)} \geq (q^\star)^2+\epsilon\Big\}, ~~\text{i.o.}\right]=0\label{eq:cond2},
\end{align}
\end{subequations}
where $\phi_{k}^{(n)}$ is (see \eqref{eq:less_A0} and Remark \ref{rem:minsup}) given by:
\bea
\phi_{k}^{(n)}&=\displaystyle{\min_{\substack{(\alpha,\mu)\in\Cc_{k}\\ L_n(\alpha,\mu)\leq 0}} \mu^2+\alpha^2},\label{eq:less_A2}
\eea
with
$\mathcal{C}_{k}:=\left\{(\alpha,\mu)\in \mathbb{R}_{+}\times \mathbb{R}~|~\alpha^2+\mu^2\leq k^2\right\}.$ At this point, it might be unclear how the value of $q^\star$ relates to the AO in \eqref{eq:less_A2}. Lemma \ref{lem:detAO} below explains this; see Section \ref{sec:detAO} for the proof. Specifically recall from \eqref{eq:uni} that $L_n(\alpha,\mu)$ in \eqref{eq:less_A2} converges to $\overline{\Ell}(\alpha,\mu)$ defined in \eqref{eq:oell}.

\begin{lem}\label{lem:detAO}
Recall the definition of $\overline{\Ell}(\alpha,\mu)$ in \eqref{eq:oell} and let $q^\star$ be defined as in Proposition \ref{propo:SVM}. Further assume the separability condition \eqref{eq:condC}. The following statement is true, for all $k\geq k_0=\sqrt{\lceil(q^\star)^2\rceil+1}$,
\bea
\min_{ \substack{ (\alpha,\mu)\in\Cc_k \\ \overline{\Ell}(\alpha,\mu) \leq 0 }} \alpha^2+\mu^2 = (q^\star)^2.\label{eq:qstarwhy}
\eea
\end{lem}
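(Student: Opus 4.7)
} The strategy is to reduce the two-dimensional problem in \eqref{eq:qstarwhy} to a one-dimensional problem in $q$ via a polar change of variables, and then identify its optimum with $q^\star$ using the characterization of $q^\star$ from Proposition \ref{propo:SVM}. Since the objective $\alpha^2+\mu^2$ and the ambient ball $\Cc_k$ are both radial, I would parametrize the feasible set $\R_{+}\times \R$ by
\[
\alpha = q\sqrt{1-\rho^2},\qquad \mu = q\rho,\qquad q\geq 0,\ \rho\in[-1,1],
\]
so that $\alpha^2+\mu^2=q^2$ and the constraint $(\alpha,\mu)\in\Cc_k$ becomes $q\leq k$.

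The next step is to rewrite the constraint $\overline{\Ell}(\alpha,\mu)\le 0$ in these new variables. Using the identity $(qx-1)_-^2 = q^2\bigl(x-1/q\bigr)_-^2$ for $q>0$, one gets
\[
\overline{\Ell}(\alpha,\mu)\le 0 \iff \E\bigl(\alpha H+\mu V-1\bigr)_-^2 \le \alpha^2\kappa \iff q^2\,\E\bigl(\sqrt{1-\rho^2}\,H+\rho V - 1/q\bigr)_-^2 \le q^2(1-\rho^2)\kappa,
\]
which is exactly $\eta(q,\rho)\le 0$. (The value $q=0$ is infeasible since $\overline{\Ell}(0,0)=1>0$.) Thus the minimization in \eqref{eq:qstarwhy} is equivalent to minimizing $q^2$ over $q\in(0,k]$ and $\rho\in[-1,1]$ subject to $\eta(q,\rho)\le 0$, or equivalently $\psi(q):=\min_{\rho\in[-1,1]}\eta(q,\rho)\le 0$.

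The final step is to show that this one-dimensional problem has optimal value $(q^\star)^2$. For each fixed $\rho$, $q\mapsto\eta(q,\rho)$ is (weakly) decreasing since $q\mapsto 1/q$ is decreasing and $a\mapsto \E(X-a)_-^2$ is (weakly) increasing in $a$; therefore $\psi$ is (weakly) decreasing on $(0,\infty)$. By the uniqueness statement in Proposition \ref{propo:SVM}, $\psi$ has a unique zero at $q^\star$, so $\psi(q)>0$ for $q<q^\star$ (infeasibility) and $\psi(q)\le 0$ for $q\ge q^\star$. Continuity of $\eta$ in $\rho$ and compactness of $[-1,1]$ guarantee that the minimum defining $\psi(q^\star)=0$ is attained at some $\rho^\star\in[-1,1]$, so the pair $(q^\star,\rho^\star)$ is feasible and achieves $q^2=(q^\star)^2$. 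The box constraint $q\le k$ is inactive, because $k\ge k_0 = \sqrt{\lceil (q^\star)^2\rceil + 1}>q^\star$, so this value is indeed the minimum.

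The main obstacle is Step 3, namely ruling out feasibility for $q<q^\star$. Weak monotonicity of $\psi$ is immediate, but to exclude a plateau on which $\psi\equiv 0$ strictly below $q^\star$ I need the uniqueness of $q^\star$ (proved elsewhere in the paper as part of Proposition \ref{propo:SVM}); together with monotonicity this upgrades $\psi(q)\ge 0$ for $q<q^\star$ to strict positivity, and hence infeasibility. The separability assumption \eqref{eq:condC} enters here to ensure that $q^\star$ exists and is finite: as $q\to\infty$, $\psi(q)$ tends to $\inf_{t\in\R}\bigl[\E(H+tV)_-^2-\kappa\bigr]/(1+t^2)=(g(\kappa)-\kappa)/(1+t_\star^2)<0$, which guarantees that the decreasing function $\psi$ crosses zero at a finite value.
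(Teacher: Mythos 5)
Your proposal is correct and follows essentially the same route as the paper: the paper performs the same polar re-parametrization $(q,\rho)$, rewrites $\overline{\Ell}(\alpha,\mu)\le 0$ as $\eta(q,\rho)\le 0$ after dividing through by $q^2=\alpha^2+\mu^2$, then appeals to monotonicity of $q\mapsto\min_{\rho}\eta(q,\rho)$ (Lemma F.1 in the paper) together with the uniqueness of $q^\star$ to conclude the optimum is $(q^\star)^2$. One small inaccuracy in your closing remark: as $q\to\infty$ the limit is $\inf_{t}\tfrac{\E(H+tV)_-^2-\kappa}{1+t^2}$, and this is \emph{not} in general equal to $(g(\kappa)-\kappa)/(1+t_\star^2)$ where $t_\star=\arg\min_t\E(H+tV)_-^2$ (the two optimizations have different argmins); nevertheless the infimum is at most the value at $t_\star$, which is negative under \eqref{eq:condC}, so your conclusion that the limit is strictly negative — and hence that a finite $q^\star$ exists — still stands.
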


In addition to \eqref{eq:cond}, we need to consider an appropriate ``perturbed" AO as suggested by \eqref{eq:condition3}. For this reason, fix $\widetilde{\ksi}>0$ and define the ``perturbed" version of the AO problem (cf. \eqref{eq:similar1}) as follows:
\bea
\widetilde\phi_{k,m}^{(n)}=\min_{\substack{(\alpha,\mu)\in\Cc_k \\ (\alpha,\mu)\not\in\Sc}} m\, \max\left(L_n(\alpha,\mu) ,0\right) +\mu^2+\alpha^2, \label{eq:similar}
\eea
where we denote:
\bea\label{eq:Sc}
\Sc &:= \{ (\alpha,\mu)~|~ \alpha\in\Sc_\alpha \text{ and } \mu\in\Sc_\mu \}.\\
\Sc_\mu & = \{ \mu~|~ |\mu-\mu^\star|<\ksit \},~~~ \mu^\star :=  q^\star\rho^\star,\nn\\
 \Sc_\alpha &= \{ \mu~|~ |\alpha-\alpha^\star|<\ksit \},~~~  \alpha^\star := q^\star\sqrt{1-(\rho^\star)^2},\nn
\eea
and $\rho^\star$ is defined in Proposition \ref{propo:SVM}. In order to apply Theorem \ref{th:characterization}, we will prove in addition to \eqref{eq:cond} that
\bea
\text{There exists}~ \zeta>0:\quad\mathbb{P}\left[\cup_{k=k_0}^\infty\Big\{\widetilde\phi_{k}^{(n)} \leq (q^\star)^2+\zeta\Big\}, ~~\text{i.o.}\right]=0,\label{eq:cond3}
\eea
where, recall from \eqref{eq:2o} that 
\bea
\widetilde\phi_k^{(n)} &=\sup_{m \geq 0}  \widetilde\phi^{(n)}_{k,m} = \lim_{m \rightarrow \infty}  \widetilde\phi^{(n)}_{k,m}. \nn
\eea

Before proceeding let us explain our choice of the perturbation set in \eqref{eq:Sc}. Recall from the analysis of the AO that the variable $\alpha$ takes the role of $\|\tilde{\betab}\|_2$. Hence, the set $\Sc$ in \eqref{eq:Sc} corresponds to the following set in terms of $\betab$:
\bea\label{eq:Sbad}
\Scb:= \{ \betab=[\mu, \tilde{\betab}^T]^T\in\R^p~|~ \mu \in \Sc_\mu \text{ and } \|\tilde{\betab}\|_2 \in \Sc_\alpha \}.
\eea
Thus, if we show \eqref{eq:cond1}, \eqref{eq:cond2} and \eqref{eq:cond3}, then Theorem \ref{th:characterization} will imply that the solution $\betabh$ of the PO $\Phi^{(n)}$ in \eqref{eq:SVM_app} satisfies $\betabh\in\Scb$ with probability one. To see why this leads to the asymptotic formulae of the proposition argue as follows. For small enough $\widetilde\ksi$ it follows by definition \eqref{eq:Sbad} that
$$\betab\in\Scb~~\Longrightarrow~~ \betab_1\approx\mu^\star\text{ and }\|\tilde\betab\|_2\approx\alpha^\star.$$
Therefore,  for $\betab\in\Scb$ (and small enough $\ksit$):
$$\frac{\langle \betab , \betab_0 \rangle}{\|\betab\|_2\|\etab_0\|_2} = \frac{ s \betab_1 }{r\sqrt{\betab_1^2 + \|\tilde \betab\|_2^2}} \approx  \frac{ s \mu^\star }{r\sqrt{(\mu^\star)^2 + (\alpha^\star)^2}} = \rho^\star\frac{s}{r}$$
and
\begin{align}
\Rc(\betab) &= \mathbb{P}(\,y\,\w^T\betab<0~) = \mathbb{P}\left(~\,\Rad{\ell^\prime(\,\w^T\betab_0+\z^T\gammab_0)} \cdot (\betab_1\,w_i + \tilde{\betab}^T\vb_i)<0 \, \right)\nn\\
&=\mathbb{P}\left(~\,\Rad{\ell^\prime(\,w_i \,s+\z^T\gammab_0)} \cdot (\betab_1\,w_i + \tilde{\betab}^T\vb_i)<0 \, \right)\nn\\
&= \mathbb{P}\left(~\,\Rad{\ell^\prime(\,s\,G +\sigma\,Z)} \cdot (\betab_1\,G + \|\tilde\betab\|_2 H)<0 \, \right) = \mathbb{P}\left(~\,Y \cdot (\betab_1\,G + \|\tilde\betab\|_2 H)<0 \, \right)\nn\\
&= \mathbb{P}\left(~\,\betab_1\,GY + \|\tilde\betab\|_2 H<0 \, \right)\\
&\approx  \mathbb{P}\left(~\, \mu^\star\,GY + \alpha^\star H<0 \, \right) = \mathbb{P}\left(~\, \rho^\star\,Y\,G  + \sqrt{1-(\rho^\star)^2} H<0 \, \right), \label{eq:egw2}
\end{align}
where as always $G,H,Z$ and $Y$ follow our notation in \eqref{eq:HGZ} and the second to last line follows since $HY\sim \Nn(0,1)$. 

\vp
Hence, it remains to prove  \eqref{eq:cond1}, \eqref{eq:cond2} and \eqref{eq:cond3}.

\subsubsection{Proof of Lemma \ref{lem:detAO}}\label{sec:detAO}

First, recalling the definition of $\overline{\Ell}(\alpha,\mu)$ in \eqref{eq:oell} note the following implications
\bea
\overline{\Ell}(\alpha,\mu)\leq 0 \, &\Rightarrow \, \alpha^2\kappa \geq \E\big( \alpha H + \mu GY - 1 \big)_-^2\nn\\
&\Rightarrow \, \frac{\alpha^2}{\alpha^2+\mu^2}\kappa \geq \E\Big( \frac{\alpha}{\sqrt{\alpha^2+\mu^2}} H + \frac{\mu}{{\sqrt{\alpha^2+\mu^2}}} GY - \frac{1}{\sqrt{\alpha^2+\mu^2}} \Big)_-^2,\label{eq:Impl2}
\eea
where, in the second line we have used the fact that $(0,0)\not\in\{(\alpha,\mu)~|~\overline{\Ell}(\alpha,\mu)\leq 0\}$. To proceed, define new variables
\bea\label{eq:repam}
\rho:=\frac{\mu}{\sqrt{\alpha^2+\mu^2}}\in[-1,1]\quad\text{and}\quad q := \sqrt{\alpha^2+\mu^2}\geq 0\,.
\eea
With this new notation, note that  the expression in \eqref{eq:Impl2} can be written as
$$
(1-\rho^2) \kappa \geq \E\big(\rho GY + \sqrt{1-\rho^2} H - 1/q \big)_-^2 \quad\Leftrightarrow\quad \eta(q,\rho) \leq 0,
$$
where we recall the definition of the mapping $\eta(q,\rho)$ in \eqref{eq:eta_func}. With this note and using \eqref{eq:Impl2}, we have that:
\bea\label{eq:ineta}
\min_{ \substack{ (\alpha,\mu)\in\Cc_k \\ \overline{\Ell}(\alpha,\mu) \leq 0 }} \alpha^2+\mu^2 ~=~ \min\left\{ q^2~\Big|~ 0\leq q \leq k  \text{ and } \min_{\rho\in[-1,1]} \eta(q,\rho) \leq 0 \right\} \,.
\eea
In Lemma \ref{lem:eta_properties}, it is shown that $q\mapsto\min_{\rho\in[-1,1]} \eta(q,\rho)$ is strictly decreasing. Recall the definition of $q^\star$ in Proposition \ref{propo:SVM} as the unique minimum of the equation $\min_{\rho\in[-1,1]} \eta(q,\rho) = 0$; see Appendix \ref{app:eta} for a proof that such a unique minimum exists under the separability assumption \eqref{eq:condC} of the lemma. Combining the above, we conclude that 
\bea
\nn \min_{\rho\in[-1,1]} \eta(q,\rho) \leq 0 \, \Rightarrow \, \min_{\rho\in[-1,1]} \eta(q,\rho) \leq  \min_{\rho\in[-1,1]} \eta(q^\star,\rho)  \Rightarrow \, q \geq q^\star.
\eea
Thus, 
\bea
 \min\left\{ q^2~\Big|~ 0\leq q \text{ and } \min_{\rho\in[-1,1]} \eta(q,\rho) \leq 0 \right\} = (q^\star)^2.
\eea
Using this and the assumption $k^2> (q^\star)^2$ proves that the RHS in \eqref{eq:ineta} is equal to $(q^\star)^2$, which concludes the proof.

\subsubsection{Proof of \eqref{eq:cond1}}~
Fix any $\eps>0$, and $k\geq k_0=\sqrt{\lceil(q^\star)^2\rceil+1}$. Recall the notation in \eqref{eq:less_A2}. 

\vp
We start by proving that the following statement holds almost surely
\bea\label{eq:question_LB}
\phi_k^{(n)} = \displaystyle{\min_{\substack{(\alpha,\mu)\in\Cc_{k_0}\\ \Ell_n(\alpha,\mu)\leq 0}} \alpha^2+\mu^2} =: \phi_{k_0}^{(n)}.
\eea
Before that we argue that the minimization in the RHS above is feasible. The argument is identical to what was done in Section \ref{sec:PartII}. Specifically, by uniform convergence of $\Ell_n(\alpha,\mu)$ to $\overline{\Ell}(\alpha,\mu)$ (cf. \eqref{eq:uni}), for any $\delta>0$ and sufficiently large $n$:
$$
\{(\alpha,\mu)~|~(\alpha,\mu)\in\Cc_{k_0} \text{ and } \overline{\Ell}(\alpha,\mu)\leq -\delta\} ~  \subseteq ~ \{(\alpha,\mu)~|~(\alpha,\mu)\in\Cc_{k_0} \text{ and } \Ell_n(\alpha,\mu)\leq 0\}.
$$
Using assumption \eqref{eq:condC}, we have shown in \eqref{eq:nonempty} that the set on the LHS above is non-empty. This implies non-emptyness of the feasibility set in \eqref{eq:question_LB}. 

\noindent We are now ready to prove \eqref{eq:question_LB}, i.e. $\phi_k^{(n)}=\phi_{k_0}^{(n)}$.  Clearly, the statement holds with ``$\leq$" instead of equality. On the other hand, suppose that the minimizer $(\tilde\alpha,\tilde\mu)$ of $\phi^{(n)}_{k}$ is not feasible for $\phi^{(n)}_{k_0}$, i.e. $(\tilde\alpha,\tilde\mu)\notin\Cc_{k_0}$. Then, by definition of the sets $\Cc_k$, it must be that $\phi^{(n)}_{k}=\tilde\alpha^2+ \tilde\mu^2 \geq k_0^2$. But, $k_0^2 \geq \phi^{(n)}_{k_0}$. Thus, $\phi^{(n)}_{k}\geq \phi^{(n)}_{k_0}$, which shows \eqref{eq:question_LB}.

\vp
Next, continuing from \eqref{eq:question_LB}, we will use 
%
uniform convergence of $\Ell_n(\alpha,\mu)$ to $\overline{\Ell}(\alpha,\mu)$ (cf. \eqref{eq:uni}) to show that for sufficiently large $n$:
\bea\label{eq:det_LB_AO}
\phi_{k_0}^{(n)} = \displaystyle{\min_{\substack{(\alpha,\mu)\in\Cc_{k_0}\\ \Ell_n(\alpha,\mu)\leq 0}} \alpha^2+\mu^2}  \geq \Big\{ \min_{ \substack{ (\alpha,\mu)\in\Cc_{k_0} \\ \overline{\Ell}(\alpha,\mu) \leq 0 }} \alpha^2+\mu^2 \Big\} \,-\, \eps.
\eea
 Before proving \eqref{eq:det_LB_AO}, let us see how it leads to the desired \eqref{eq:cond1}. When put together with \eqref{eq:question_LB}, \eqref{eq:det_LB_AO} shows that for sufficiently large $n$ (independent of $k$):
 $$
\phi_{k}^{(n)}  \geq  \Big\{ \min_{ \substack{ (\alpha,\mu)\in\Cc_{k_0} \\ \overline{\Ell}(\alpha,\mu) \leq 0 }} \alpha^2+\mu^2 \Big\} \,-\, \eps.
 $$
From this and \eqref{eq:qstarwhy} of Lemma \ref{lem:detAO} we conclude that  for sufficiently large $n$:
\bea
\phi_{k}^{(n)} \geq (q^\star)^2 - \eps.\nn
\eea
Since this holds for sufficiently large $n$ independent of $k\geq k_0$, we arrive at \eqref{eq:cond1}, as desired.

\vp
\noindent\underline{Proof of \eqref{eq:det_LB_AO}.} 
From \eqref{eq:uni}, the function $(\alpha, \mu)\mapsto \Ell_n(\alpha,\mu)$ converges  to $\overline{\Ell}(\alpha,\mu)$ defined in \eqref{eq:oell} uniformly over the compact set $\Cc_{k_0}$. Concretely, for any $\delta$,  for all sufficiently large $n$:
$
\Ell_n(\alpha,\mu)\geq \overline{\Ell}(\alpha,\mu)-\delta, \forall(\alpha,\mu)\in\mathcal{C}_{k_0}.
$
Thus,
\bea\label{eq:help21}
\phi_{k_0}^{(n)}\geq \displaystyle{\min_{\substack{(\alpha,\mu)\in\Cc_{k_0}\\ \overline{\Ell}(\alpha,\mu)\leq \delta}} \alpha^2+\mu^2}.
\eea
We may now conclude \eqref{eq:det_LB_AO} from \eqref{eq:help21}, by first applying Lemma \ref{lem:help2}  to see that $$\displaystyle{\min_{\substack{(\alpha,\mu)\in\Cc_{k_0}\\ \overline{\Ell}(\alpha,\mu)\leq 0}} \alpha^2+\mu^2} = \sup_{\zeta\geq0}\displaystyle{\min_{\substack{(\alpha,\mu)\in\Cc_{k_0}\\ \overline{\Ell}(\alpha,\mu)\leq \zeta}} \alpha^2+\mu^2},$$ and then invoking the $\epsilon$-definition of supremum. 


\subsubsection{Proof of  \eqref{eq:cond2}}\label{sec:cond2}
~Fix any $\eps>0$. It suffices to prove that for all sufficiently large $n$:
\bea\label{eq:det_UB_AO}
\phi_{k_0}^{(n)} \leq \Big\{ \min_{ \substack{ (\alpha,\mu)\in\Cc_{k_0} \\ \overline{\Ell}(\alpha,\mu) \leq 0 }} \alpha^2+\mu^2 \Big\} \,+\, {\eps}.
\eea
To see why this is sufficient for \eqref{eq:cond2} to hold, recall from \eqref{eq:qstarwhy} of Lemma \ref{lem:detAO} that the RHS above is equal to $(q^\star)^2+\eps$.

\vp
\noindent\underline{Proof of \eqref{eq:det_UB_AO}.}
From \eqref{eq:uni}, the function $(\alpha, \mu)\mapsto \Ell_n(\alpha,\mu)$ converges  to $\overline{\Ell}(\alpha,\mu)$ defined in \eqref{eq:oell} uniformly over the compact set $\Cc_{k_0}$. Concretely, for any $\delta>0$, for all sufficiently large $n$:
$
\Ell_n(\alpha,\mu)\leq \overline{\Ell}(\alpha,\mu)+\delta,~ \forall(\alpha,\mu)\in\mathcal{C}_{k_0}.
$
Thus,
\bea\label{eq:help21}
\phi_{k_0}^{(n)}\leq  \displaystyle{\min_{\substack{(\alpha,\mu)\in\Cc_{k_0}\\ \overline{\Ell}(\alpha,\mu)\leq -\delta}} \alpha^2+\mu^2}.
\eea
We may now deduce \eqref{eq:det_UB_AO} from \eqref{eq:help21}, by first applying Lemma \ref{lem:help2}  to see that $$\displaystyle{\min_{\substack{(\alpha,\mu)\in\Cc_{k_0}\\ \overline{\Ell}(\alpha,\mu)\leq 0}} \alpha^2+\mu^2} = \inf_{\zeta\geq0}\displaystyle{\min_{\substack{(\alpha,\mu)\in\Cc_{k_0}\\ \overline{\Ell}(\alpha,\mu)\leq -\zeta}} \alpha^2+\mu^2},$$ and then invoking the $\epsilon$-definition of infimum.

\subsubsection{Proof of \eqref{eq:cond3}}\label{sec:cond3}
The proof has two parts. 

\vp
In the first part, we  show that for any $\zeta>0$ and for sufficiently large $n$ (independent of $k$) it holds that:
\bea\label{eq:pertstep1}
\widetilde\phi_{k}^{(n)} \geq \Big\{\displaystyle{\min_{\substack{(\alpha,\mu)\in\Cc_{k_0}, (\alpha,\mu)\not\in \Sc\\ \overline{\Ell}(\alpha,\mu)\leq 0 }} \alpha^2+\mu^2} \Big\} -\zeta =:\widetilde{q}^2 -\zeta,
\eea
where recall the definition of the set $\Sc$ in \eqref{eq:Sc}.
Recall from Section \ref{sec:pre_SVM} that  
$$
\widetilde\phi_{k}^{(n)} = \lim_{m\rightarrow\infty} \displaystyle{\min_{\substack{(\alpha,\mu)\in\Cc_{k_0} \\ (\alpha,\mu)\not\in \Sc}}} \max_{0\leq\theta \leq m} \theta\, {\Ell}_n(\alpha,\mu) +\alpha^2 + \mu^2
$$
Note that the objective function above is convex in $(\alpha,\mu)$. Hence, we proceed as argued in Remark \ref{rem:minsup} to show that
\bea
\widetilde\phi_{k}^{(n)} = \displaystyle{\min_{\substack{(\alpha,\mu)\in\Cc_{k_0} \\ (\alpha,\mu)\not\in \Sc}}}  \sup_{\theta\geq 0} \theta\, {\Ell}_n(\alpha,\mu) +\alpha^2 + \mu^2 = \displaystyle{\min_{\substack{(\alpha,\mu)\in\Cc_{k_0}, (\alpha,\mu)\not\in \Sc \\ {\Ell}_n(\alpha,\mu) \leq 0}}}  \alpha^2 + \mu^2.\label{eq:startingfrom}
\eea
We may now prove \eqref{eq:pertstep1} starting from \eqref{eq:startingfrom} by using uniform convergence \eqref{eq:uni}. The proof of this step is identical to the proof of \eqref{eq:question_LB} and \eqref{eq:det_LB_AO} and is omitted for brevity.

\vp
In the second part of the proof, we show that there exists $\zeta>0$ such that 
\bea
\qt^2 \geq (q^\star)^2+2\zeta.\label{eq:desired_pert_112}
\eea
This will complete the proof of \eqref{eq:cond3}. Indeed, starting from \eqref{eq:pertstep1} and using $\zeta$ such that \eqref{eq:desired_pert_112} holds, we find that for sufficiently large $n$:
$$
\widetilde\phi_{k}^{(n)} \geq \widetilde{q}^2- \zeta  \geq (q^\star)^2 + \zeta, $$
as desired.

It remains to prove \eqref{eq:desired_pert_112}. First, following the same re-parametrization as in \eqref{eq:repam} (see Section \ref{sec:detAO} for identical derivations) we can express $\widetilde{q}^2$ as follows:

\bea\label{eq:det_LB_AO_perturbed_1}
\widetilde{q}^2 = \min\left\{ q^2~\Big|~ 0\leq q \leq k , \rho\in[-1,1] \text{ and } (q,\rho)\notin \Sc_q\times\Sc_\rho \text{ and } \eta(q,\rho) \leq 0 \right\},
\eea
where
$\Sc_\rho:=\{ \rho~|~|\rho-\rho^\star|<\ksi\}$, $\Sc_q:=\{ q~|~|q-q^\star|<\ksi\}$, and $\ksi>0$ is small enough constant chosen such that 
$
(\alpha,\mu)\in\Sc \,\Rightarrow\, (q,\rho)\in \Sc_q\times\Sc_\rho.
$
To proceed, we consider two cases as follows.

\vp
\noindent\underline{Case 1: $\rho\in[-1,1]$.} First, consider the case in which $\widetilde q$ is as follows:
$$
{\widetilde q}^2= \min\left\{ q^2~\Big|~ 0\leq q \leq k , ~q\notin \Sc_q \text{ and } \min_{\rho\in[-1,1]}\eta(q,\rho) \leq 0 \right\}.
$$
We will use the following facts for the function  $\widetilde\eta(q):= \min_{\rho\in[-1,1]}\eta(q,\rho)$: (i) it is decreasing; (ii) it has a unique zero $q^\star$. See Lemma \ref{lem:eta_properties} in Appendix \ref{app:eta} for a proof of these claims. From these and the constraint $|q-q^\star|\geq \ksi$, it is clear that ${\widetilde q}=q^\star + \ksi$. Hence, 
$
{\widetilde q}^2 > (q^\star)^2 + \ksi^2,
$
and \eqref{eq:desired_pert_112} holds after choosing $\zeta = \ksi^2/2>0$.

\vp
\noindent\underline{Case 2: $|\rho-\rho^\star|>\ksi$.} Second, consider the case in which $\widetilde q$ is as follows:
$$
{\widetilde q}^2= \min\Big\{ q^2~\Big|~ 0\leq q \leq k , \min_{\substack{\rho\in[-1,1] \\ |\rho-\rho^\star|>\ksi}}\eta(q,\rho) \leq 0 \Big\}.
$$
 For the sake of contradiction to \eqref{eq:desired_pert_112}, assume that ${\widetilde q}\leq q^\star$. By Lemma \ref{lem:eta_properties} in Appendix \ref{app:eta} the function $q\mapsto \min_{\substack{\rho\in[-1,1] \\ |\rho-\rho^\star|>\ksi}}\eta(q,\rho)$ is strictly decreasing. From this and definition of $\widetilde q$, it follows that $\min_{\substack{\rho\in[-1,1] \\ |\rho-\rho^\star|>\ksi}}\eta(q^\star,\rho) \leq 0 = \eta(q^\star,\rho^\star)$. But, again from Appendix \ref{app:eta} $\rho\mapsto\eta(q^\star,\rho) $ has a unique minimizer $\rho^\star$ in $[-1,1]$. Hence, the above is a contradiction.

\subsection{Proof sketch for GM model}\label{sec:sketch_GM}

For brevity, we only show how to formulate the PO and the corresponding AO under the GM model. The rest of the proof follows mutandis-mutatis the content of Sections \ref{sec:proof_PT} and \ref{sec:SVM_prop}.

Recall that under the GM model, the feature vectors ${\bf x}_i$ are given by:
$
{\bf x}_i=y_i\boldsymbol{\eta}_0+{\bf z}_i
$
where ${\bf z}_i\sim\mathcal{N}({\bf 0},{\bf I}_d)$ and ${y}_i=\pm1$.  Thus, $$\w_i = y_i \betab_0 + \vb_i,~\vb_i\sim\Nn(0,1),$$ where recall that $\w_i:=\x_i(1:p)$, $\vb_i:=\z_i(1:p)$ and $\betab_0=\etab_0(1:p)$.
Replacing ${\bf w}_i=y_i\boldsymbol{\beta}_0+{\bf v}_i$ in \eqref{eq:origprob}, the SVM problem is equivalent to 
$$
\min_{\boldsymbol{\beta}}\max_{u_i\leq 0} ~\|\boldsymbol{\beta}\|_2^2+\frac{1}{\sqrt{n}}\sum_{i=1}^n u_i\boldsymbol{\beta}_0^{T}\boldsymbol{\beta}+\frac{1}{\sqrt{n}}\sum_{i=1}^n u_iy_i{\bf v}_i^{T}\boldsymbol{\beta} -\frac{1}{\sqrt{n}}\sum_{i=1}^n u_i.
$$
Different from the logistic model, here, $y_i$ and ${\bf v}_i$ are independent. Call $\tilde{\bf v}_i=y_i{\bf v}_i$. Then $\tilde{\bf v}_i$ has the same distribution as ${\bf v}_i$ by rotational invariance of the Gaussian distribution. Let ${\bf V}=\left[\tilde{\bf v}_1,\cdots,\tilde{\bf v}_n\right]^{T}$ and ${\bf u}=\left[u_1,\cdots,u_n\right]^{T}$. Then, in matrix form, the above problem becomes:
$$
\min_{\boldsymbol{\beta}}\max_{u_i\leq 0}~ \frac{1}{\sqrt{n}}{\bf u}^{T}{\bf V}\boldsymbol{\beta}-\frac{1}{\sqrt{n}}{\bf u}^{T}{\bf 1} + \|\boldsymbol{\beta}\|_2^2+\frac{{\bf 1}^{T}{\bf u}}{\sqrt{n}}\boldsymbol{\beta}_0^{T}\boldsymbol{\beta}.
$$
In this, we clearly recognize a PO problem with which we associate the following AO problem:
$$
\min_{\boldsymbol{\beta}}\max_{u_i\leq 0}~ \frac{1}{\sqrt{n}}\|\boldsymbol{\beta}\|_2 \g^{T}{\bf u}+\frac{1}{\sqrt{n}} \|{\bf u}\|_2\h^{T}\boldsymbol{\beta}+\frac{{\bf 1}^{T}{\bf u}}{\sqrt{n}}\boldsymbol{\beta}_0^{T}\boldsymbol{\beta}-\frac{1}{\sqrt{n}}{\bf u}^{T}{\bf 1} + \|\betab\|_2^2.
$$
Next, we sketch how to simplify this vector optimization to a scalar one. First decompose $\boldsymbol{\beta}$ as:
$
\boldsymbol{\beta}=\alpha_1\boldsymbol{\beta}_0+\alpha_2\boldsymbol{\beta}_{\perp},
$
where $\alpha_2\geq 0$ and $\boldsymbol{\beta}_{\perp}$ is orthogonal to $\boldsymbol{\beta}_0$ and $\|\boldsymbol{\beta}_{\perp}\|=1$. Similarly, let $\h=(\h^T\betab_0)\frac{\betab_0}{\|\betab_0\|_2} + P_\perp \h$, where $P_\perp$ is the projection operator to a subspace orthogonal to $\betab_0$.  
Optimizing over  $\boldsymbol{\beta}_{\perp}$ is straightforward: $\boldsymbol{\beta}_{\perp}=\frac{P_\perp\h}{\|P_\perp\h\|}$ and further using Lemma \ref{lem:Abla_help1}  the AO  becomes:
$$
\min_{\alpha_1,\alpha_2\geq 0} \max_{{\bf u}\leq 0} ~\alpha_1^2\|\boldsymbol{\beta}_0\|_2^2+\alpha_2^2 +\frac{1}{\sqrt{n}}{\bf u}^{T}\left(\sqrt{\alpha_1^2\|\boldsymbol{\beta}_0\|_2^2+\alpha_2^2}\,\g+{\alpha_1\|\betab_0\|_2^2}\,{\bf 1}-{{\bf 1}}\right)-\alpha_2{\|{\bf u}\|_2}\,\frac{\|P_\perp\h\|}{{\sqrt{n}}} - \alpha_1\|\ub\|_2\frac{\h^T\betab_0}{\sqrt{n}}.
$$
Next, let $\theta={\|{\bf u}\|_2}$ and set $q^2=\alpha_1^2\|\boldsymbol{\beta}_0\|_2^2+\alpha_2^2$. Optimizing over the direction of ${\bf u}$, the problem further simplifies to the following:
$$
\min_{\substack{q\geq 0\\
\alpha_1^2\|\boldsymbol{\beta}_0\|^2\leq q^2}}\max_{\theta\geq 0}~ q^2+\frac{\theta}{\sqrt{n}}\, \Big\|\left(q \g+\alpha_1\|\boldsymbol{\beta}_0\|^2{\bf 1}-{\bf 1}\right)_{-}\Big\|_2-\theta\,\sqrt{q^2-\alpha_1^2\|\betab_0\|^2}\,\frac{\|P_\perp\h\|}{\sqrt{n}} - \theta\alpha_1\frac{\h^T\betab_0}{\sqrt{n}}.
$$
Further defining $\rho=\frac{\alpha_1\|\betab_0\|_2}{q}$ gives:
\bea
\min_{\substack{q\geq 0\\ |\rho|\leq 1}}\max_{\theta\geq 0}~ q^2+q\theta\left\{ \frac{\Big\|\big(\g+ \|\betab_0\|_2 \,\rho\, {\bf 1}-\frac{1}{q}{\bf 1}\big)_{-}\Big\|}{\sqrt{n}}-\frac{\|P_\perp\h\|}{\sqrt{n}}\sqrt{1-\rho^2} - \frac{\rho}{\|\betab_0\|}\frac{\h^T\betab_0}{\sqrt{n}}\right\} .
\label{eq:dfa}
\eea
At this point, recognize that the optimization above is very similar to the corresponding optimization for the logistic model in \eqref{eq:similar1} (under the mapping $\sqrt{\mu^2+\alpha^2}\leftrightarrow q$ and $\mu\leftrightarrow \rho\,q$ and using that $\frac{\h^T\betab_0}{\sqrt{n}}$).
Concretely, in a similar manner to Section \ref{sec:SVM_AO}, consider the sequence of functions in the bracket above 
 $$\Ell^\dagger_n(q,\rho):= q\left\{\left\|\left(\frac{1}{\sqrt{n}}\g+\frac{\|\betab_0\|_2 \rho}{\sqrt{n}}{\one}-\frac{1}{q}\frac{{\bf 1}}{\sqrt{n}}\right)_{-}\right\|_2 - \  \sqrt{1-\rho^2}\frac{\|P_\perp\h\|_2}{\sqrt{n}}  -\frac{\rho}{\|\betab_0\|}\frac{\h^T\betab_0}{\sqrt{n}}\right\}$$ for $q\geq 0$ and $|\rho|\leq 1$. It can be easily seen that  $\Ell_n$ is converges (pointwise) almost surely to:
\begin{align}\label{eq:oellGM}
{\overline{\Ell}^\dagger:(q,\rho)\mapsto q\left\{ \sqrt{\mathbb{E}\left(\,G+\rho\,s-{1}\big/{q}\right)_{-}^2}-\sqrt{1-\rho^2} \sqrt{\kappa} \right\},}
\end{align}
where $G\sim\Nn(0,1)$
and we used the facts that $\frac{\h^T\betab_0}{\sqrt{n}}\ras 0$, $\frac{\|P_\perp\h\|_2}{\sqrt{n}}\ras \sqrt{\kappa}$ and $\|\betab_0\|_2\ras s$. Using convexity and level-boundedness arguments similar to what was done for the logistic model, it can be further shown that the optimal cost of the optimization in \eqref{eq:dfa}  converges almost surely to the optimal cost of the same optimization but with $\overline{\Ell}_n^\dagger(q,\rho)$ substituted by $\overline{\Ell}^\dagger(q,\rho)$.  In \eqref{eq:oellGM} recognize the resemblance to the function $\eta$ defined in Proposition \ref{propo:SVM} (cf. \eqref{eq:eta_func_1}). From this point on, the proof repeats the arguments of Sections \ref{sec:proof_PT} and \ref{sec:SVM_prop} and we omit the details. Just for an illustration, argue as follows to see how the phase-transition threshold of Proposition \ref{propo:PT} appears naturally. The AO of the hard-margin SVM in \eqref{eq:dfa} is infeasible if $\Ell^\dagger_n(q,\rho)$ is positive for all values of $q\geq 0$ and $\rho\in[-1,1]$. In the asymptotic limit, this happens if  $\kappa$ is such that:
$$
\min_{-1\leq \rho\leq 1} \sqrt{\mathbb{E}\left(G+\rho \,s\right)_{-}^2}-\sqrt{\kappa}\sqrt{1-\rho^2}>0\,.
$$
This above condition is equivalent to:
$$
\kappa \leq \min_{-1\leq \rho\leq 1} \mathbb{E}\Big(\frac{1}{\sqrt{1-\rho^2}}G+\frac{\rho}{\sqrt{1-\rho^2}}s\Big)_{-}^2\,.
$$
or, setting $t=\frac{\rho}{1-\rho^2}$,
$$
\kappa \leq \min_{t\in\R} \mathbb{E}(\sqrt{1+t^2}\,H+t\,s)_{-}^2.
$$
In the above line recognize the function $g(\kappa)$ defined in \eqref{eq:g_GM}.



\section{On the solutions to the equation $g(\kappa)=\kappa$ in Proposition \ref{propo:PT}}\label{sec:g(kappa)_proof}

\noindent\textbf{Proof of Proposition \ref{propo:PT}: Part III.}~
In this section, we study the function
$
g(\kappa):=\min_{t\in\R}\E\left(H+t\,G\,Y\right)_-^2.
$
Recall that this is a function of $\kappa$ since the distribution of the random variable $Y$ (cf. \eqref{eq:HGZ}) is a function of the signal strength  $s=s(\kappa)$. 
The following lemma summarizes the main result. This, together with \eqref{eq:proof_PT_step1} and \eqref{eq:proof_PT_step2}, complete the proof of Proposition \ref{propo:PT}.

\begin{lem}\label{lem:Ong} Let $s=s(\kappa)$ be a strictly increasing function of $\kappa\in[0,\zeta]$ for some $\zeta\geq 1$. Then, the function $g:\R_+\rightarrow\R$ defined as follows:
\bea\label{eq:g_again}
g(\kappa):=\min_{t\in\R}\E\left(H+t\,G\,Y\right)_-^2,\qquad Y={\rm Rad}\left(f\Big(\,s(\kappa)G+\sqrt{r^2-s^2(\kappa)}\,Z\,\Big)\right),\quad H,G,Z\simiid\Nn(0,1),
\eea
is strictly decreasing. Therefore, the equation $\kappa=g(\kappa)$ admits a unique solution $\kappa_*\in(0,1/2)$ and one of the following two statements holds true for any $\kappa$:
\bea
\kappa > \kappa_\star &\Rightarrow  \kappa>g(\kappa) \label{eq:app_EQ1}\\
\kappa < \kappa_\star &\Rightarrow  \kappa<g(\kappa).\label{eq:app_EQ2}
\eea
\end{lem}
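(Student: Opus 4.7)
The plan is to prove Lemma \ref{lem:Ong} in two stages: first the strict monotonicity of $g$, then the location of its fixed point. For monotonicity, since $s(\kappa)$ is strictly increasing it suffices to show that $G(s) := \min_t F(s,t)$, with $F(s,t) := \E(H + tGY)_-^2$, is strictly decreasing in $s \in (0, r]$. The function $F(s, \cdot)$ is strictly convex on $\R$: indeed,
\[
\partial_t^2 F(s,t) = 2\,\E\!\big[\,G^2\, \ind{H + tGY < 0}\,\big] > 0,
\]
where strict positivity follows because $H$ is unbounded below, so the indicator event is nonzero with positive probability for every $t$. Hence $t_*(s)$ is unique and the envelope theorem applies, giving $G'(s) = \partial_s F(s, t_*(s))$. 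The crucial computation, and the main obstacle, is the identity
\[
\partial_s F(s,t) = 4\,t\,\E\!\big[(H + tG)_-\, f'(U)\big],
\]
where $U := sG + \sigma Z$ and $\sigma := \sqrt{r^2 - s^2}$. To derive it I would first exploit the symmetry $(G,Z) \leftrightarrow (-G,-Z)$ together with $f(-x) = 1-f(x)$ to rewrite $F(s,t) = 2\,\E[(H+tG)_-^2\, f(U)]$; then introduce the companion Gaussian $V := \sigma G - sZ$, so that $(U,V)$ are independent $\Nn(0, r^2)$ with $G = (sU + \sigma V)/r^2$; and finally apply Gaussian integration by parts in $V$ (conditionally on $U$ and $H$) to collapse the formula.

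Once this identity is in hand, monotonicity follows quickly. The bracket in $\partial_s F$ is strictly negative because $(H+tG)_- \leq 0$ is nonzero on a positive-probability event while $f'(U) > 0$ a.s., so the sign of $\partial_s F(s,t)$ is opposite to that of $t$. To pin down the sign of $t_*(s)$ for $s > 0$, I would compute $\partial_t F(s,0) = 2\,\E[H_-]\,\E[GY]$ using independence of $H$ from $(G,Y)$, and then apply Stein's lemma to obtain $\E[GY] = \E\!\big[G\tanh((sG+\sigma Z)/2)\big] = (s/2)\,\E[\mathrm{sech}^2(U/2)] > 0$; combined with $\E[H_-] = -1/\sqrt{2\pi} < 0$ this yields $\partial_t F(s,0) < 0$, and strict convexity in $t$ forces $t_*(s) > 0$. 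Therefore $G'(s) = \partial_s F(s, t_*(s)) < 0$ for every $s > 0$, and composing with the strictly increasing map $\kappa \mapsto s(\kappa)$ yields strict monotonicity of $g$ on $[0,\zeta]$.

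For the second stage, I would note that $g$ is continuous (by dominated convergence) and satisfies $0 < g(\kappa) \leq 1/2$: the upper bound from evaluating $F$ at $t=0$, namely $\E H_-^2 = 1/2$; the lower bound because $\E(H+tV)_-^2 > 0$ for every $t$ and the function is level-bounded so its minimum is attained. Moreover the inequality $g(\kappa) < 1/2$ is strict as soon as $s(\kappa) > 0$, since then the unique minimizer $t_*(s(\kappa))$ is nonzero. Consequently $h(\kappa) := g(\kappa) - \kappa$ is continuous and strictly decreasing, with $h(0) = g(0) > 0$ and $h(\zeta) \leq 1/2 - \zeta < 0$ (using $\zeta \geq 1$), so the intermediate value theorem yields a unique $\kappa_* \in (0,\zeta)$ with $g(\kappa_*) = \kappa_*$. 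Strict monotonicity of $s$ gives $s(\kappa_*) > s(0) \geq 0$, so $g(\kappa_*) < 1/2$ and hence $\kappa_* < 1/2$. The implications \eqref{eq:app_EQ1}--\eqref{eq:app_EQ2} then follow immediately from the fact that $g$ is strictly decreasing while $\kappa \mapsto \kappa$ is strictly increasing, and they cross only at $\kappa_*$.
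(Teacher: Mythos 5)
Your proof is correct, and it takes a genuinely different route than the paper's. The paper works with the reparameterized function $\overline{g}(\eps)$ (with $\eps = s/r$) and proves monotonicity by a pure substitution argument: it writes $H + tGY_{rQ} \overset{d}{=} (H+tG)Y_{rQ}$, decomposes $T_1 := H+tG$ onto $Q$ as $T_1 = t\eps Q + \sqrt{1+t^2-\eps^2 t^2}\,R$, and then, for $\eps_1 < \eps_2$ with $\delta = \eps_1/\eps_2$, compares $h_{\eps_2}(t\delta)$ with $h_{\eps_1}(t)$ term by term. Because the substitution $t\mapsto t\delta$ leaves the $QY_{rQ}$-coefficient $t\eps_2\delta = t\eps_1$ unchanged and only raises the $R$-coefficient, monotonicity reduces to a single scalar lemma (its Lemma \ref{lem:inc1}) showing that the map $x\mapsto \E(t_1 QY_{rQ} + \sqrt{t_2+t_3 x}R)_-^2$ is increasing, proved by differentiating and Gaussian integration by parts; no envelope theorem and no analysis of the minimizer $t_*$ is needed. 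Your argument instead attacks $G(s) = \min_t F(s,t)$ directly via Danskin's theorem, which requires you to (i) establish strict convexity of $F(s,\cdot)$ so $t_*(s)$ is unique and (ii) pin down the sign of $t_*(s)$; you pay for this extra bookkeeping but you gain the explicit derivative formula $\partial_s F(s,t) = 4t\,\E[(H+tG)_- f'(U)]$, obtained by the nice chain of symmetrizing in $Y$ (so $F = 2\E[(H+tG)_-^2 f(U)]$), introducing the orthogonal companion $V = \sigma G - sZ$, and applying Stein's lemma in $V$. I checked the symmetrization, the Stein step, the computation $\partial_t F(s,0) = 2\,\E[H_-]\cdot \frac{s}{2}\,\E[\mathrm{sech}^2(U/2)] < 0$, and the boundary argument for $\kappa_*\in(0,1/2)$; they are all sound. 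One small remark: the Danskin step needs $F$ to be $C^1$ with $t_*(s)$ locally unique and bounded, which you do get from coercivity and strict convexity, but it is worth saying explicitly that $\partial_t^2 F(s,t_*(s)) > 0$ lets you invoke the implicit function theorem for $t_*(\cdot)$ rather than appealing to a compact-parameter version of Danskin. Also note that your formula involves the factor $1/\sigma$ and so degenerates as $s\uparrow r$; this is harmless for the lemma (strict monotonicity on the open interval plus continuity suffices), but is a case the paper's substitution argument handles without comment.
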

\begin{proof}
Consider the function $\go:[0,1]\rightarrow\R$ defined as follows:
\bea
\go(\eps):=\min_{t\in\R}\E\left(H+t\,G\cdot{\rm Rad}\Big(r \,\big( \eps G + \sqrt{1-\eps^2} Z \big)\Big)\right)_-^2,
\eea
where the expectation is over $G,H,Z\simiid\Nn(0,1)$. Note that 
\bea\label{eq:g2go}
g(\kappa) = \go\big( s(\kappa)/r \big).
\eea
Lemma \ref{lem:g_decreasing} below, shows that $\go(\eps)$ is strictly decreasing in $\eps$ and $\go(0)=1/2$ (see also Figure \ref{fig:geps} for a numerical illustration). Here, we use these facts to prove the desired.

Towards this end, let $\kappa_1<\kappa_2$. Since $s(\cdot)$ is increasing, $s_1:=s(\kappa_1)<s(\kappa_2)=:s_2$. Thus, by \eqref{eq:g2go} and  Lemma \ref{lem:g_decreasing}:
$$
\kappa_1<\kappa_2 ~\Rightarrow~ s(\kappa_1)<s(\kappa_2) ~\Rightarrow~ \go\big( s(\kappa_1)/r \big) > \go\big( s(\kappa_2)/r \big) ~\Rightarrow~ g(\kappa_1)>g(\kappa_2).
$$
Thus, the function $g$ is strictly decreasing. Consider now the equation $g(\kappa)=\kappa$ for $\kappa\in[0,\zeta).$ Clearly, since $g$ is decreasing, if a solution $\kappa_\star$ exists, then it is unique. We now show that such a solution exists in the interval $[0,1/2]$. From Lemma \ref{lem:g_decreasing} the function $g$ is continuous and $g(\kappa)\leq 1/2$ for all $\kappa\geq 0$. For the shake of contradiction assume that there is no $\kappa\in[0,1/2]$ such that $g(\kappa)=\kappa$. Then, it must be that $g(\kappa)>\kappa$ for all $\kappa\in[0,1/2]$. In particular,  $g(1/2)>1/2$, which contradicts the decreasing nature of $g$ and $g(0)\leq  \go(0)=1/2$. 

It remains to prove \eqref{eq:app_EQ1} and \eqref{eq:app_EQ2}. We only prove the first one here, as the second one follows from the exact same argument. By decreasing nature of $g$ and definition of $\kappa_\star$ we have the following implications:
$$
\kappa>\kappa_\star\,\Rightarrow\, g(\kappa) < g(\kappa_\star) = \kappa_\star  \,\Rightarrow\, g(\kappa) < \kappa.
$$
This completes the proof of the lemma.
\end{proof}

%
 

\noindent\textbf{Decreasing nature of $g(\kappa)$.}~ First, we introduce some handy notation. For a (random) variable $X$ we use the following shorthand:
$$Y_X:=\Rad{\rhod(X)}.$$ 

\begin{lem}\label{lem:g_decreasing}
Fix $r>0$. Let $G,H,Z\simiid\Nn(0,1)$ and consider the function $\go:[0,1]\rightarrow\R$ defined as follows
\bea
\go(\eps):=\min_{t\in\R}\E\left(H+t\,G\,Y_{r\,Q}\right)_-^2,
\eea
where the random variable $Q$ is defined as $Q=\eps G + \sqrt{1-\eps^2}Z$. Then, $\go(0)=1/2$ and $\go$ is decreasing in $[0,1]$.
\end{lem}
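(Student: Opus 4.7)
First I handle the base case $\eps=0$ directly: when $\eps=0$, $Q=Z$ is independent of $G$, so the Rademacher sign $Y_{rZ}\in\{\pm 1\}$ is independent of the symmetric Gaussian $G$, and consequently $GY_{rZ}\stackrel{d}{=} G\sim\Nn(0,1)$. Therefore $H+tGY_{rZ}\sim\Nn(0,1+t^2)$ and $\go(0)=\min_{t\in\R}(1+t^2)/2=1/2$.

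The main step is an informative decomposition of $U_\eps:=GY_{rQ}$. Writing $G=\eps Q+\sqrt{1-\eps^2}\,G'$ with $G'\sim\Nn(0,1)$ independent of $Q$, and noting that $Y_{rQ}$ is a function of $Q$ and independent Bernoulli randomness (so $G'\perp(Q,Y_{rQ})$), I get
\[
U_\eps \;=\; \eps\,W+\sqrt{1-\eps^2}\,G'',\qquad W:=QY_{rQ},\quad G'':=G'Y_{rQ},
\]
where $G''\sim\Nn(0,1)$ (since conditionally on $Y_{rQ}=\pm 1$, $G''=\pm G'\sim\Nn(0,1)$) and $G''\perp W$. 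A direct calculation gives the density $f_W(w)=2\phi(w)f(rw)$ on $\R$, and the key feature is that the mean $\mu_W:=\E[W]=\E[Q(2f(rQ)-1)]>0$, since the integrand is nonnegative everywhere and strictly positive off $\{Q=0\}$ (both factors share the sign of $Q$ when $r>0$).

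Introduce the auxiliary function $\Lambda(x):=\E[(H+x)_-^2]$. A short computation yields $\Lambda'(0)=-\sqrt{2/\pi}$ and $\Lambda''(x)=2\Phi(-x)>0$ on $\R$, so $\Lambda$ is strictly convex. For $\eps\in(0,1]$ I reparametrize by $a:=t\eps\in\R$ and $c_\eps:=\sqrt{1-\eps^2}/\eps\in[0,\infty)$, which yields
\[
\go(\eps) \;=\; \min_{a\in\R} F(a,c_\eps),\qquad F(a,c):=\E\!\left[\Lambda\big(a(W+cG'')\big)\right].
\]
Two facts about $F$ drive the conclusion. First, for each fixed $a\neq 0$, $c\mapsto F(a,c)$ is strictly increasing on $[0,\infty)$: for $c_1<c_2$, the coupling $W+c_2G''\stackrel{d}{=}(W+c_1G'')+\sqrt{c_2^2-c_1^2}\,\widetilde G''$ with an independent $\widetilde G''\sim\Nn(0,1)$, combined with strict conditional Jensen applied to the strictly convex $\Lambda$ and the non-degenerate Gaussian noise, yields $F(a,c_1)<F(a,c_2)$. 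Second, the minimizer in $a$ is nonzero for every $c\geq 0$: $\partial_a F(a,c)|_{a=0}=\Lambda'(0)\,\E[W+cG'']=-\sqrt{2/\pi}\,\mu_W<0$, so $F(a,c)<\Lambda(0)=1/2=F(0,c)$ for small $a>0$.

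These two facts immediately give the lemma. For $0<\eps_1<\eps_2\leq 1$ we have $c_{\eps_1}>c_{\eps_2}\geq 0$, and picking any minimizer $a_1^\star\neq 0$ of $F(\cdot,c_{\eps_1})$,
\[
\go(\eps_2)\;\leq\;F(a_1^\star,c_{\eps_2})\;<\;F(a_1^\star,c_{\eps_1})\;=\;\go(\eps_1).
\]
The edge case $\eps_1=0<\eps_2$ is covered by the second fact above applied at $c=c_{\eps_2}$, giving $\go(\eps_2)<1/2=\go(0)$. The main technical nuisance will be verifying coercivity of $a\mapsto F(a,c)$ so that the minimizer $a_1^\star$ is actually attained; this follows from $\Lambda(x)\sim x^2/2$ as $x\to-\infty$ together with the fact that $W+cG''$ takes values of both signs with positive probability (its law has a density that is strictly positive on $\R$), which forces $F(a,c)\to\infty$ as $|a|\to\infty$.
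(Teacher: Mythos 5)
Your proof is correct, and while its skeleton matches the paper's, it travels a genuinely different route to the key monotonicity. Both arguments decompose $GY_{rQ}$ into a component along $W:=QY_{rQ}$ plus an independent standard Gaussian and reparametrize so that $\go(\eps)$ depends on $\eps$ only through a noise coefficient that shrinks as $\eps$ grows: the paper decomposes the auxiliary Gaussian $T_1:=HY_{rQ}+tG$ on $Q$, you decompose $G$ on $Q$ directly, but after folding in $H$ both reach the same object $\E\big(t\eps\,W+\sqrt{1+t^2(1-\eps^2)}\,R\big)_-^2$ (your $F(a,c_\eps)$ with $a=t\eps$). Where you diverge is in certifying strict monotonicity in the noise scale. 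The paper (Lemma~\ref{lem:inc1}) differentiates in the variance parameter and applies Gaussian integration by parts; you instead pre-integrate $H$ into $\Lambda(x)=\E(H+x)_-^2$, compute $\Lambda''(x)=2\Phi(-x)>0$, and use a Gaussian coupling plus conditional Jensen. Your version buys two things. First, it is self-contained and avoids the $t_3>0$ hypothesis of Lemma~\ref{lem:inc1}, which silently makes the paper's pointwise strict inequality $h_{\eps_2}(t\delta)<h_{\eps_1}(t)$ fail at $t=0$; you close exactly that gap by computing $\partial_a F(a,c)\big|_{a=0}=-\sqrt{2/\pi}\,\mu_W<0$ (after checking $\mu_W=\E[Q(2f(rQ)-1)]>0$), which pushes the minimizer to a nonzero $a$ and hence yields strict decrease of $\go$, a fact the paper's write-up only delivers weakly at this point. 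Second, isolating $\Lambda$ and the density $f_W(w)=2\phi(w)f(rw)$ makes the role of the sign of $r$ (via $\mu_W>0$) explicit. One small presentational caveat: the orthogonal decomposition $G=\eps Q+\sqrt{1-\eps^2}G'$ degenerates at $\eps=1$, but as you implicitly note this is harmless because the law of $W$ does not depend on $\eps$ and $F(a,0)=\E[\Lambda(aW)]$ is exactly $\go(1)$ in disguise.
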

\begin{proof}
First, we rewrite $\go(\eps)$ in a more convenient form. Note that 
$$H+t\,G\,Y_{r\,Q}=H\,Y_{r\,Q}^2+t\,G\,Y_{r\,Q}=(H\,Y_{r\,Q}+t\,G)Y_{r\,Q}\sim(H+t\,G)Y_{r\,Q}\sim T_1Y_{rQ},$$
where 
$$
\begin{bmatrix}
T_1 \\ Q
\end{bmatrix} \sim
\Nn\left(
\begin{bmatrix}
0\\ 0
\end{bmatrix} \, ,\,
\begin{bmatrix}
1+t^2 & t\eps \\ t\eps & 1 
\end{bmatrix} 
\right).$$
 Here, we have used the fact that $Y_{r\, Q}\in\{\pm1\}$ and $Q$ is independent of $H$. 
By further decomposing $T_1$ on its projection on $Q$, i.e., $$T_1=t\eps Q + \sqrt{1+t^2-\eps^2t^2}R,$$ 
with $R\sim\Nn(0,1)$, $\E[RQ]=0$, we find that
$$
H+t\,G\,Y_{r\,Q}\sim t\,\eps\,QY_{rQ} +  \sqrt{1+t^2-\eps^2t^2}R.
$$
Thus, we have shown that
\bea\label{eq:go2h}
\go(\eps) = \min_{t\in\R}\E\left(t\,\eps\,QY_{rQ} +  \sqrt{1+t^2-\eps^2t^2}R\right)_-^2 =:  \min_{t\in\R} h_\eps(t).
\eea
 From this, it is easy to see that the function $\go(\eps)$ is continuous and that $\go(0)=1/2$. 
 
 Next, we show that the function $\go(\eps)$ is decreasing. Towards this goal,
%
%
let $\eps_1<\eps_2$. It will suffice to show that $g(\eps_1)\geq g(\eps_2)$. 
Denote $\delta:=\frac{\eps_1}{\eps_2}<1$.For any $t\in\R$, we have the following chain of inequalities
\bea\nn
h_{\eps_2}(t\delta) &=  \E\left( t\,\eps_1\, Q\,Y_{rQ} + \sqrt{1-t^2\eps_1^2+t^2\delta^2}\,R \right)_-^2 \\
&<\E\left( t\,\eps_1\, Q\,Y_{rQ} + \sqrt{1-t^2\eps_1^2+t^2}\,R \right)_-^2 =  h_{\eps_1}(t ),\label{eq:fixt}
\eea
where the inequality follows from Lemma \ref{lem:inc1} below and the fact that $\delta<1$. The desired claim follows by minimizing both sides of the inequality in \eqref{eq:fixt} with respect to $t$ and invoking \eqref{eq:go2h}.
\end{proof}

\begin{lem}\label{lem:inc1}
Fix $t_1\in\R$ and $t_2\geq 0,r\geq 0, t_3>0$. Consider the function  $f(x)= \E[(t_1\,Q\,Y_{rQ}+\sqrt{t_2+t_3 x}\,R)_-^2]$ where $Q, Y_{rQ}$ are defined as in Lemma \ref{lem:g_decreasing} and $R\sim\Nn(0,1)$. Then, the function $f$ is increasing in $[0,1]$.
\end{lem}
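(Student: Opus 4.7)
The plan is to reduce the claim to a pointwise Gaussian-smoothing statement. Since $R$ arises in Lemma~\ref{lem:g_decreasing} as the Gaussian residual orthogonal to $Q$ in the decomposition of $T_1$, it is independent of the pair $(Q,Y_{rQ})$. Setting $A:=t_1\,Q\,Y_{rQ}$ and conditioning on $A$, I may rewrite
\[
f(x)=\E\!\left[\phi\!\left(A,\sqrt{t_2+t_3 x}\right)\right],\qquad \phi(a,\sigma):=\E_R\!\left[(a+\sigma R)_-^2\right].
\]
Because $t_3>0$, the map $x\mapsto\sqrt{t_2+t_3x}$ is strictly increasing on $[0,1]$, so it suffices to prove that $\sigma\mapsto\phi(a,\sigma)$ is non-decreasing on $\R_+$ for every fixed $a\in\R$.

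For the latter, I would fix $0\le\sigma_1<\sigma_2$, set $c:=\sqrt{\sigma_2^2-\sigma_1^2}>0$, and introduce an auxiliary $R'\sim\Nn(0,1)$ independent of $R$. By the convolution property of Gaussians, $\sigma_2 R \stackrel{d}{=} \sigma_1 R + c R'$, so with $B:=a+\sigma_1 R$,
\[
\phi(a,\sigma_2)=\E\!\left[(B+cR')_-^2\right]=\E\!\left[\,\E\!\left[(B+cR')_-^2\,\big|\,B\right]\right].
\]
Applying Jensen's inequality to the convex function $x\mapsto x_-^2$ on the inner conditional expectation gives $\E[(B+cR')_-^2\mid B]\ge B_-^2$, and taking outer expectations yields $\phi(a,\sigma_2)\ge\E[B_-^2]=\phi(a,\sigma_1)$, as desired. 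This furnishes the monotonicity statement for $\phi(a,\cdot)$ and hence for $f$.

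If the strict inequality used in~\eqref{eq:fixt} is required, I would sharpen the final step by verifying that $\E[(a+cR')_-^2]>a_-^2$ holds strictly for every $a\in\R$ and every $c>0$: for $a\ge 0$ this is immediate since $\Pro(a+cR'<0)>0$, while for $a<0$ it follows from the identity $\E[(a+cR')_-^2]=a^2+c^2-\E[(a+cR')_+^2]$ together with the elementary bound $\E[(a+cR')_+^2]\le\E[(cR')_+^2]=c^2/2<c^2$, which is valid for $a\le 0$ by monotonicity of $\E[(a+cR')_+^2]$ in $a$. The main (minor) obstacle is justifying $R\perp(Q,Y_{rQ})$ so that the conditioning on $A$ is meaningful; this is transparent from the orthogonal decomposition used to construct $R$ in Lemma~\ref{lem:g_decreasing}. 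A more computational alternative would be to derive $\phi(a,\sigma)=(a^2+\sigma^2)\Phi(-a/\sigma)-a\sigma\,\varphi(-a/\sigma)$ by direct integration for $\sigma>0$ and differentiate to obtain $\partial_\sigma\phi(a,\sigma)=2\sigma\,\Phi(-a/\sigma)>0$, which yields the same conclusion.
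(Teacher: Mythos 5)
Your proof is correct, but it takes a genuinely different route from the paper's. The paper differentiates under the integral sign: it shows the pointwise derivative of $(t_1\,Q\,Y_{rQ}+\sqrt{t_2+t_3 x}\,R)_-^2$ is dominated by an integrable envelope, applies dominated convergence to interchange derivative and expectation, and then uses Gaussian integration by parts to turn $\E\big[R\,(t_1QY_{rQ}+\sqrt{t_2+t_3x}\,R)_-\big]$ into a (manifestly positive) probability, giving $f'(x)>0$ directly. You instead bypass differentiation entirely: you condition on $A=t_1QY_{rQ}$ (valid, since $R$ is by construction the Gaussian component of $T_1$ orthogonal to $Q$, hence independent of $(Q,Y_{rQ})$), decompose $\sigma_2 R\stackrel{d}{=}\sigma_1 R+cR'$, and apply Jensen to the convex map $y\mapsto y_-^2$; your sharpening (splitting $a\ge 0$ from $a<0$ and using $\E[(a+cR')_+^2]\le c^2/2<c^2$) correctly upgrades this to strict monotonicity, which is what \eqref{eq:fixt} actually uses. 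Your argument buys robustness and generality: it needs only convexity of the outer function (so it immediately extends beyond $(\cdot)_-^2$), it never requires an integrable domination envelope, and it is insensitive to the boundary case $t_2=0$, where the paper's bound $|\partial_x F|\leq \tfrac{t_3}{\sqrt{t_2}}(\cdots)$ degenerates and would need a separate argument on $[\delta,1]$. The paper's approach, conversely, yields the explicit sign and form of $f'$, which some readers may find more transparent, and it mirrors the Gaussian-integration-by-parts machinery already used throughout the appendix. Both are valid; yours is arguably cleaner for the purpose of establishing (strict) monotonicity alone.
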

\begin{proof}
The claim follows by direct differentiation of $f$. Formally, let $F(Q,R;x):=(t_1\,Q\,Y_{rQ}+\sqrt{t_2+t_3 x}\,R)_-^2$. Then, $|\frac{\vartheta}{\vartheta x}F(Q,R;x)|:=\frac{t_3}{\sqrt{t_2+t_3x}}|R(t_1\,Q\,Y_{rQ}+\sqrt{t_2+t_3 x}\,R)_-|\leq \frac{t_3}{\sqrt{t_2}}(|t_1||R||Q|+\sqrt{t_2+t_3}R^2)=:M$ a.s. for all $x\in[0,1]$. Clearly, $ \E[M]<\infty$, thus, by dominated convergence theorem, we have
\bea\nn
f^\prime(x)=\frac{t_3}{\sqrt{t_2+t_3x}}\,\E\left[\,R(t_1\,Q\,Y_{rQ}+\sqrt{t_2+t_3 x}\,R)_-\right] = t_3\E\left[\ind{t_1\,Q\,Y_{rQ}+\sqrt{t_2+t_3 x},\,R<0}\right]> 0 \,, 
\eea
where the second equality follows by Gaussian integration by parts.
\end{proof}

\begin{figure}[h!]
  \begin{center}
    \includegraphics[width=.5\textwidth]{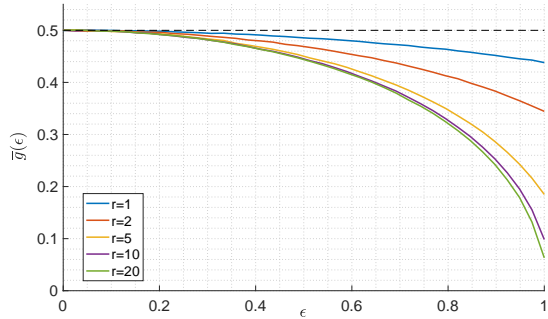}
  \end{center}
  \caption{Numerical illustration of the monotonicity of the threshold function $\go(\eps)$ of Lemma \ref{lem:g_decreasing} for several values of $r$.}\label{fig:threshold_monotone}
  \label{fig:geps}
\end{figure}

\section{Properties of the function $\eta(q,\rho)$ of Proposition \ref{propo:SVM} }\label{app:eta}

\begin{lem}\label{lem:eta_properties}
Consider the function $\eta:\R_+\times[-1,1]\rightarrow\R$
$$
\eta(q,\rho):=\min_{-1\leq \rho\leq 1} \E\left(\rho GS + H\sqrt{1-\rho^2}-1/q\right)_-^2 - (1-\rho^2)\,\kappa.
$$
Further define function  $\etao:\R_+\rightarrow\R$ as follows:
$$\etao(q):= \min_{-1\leq \rho\leq 1}\eta(q,\rho).$$
The following statements are true:

~~\noindent(i).~ The function $q\mapsto \eta(q,\rho)$ is strictly decreasing for all $\rho\in[-1,1]$.

~~\noindent(ii).~ The function $\etao$ is strictly decreasing.
\end{lem}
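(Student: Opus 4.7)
The plan is to establish part (i) by a direct derivative computation in $q$ and then to deduce part (ii) immediately as an envelope-style corollary. Throughout, I would adopt the convention (consistent with Proposition~\ref{propo:SVM}) that $\eta(q,\rho)=\E(\rho V+H\sqrt{1-\rho^2}-1/q)_-^2-(1-\rho^2)\kappa$, where $V=V_{r,s}$ is the random variable specified by the data-generation model.

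For part (i), I would fix $\rho\in[-1,1]$, observe that the term $-(1-\rho^2)\kappa$ does not depend on $q$, and set $X_\rho:=\rho V+H\sqrt{1-\rho^2}$. The monotonicity claim then reduces to showing that $q\mapsto\E(X_\rho-1/q)_-^2$ is strictly decreasing on $(0,\infty)$, or, after substituting $u=1/q$, that $u\mapsto\E(X_\rho-u)_-^2$ is strictly increasing on $(0,\infty)$. Differentiation under the expectation, justified by the envelope $2(|X_\rho|+|u|)$ which is integrable, yields
\[
\frac{d}{du}\,\E(X_\rho-u)_-^2 \;=\; -2\,\E[(X_\rho-u)_-] \;=\; 2\,\E\bigl[(u-X_\rho)_+\bigr],
\]
and strict positivity of this derivative is equivalent to $\Pro(X_\rho<u)>0$. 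For $\rho\in(-1,1)$ this is immediate, because $H\sqrt{1-\rho^2}$ is a nondegenerate Gaussian independent of $V$, giving $X_\rho$ full support on $\R$. For the boundary values $\rho=\pm1$ the Gaussian smoothing disappears and $X_\rho=\pm V$, so I would verify the claim using the explicit structure of $V$ under each data model: under the logistic model $V=GY$ with $Y\in\{\pm1\}$ and $G\sim\mathcal{N}(0,1)$ independent of $H$, which yields a symmetric distribution with full support; under the Gaussian-mixture model $V=G+s$ is itself Gaussian. Either case gives the required strict positivity and hence part (i).

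For part (ii), continuity of $\rho\mapsto\eta(q,\rho)$ on the compact interval $[-1,1]$ guarantees that the minimum defining $\overline\eta(q)$ is attained at some $\rho_q^\star\in[-1,1]$. For any $q_1<q_2$, the chain
\[
\overline\eta(q_2) \;\leq\; \eta(q_2,\rho_{q_1}^\star) \;<\; \eta(q_1,\rho_{q_1}^\star) \;=\; \overline\eta(q_1)
\]
then delivers strict monotonicity, the middle strict inequality being part (i) applied at the fixed direction $\rho=\rho_{q_1}^\star$. The only mildly delicate step in the whole argument is the boundary case $\rho=\pm 1$ in part (i), where one cannot rely on the smoothing Gaussian $H$ and must instead invoke the distribution of $V$ itself; every other step is a routine differentiation-under-the-expectation and a one-line envelope argument.
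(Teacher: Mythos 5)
Your proof is correct and follows essentially the same route as the paper's. For part (ii) your argument is verbatim the paper's: pick the minimizing $\rho_1$ at $q_1$, use it as a test point at $q_2>q_1$, and apply part (i) to get the strict inequality. For part (i) the paper gives a one-line assertion (that $x\mapsto(x)_-^2$ is strictly decreasing for $x<0$ and that $\rho GY+\sqrt{1-\rho^2}H$ has positive density on all of $\R$); your version fleshes this out with an explicit differentiation under the expectation and a careful check that $\Pro(X_\rho<u)>0$, including the boundary cases $\rho=\pm1$ where the Gaussian smoothing by $H$ vanishes and one must fall back on the support of $V$ itself. That boundary check is a genuine refinement over the paper's terse statement, though it does not change the substance of the argument.
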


\begin{proof} We prove each one of the two statements separately.

\noindent(i).~ This holds because $x\mapsto(x)_-^2$ is strictly decreasing for $x<0$ and the measure of the random variable $\rho GY+ \sqrt{1-\rho^2} H$ is strictly positive on the real line. 

\noindent(ii).~ Fix $q_2>q_1>0$. Let $\rho_1\in[-1,1]$ be such that $\eta(q_1,\rho_1) = \etao(q_1)$. It holds,
$$
\etao(q_2) \leq  \eta(q_2,\rho_1) <  \eta(q_1,\rho_1) = \etao(q_1),
$$
where the second inequality follows from the first statement of the lemma.

\end{proof}

\noindent\textbf{Unique zero of $q\mapsto \min_{-1\leq \rho\leq 1}\eta(q,\rho)$:}~
We show here that the function $\etao:\R_+\rightarrow\R$:
$$\etao(q):= \min_{-1\leq \rho\leq 1}\eta(q,\rho) = \min_{-1\leq \rho\leq 1} \E\left(\rho GS + H\sqrt{1-\rho^2}-1/q\right)_-^2 - (1-\rho^2)\,\kappa,$$
 defined in \eqref{eq:eta_func} admits a unique zero $q^\star$ provided that . Recall that $\kappa>\kappa_\star$ implies $\kappa > g(\kappa)$ (see Lemma \ref{lem:Ong}).

First, we show that such a zero  exists. From the maximum Theorem \cite[Theorem 9.7]{Sundaram}, the function $\etao(q)$ is continuous. Moreover, we will show that
\bea
\lim_{q\rightarrow0}\etao(q) &= \infty,\label{eq:q_0}\\
\lim_{q\rightarrow\infty}\etao(q) &< 0.\label{eq:q_inf}
\eea
These combined prove existence of a solution to $\etao(q)=0$; thus it remains to prove \eqref{eq:q_0} and \eqref{eq:q_inf}. For the former, we argue as follows:
\begin{align*}
\lim_{q\rightarrow0} \etao(q) &\geq  \lim_{q\rightarrow0} \min_{-1\leq \rho\leq 1} \E\left(\rho GS + H\sqrt{1-\rho^2}-1/q\right)_-^2 \\
&\geq \lim_{q\rightarrow0} \min_{-1\leq \rho\leq 1} \E\Big[\left(\rho GS + H\sqrt{1-\rho^2}-1/q\right)_-^2 1_{\{\rho GS + H\sqrt{1-\rho^2}\leq 0\}}\Big]\\
&\geq \lim_{q\rightarrow0} \min_{-1\leq \rho\leq 1} \frac{1}{q^2} = \infty\,,
\end{align*}
where in the last inequality we have used the facts that $x\mapsto(x)_-^2$ is decreasing and that the event $\{\rho G S + H \sqrt{1-\rho^2}\leq 0\}$ has nonzero measure for all $\rho\in[-1,1]$. On the other hand, for \eqref{eq:q_inf} we follow the following chain of inequalities
\begin{align}
\lim_{q\rightarrow\infty} \etao(q) &= \lim_{1/q\rightarrow0^+} \etao(q) = \min_{-1\leq \rho\leq 1} \E\left(\rho GS + H\sqrt{1-\rho^2}\right)_-^2 - (1-\rho^2)\kappa \nn\\
&\leq \min_{-1<\rho< 1} \E\left(\rho GS + H\sqrt{1-\rho^2}\right)_-^2 - (1-\rho^2)\kappa\nn\\
&= \min_{-1< \rho< 1} (1-\rho^2)\, \left( \E\Big(\frac{\rho}{\sqrt{1-\rho^2}} GS + H\Big)_-^2 -\kappa \right)\nn\\
&\leq \min_{-1< \rho< 1} \E\Big(\frac{\rho}{\sqrt{1-\rho^2}} GS + H\Big)_-^2\nn \\
&\leq \min_{t\in\R} \E\Big(t GS + H\Big)_-^2 - \kappa = g(\kappa) - \kappa < 0,
\end{align}
where the second inequality in the first line follows by continuity of $\etao(q)$ and the last (strict) inequality follows from $\kappa > g(\kappa)$.

Next, we prove that such a zero must be unique. For that we assume that there exists $q_1^\star$ and $q_2^\star$ such that:
$$
\min_{-1\leq \rho\leq 1} \eta(\rho,q_1^\star)=\min_{-1\leq \rho\leq 1} \eta(\rho,q_2^\star).
$$
Denote by $\rho_1$ and $\rho_2$ the real values in the interval $[-1,1]$ such that $\eta(q_1^\star,\rho_1)=\min_{-1\leq \rho\leq 1}\eta(q_1^\star,\rho)=0$, and   $\eta(q_2^\star,\rho_2)=\min_{-1\leq \rho\leq 1}\eta(\rho,q_2^\star)=0$. Then, by optimality of $\rho_1$,
$$
0=\eta(q_1^\star,\rho_1)\leq \eta(q_1^\star,\rho_2).
$$
Since $0=\eta(q_2^\star,\rho_2)$ and $q\mapsto \eta(q,\rho_2)$ is decreasing (see Lemma \ref{lem:eta_properties}),  $q_1^\star\geq q_2^\star$. Similarly, we can use the same reasoning to prove that $q_2^\star\geq q_1^\star$. We thus necessarily have $q_1^\star=q_2^\star$, which proves the uniqueness of $q^\star$. 

\vspace{5pt}
{\noindent\textbf{Unique minimizer of $\rho\mapsto \eta(\rho,q^\star)$:}}
Let $\rho_1$ and $\rho_2$ be two minimizers of  $\rho\mapsto \eta(\rho,q^\star)$ . 
Define $\mu_1=q^\star \rho_1$, $\alpha_1=q^\star\sqrt{1-\rho_1^2}$, $\mu_2=q^\star\rho_2$ and $\alpha_2=q^\star \sqrt{1-\rho_2^2}$,  where $q^\star$ is the unique minimizer of $\overline{\eta}$. Then, in view of \eqref{eq:ineta}, $(\alpha_1,\mu_1)$ and $(\alpha_2,\mu_2)$ are two different minimizers of the following minimization problem:
\bea\nn
\min_{ \substack{ (\alpha,\mu)\in\Cc_{k_0} \\ \overline{\Ell}(\alpha,\mu) \leq 0 }} \alpha^2+\mu^2.
\eea
As $(\alpha,\mu)\mapsto \alpha^2+\mu^2$ is jointly strictly convex in its variables and $(\alpha,\mu)\mapsto \overline{\Ell}(\alpha,\mu)$ is convex, we have necessarily $\alpha_1=\alpha_2$ and $\mu_1=\mu_2$.  Hence, $\rho_1=\rho_2$. 

\section{Logistic regression}\label{sec:proof_ML}

We only present the proof for the logistic model \eqref{eq:yi_log} as there is nothing fundamentally changing for the GM model \eqref{eq:yi_GM}. Specifically, the PO can be identified following the exact same procedure as in Section \ref{sec:sketch_GM} and the analysis of the AO uses the same arguments as what is presented below.

\subsection{Proof of Proposition \ref{propo:ML}}
\noindent\textbf{Identifying the PO.}
The logistic-loss minimization in \eqref{eq:ML} is equivalent to: 
\bea\label{eq:mmbn}
\min_{\ub,\betab} \max_{\vb} \frac{1}{n} \sum_{i=1}^{n}\ell(u_i) - \frac{1}{n}\sum_{i=1}^{n}v_i u_i +\frac{1}{n}\sum_{i=1}^{n}v_i y_i \w_i^T \betab,
\eea
where, recall from \eqref{eq:y_pf_SVM}, that the responses $y_i$ depend on the feature vectors $\w_i$ used for training as follows: $y_i \sim \Rad{f(\betab_0^T\,\w_i+\sigma z_i)},~z_i\sim\Nn(0,1).$
By rotational invariance of the Gaussian distribution of the feature vectors,  we assume without loss of generality that $\betab_0 = [s,0,...,0]^T$. Further let us define 
 \bea\nn
 \w_i = [w_i,\tilde{\w}_i^T]^T\quad\text{ and }\quad\betab = [\mu, \tilde{\betab}^T]^T,
 \eea 
such that $w_i$ and $\mu$ are the first entries of $\w_i$ and $\betab$, respectively. In this new notation
\bea\label{eq:y_pf}
y_i \sim \Rad{f(s\,w_i+\sigma z_i)},~~~z_i\sim\Nn(0,1),
\eea
%
%
%
and the minimization of the logistic loss in \eqref{eq:mmbn}  becomes
$$
\min_{{\bf u},\tilde{\betab},\mu} \max_{\bf v} \frac{1}{n}\sum_{i=1}^n \ell(u_i)-\frac{1}{\sqrt{n}}\sum_{i=1}^n v_i u_i+\frac{1}{\sqrt{n}}\sum_{i=1}^n v_i y_i \tilde{\bf w}_i^{T}\tilde{\betab} +\frac{1}{\sqrt{n}}\sum_{i=1}^n v_i y_i w_i \mu\,.
$$
Equivalently, in matrix form:
\begin{equation}
{\Phi}_{\mathcal{L}}^{(n)}:=\min_{\tilde{\betab},\mu,\ub} \max_{\bf v} \frac{1}{\sqrt{n}}{\bf v}^{T}{\bf D}_y\tilde{\bf W}\tilde{\betab} + \frac{1}{\sqrt{n}}\mu {\bf v}^{T}{\bf D}_y \boldsymbol{\chi} +\frac{1}{\sqrt{n}}{\bf v}^{T}{\bf u} +\frac{1}{n}\sum_{i=1}^n\ell(u_i),
	\label{eq:prop}
\end{equation}
where ${\bf D}_y:={\rm diag}(y_1,\cdots,y_n)$, $\boldsymbol{\chi}=\left[w_1,\cdots,w_n\right]^{T}$ and $\tilde{\bf W}$ is a $n\times (p-1)$ matrix with rows $\tilde{\bf w}_i^{T}$, $i=1,\cdots,n$.  
Problem \eqref{eq:prop} is brought to the form required by the CGMT with the exception that the optimization sets of the variables $(\betab,\mu)$ and ${\bf v}$ are not compact. Thus, we apply the extended versions of the CGMT in Appendix \ref{sec:CGMT_new}. Specifically, in order to use Corollary \ref{cor:characterization}, we apply the recipe prescribed in Remark \ref{rem:recipeCor} as follows.

For $\Bc>0$ large enough constant (to be specified later), consider the set $$\mathcal{S}^{(n)}_{\mathcal{B}}=\left\{(\tilde{\betab},\mu)\in \mathbb{R}^{p-1}\times \mathbb{R} \ \ | \ \ \|\tilde{\betab}\|^2+\mu^2\leq \mathcal{B}^2\right\},$$ and the ``bounded version" of \eqref{eq:prop} given by:
\begin{equation}
{\Phi}_{\mathcal{L,B}}^{(n)}:=\min_{(\tilde{\betab},\mu) \in \mathcal{S}^{(n)}_{\mathcal{B}} , \ub} ~\max_{\bf v} ~{\frac{1}{\sqrt{n}}{\bf v}^{T}{\bf D}_y\tilde{\bf W}\tilde{\betab} + \frac{1}{\sqrt{n}}\mu {\bf v}^{T}{\bf D}_y \boldsymbol{\chi} +\frac{1}{\sqrt{n}}{\bf v}^{T}{\bf u} +\frac{1}{n}\sum_{i=1}^n\ell(u_i)},.
\label{eq:prop_bounded}
\end{equation}
To connect \eqref{eq:prop_bounded} to \eqref{eq:prop} we rely on Lemma \ref{lemma:5}, which ensures that if there exists $\alpha^\star$ and $\mu^\star$ such that for sufficiently large $\mathcal{B}$, the optimizers $\tilde{\betab}_{\mathcal{B}}$ and $\hat{\mu}_{\mathcal{B}}$  of \eqref{eq:prop_bounded} satisfies  $\|\tilde{\betab}_{\mathcal{B}}\|\to \alpha^\star$ and $\hat{\mu}_\Bc\to \mu^\star$, then any minimizer $\hat{\betab}=[\hat{\mu},\tilde{\betab}^T]^T$ \eqref{eq:prop} satisfy  $\|\tilde{\betab}\|\to \alpha^\star$ and $\hat{\mu}\to \mu^\star$, as well. 
In view of this and Remark \ref{rem:recipeCor}, we focus onwards on analyzing the bounded PO problem \eqref{eq:prop_bounded} as prescribed by Corollary \ref{cor:characterization}.

\vp
\noindent\textbf{Forming the AO.}
In view of \eqref{eq:AOpr}, we associate the PO with a sequence of bounded AO problems as follows:
\begin{equation}
{\phi}_{\mathcal{L,B},\Gamma}^{(n)}:=\min_{{(\mu,\tilde{\betab})\in\mathcal{S}^{(n)}_{\mathcal{B}}}, \ub} ~\max_{\substack{{\bf v}\\ \|{\bf v}\|\leq \Gamma}}~ \ \frac{1}{\sqrt{n}} \|\tilde{\betab}\|_2{\bf g}^{T}{\bf D}_y{\bf v} +\frac{1}{\sqrt{n}}\|{\bf D}_y{\bf v}\|_2 {\bf h}^{T}\tilde{\betab} -\frac{1}{\sqrt{n}}\mu {\bf v}^{T}{\bf D}_y\boldsymbol{\chi} + \frac{1}{n}{\bf v}^{T}{\bf u} +\frac{1}{\sqrt{n}}\sum_{i=1}^n\ell(u_i)\,,
	\label{eq:AO}
\end{equation}
where ${\bf g}\sim \mathcal{N}({\bf 0},{\bf I}_n)$  and ${\bf h}\sim\mathcal{N}({\bf 0},{\bf I}_{p-1})$. 
Having identified the sequence of AO problems, we proceed by simplifying them to problems involving only a few number of scalar variables. As will be seen later, this will facilitate inferring their asymptotic behavior as required by the conditions of Corollary \ref{cor:characterization}.\\

\vp
\noindent{\textbf{Scalarization of the AO.}} 
As ${y}_i=\pm 1$, ${\bf D}_y{\bf g}\sim \mathcal{N}({\bf 0},{\bf I}_n)$ and $\|{\bf D}_y{\bf v}\|_2=\|{\bf v}\|$. Denote by $\alpha:=\|\tilde{\betab}\|_2$. For any vector ${\bf v}$, we notice that the objective in   \eqref{eq:AO} is minimized when $\tilde{\betab}$ aligns with $-{\bf h}$. Based on this observation and using Lemma \ref{lem:Abla_help1}, \eqref{eq:AO} becomes:
$$
{\phi}_{\mathcal{L,B},\Gamma}^{(n)}=\min_{\substack{\ub,\alpha\geq 0 \\ \alpha^2+\mu^2\leq \mathcal{B}^2}} \max_{\substack{{\bf v} \\ \|{\bf v}\|\leq \Gamma }} \frac{1}{\sqrt{n}}\alpha {\bf g}^{T}{\bf h}-\frac{\alpha}{\sqrt{n}}\|{\bf v}\|_2\|{\bf h}\|_2  -\frac{1}{\sqrt{n}} \mu \boldsymbol{\chi}^{T}{\bf D}_{\bf y}{\bf v} + \frac{1}{\sqrt{n}}{\bf v}^{T}{\bf u} + \frac{1}{\sqrt{n}}\sum_{i=1}^{n}\ell(u_i)\,.
$$
For convenience, denote
$$
\mathcal{S}_{\mathcal{B}}=\left\{(\alpha,\mu)\in \mathbb{R}_+\times \mathbb{R} \ \ | \ \ \alpha^2+\mu^2\leq \mathcal{B}^2\right\}.
$$
To continue, let $\tilde{\gamma}=\|{\bf v}\|_2$ and optimize over the direction of ${\bf v}$ to find:
\begin{equation}
	{\phi}_{\mathcal{L,B},\Gamma}^{(n)}=\min_{(\alpha,\mu)\in\Sc_\Bc,\ub} ~\max_{0\leq\tilde{\gamma}\leq \Gamma} ~\frac{\tilde{\gamma}}{\sqrt{n}} \|\alpha{\bf g}-\mu {\bf D}_{\bf y}\boldsymbol{\chi}+{\bf u}\|_2-\frac{\alpha}{\sqrt{n}} \tilde{\gamma} \|{\bf h} \|_2+\frac{1}{n}\sum_{i=1}^n \ell(u_i)\,.
\label{eq:AOs}
\end{equation}
Note that the objective function above is convex in $\ub$ and concave in $\tilde\gamma$. Furthermore, $\tilde\gamma$ is optimized over a compact set. Thus, but Sion's min-max theorem we may flip the order of min-max between $\ub$ and $\tilde\gamma$. Moreover, we apply the following trick in order to turn the minimization over $\ub$ into a separable optimization over its entries. We use the fact that for all $x\in\mathbb{R}$, $\min_{\tilde{r}>0} \frac{\tilde{r}}{2}+\frac{x^2}{2\tilde{r} n}=\frac{x}{\sqrt{n}}$ and apply this to $x:=\|\alpha {\bf g}-\mu {\bf D}_{\bf y}\boldsymbol{\chi}+{\bf u}\|_2$. With these, we can equivalently express \eqref{eq:AOs} as:
\bea
{\phi}_{\mathcal{L,B},\Gamma}^{(n)}&=\min_{(\alpha,\mu)\in\Sc_\Bc}~\max_{0\leq\tilde{\gamma}\leq \Gamma}~\inf_{\tilde{r}\geq 0}~\frac{\tilde{\gamma}\tilde{r}}{2}-\frac{\alpha}{\sqrt{n}}\tilde{\gamma}\|{\bf h}\|_2+\min_{\ub}\big\{\frac{\tilde{\gamma}}{2\tilde{r} n} \|\alpha {\bf g}-\mu {\bf D}_{\bf y} \boldsymbol{\chi} + {\bf u}\|_2^2+\frac{1}{n}\sum_{i=1}^n\ell(u_i)\big\}\nn\\
&= \min_{(\alpha,\mu)\in\Sc_\Bc}~\max_{0\leq\tilde{\gamma}\leq \Gamma}~\inf_{\tilde{r}\geq 0}~\frac{\tilde{\gamma}\tilde{r}}{2}-\frac{\alpha}{\sqrt{n}}\tilde{\gamma}\|{\bf h}\|_2+ \frac{1}{n}\sum_{i=1}^n e_{\ell}\big(\alpha g_i+\mu y_iw_i;{\tilde r}/{\tilde\gamma}\big)\,,\label{eq:inter22}
\eea
where in the second line we have denoted the Moreau envelope of the logistic loss function as:
$$
e_\ell(x;\tau) := \min_u \frac{1}{2\tau} (x-u)^2 + \ell(u).
$$
The Moreau envelope is convex in its second argument (e.g., see \cite[Lem.~A.2]{Hossein2020}). Thus, the objective function in \eqref{eq:inter22} is convex in $\tilde r$. Moreover, the objective is concave with respect to $\tilde\gamma$ as the point-wise minimum (over $\ub$) of linear functions. From these and compactness of the feasibility set of $\tilde\gamma$ we  flip the order of max-min between $\tilde\gamma$ and $\tilde r$, to find that
\bea
{\phi}_{\mathcal{L,B},\Gamma}^{(n)}= \min_{(\alpha,\mu)\in\Sc_\Bc}~\inf_{\tilde{r}\geq 0}~\max_{0\leq\tilde{\gamma}\leq \Gamma}~\frac{\tilde{\gamma}\tilde{r}}{2}-\frac{\alpha}{\sqrt{n}}\tilde{\gamma}\|{\bf h}\|_2+ \frac{1}{n}\sum_{i=1}^n e_{\ell}\big(\alpha g_i+\mu y_iw_i;{\tilde r}/{\tilde\gamma}\big)\,.\label{eq:inter23}
\eea

At this point we recognize that the objective function in \eqref{eq:AOs} is (jointly) convex in $(\alpha,\mu,\tilde r)$; see \cite[Lem.~A.2]{Hossein2020} for a proof. Therefore, we can apply the ``meta-theorem" of Remark \ref{rem:minsup} according to which, it suffices to consider the ``unbounded" AO problem defined as the limit of ${\phi}_{\mathcal{L,B},\Gamma}^{(n)}$ as $\Gamma$ grows to infinity. We denote this by $\phi_{\mathcal{L,B}}^{(n)}$ and is given by:
\bea
{\phi}_{\mathcal{L,B}}^{(n)}= \min_{(\alpha,\mu)\in\Sc_\Bc}~\inf_{\tilde{r}\geq 0}~\sup_{\tilde{\gamma}\geq 0}~\frac{\tilde{\gamma}\tilde{r}}{2}-\frac{\alpha}{\sqrt{n}}\tilde{\gamma}\|{\bf h}\|_2+ \frac{1}{n}\sum_{i=1}^n e_{\ell}\big(\alpha g_i+\mu y_iw_i;{\tilde r}/{\tilde\gamma}\big)\,.\label{eq:AOsinf}
\eea
To complete the ``scalarization" of the AO we further simplify \eqref{eq:AOsinf} by performing the change of variable $\lambda:=\frac{\tilde\gamma\alpha}{\tilde{r}}$:
\bea
{\phi}_{\mathcal{L,B}}^{(n)}=\min_{(\alpha,\mu)\in\Sc_\Bc} \inf_{\tilde{r}\geq 0}\sup_{\lambda\geq 0} \frac{\lambda\tilde{r}^2}{2\alpha} -\frac{1}{\sqrt{n}}\lambda\tilde{r}\|{\bf h}\|_2 +\frac{1}{n}\sum_{i=1}^n e_{\ell}\big(\alpha g_i+\mu y_iw_i;{\alpha}/{\lambda}\big)\,.
\eea
Now, note that for any $\lambda>0$ and $\alpha>0$, the optimum $\tilde{r}$ is given by $\frac{\alpha}{\sqrt{n}} \|{\bf h}\|_2$. Hence using Lemma  \ref{lem:Abla_help1},} we can replace in the above optimization problem $\tilde{r}$ by its optimal  value. In doing so, we obtain:
\bea
{\phi}_{\mathcal{L,B}}^{(n)}=\min_{(\alpha,\mu)\in\Sc_\Bc}~ \sup_{\lambda\geq 0}~ -\frac{ \alpha \lambda}{2n} \|{\bf h}\|_2^2 +\frac{1}{n}\sum_{i=1}^n e_{\ell}\big(\alpha g_i+\mu y_iw_i;{\alpha}/{\lambda}\big)\,.
\eea
The optimization above  is jointly convex in $(\alpha,\mu)$ and concave in $\lambda$. Thus, we may flip the order of the min-max, which yields:
\begin{align*}
{\phi}_{\mathcal{L,B}}^{(n)}=\sup_{\lambda\geq 0} ~\min_{(\alpha,\mu)\in\Sc_\Bc}~  -\alpha \lambda \frac{\|{\bf h}\|_2^2}{2n} +   \frac{1}{n}\sum_{i=1}^n e_{\ell}(\alpha g_i+\mu y_iw_i;\frac{\alpha}{\lambda}) \,.
\end{align*}
\vp
\noindent \textbf{Asymptotic behavior of ${\phi}_{\mathcal{L,B}}^{(n)}$.} 
Define,
$$
\mathcal{R}_n(\alpha,\mu,\lambda)=-\alpha\lambda \frac{\|{\bf h}\|^2}{2n}+\frac{1}{n}\sum_{i=1}^n e_\ell(\alpha g_i+\mu y_iw_i;\frac{\alpha}{\lambda}),
  \ \ \ \ \textnormal{and} \ \ \ \ \mathcal{R}(\alpha,\mu,\lambda)=-\alpha\lambda \frac{\kappa}{2}+ \mathbb{E}\left[e_\ell(\alpha G+\mu YH;\frac{\alpha}{\lambda})\right].
$$
For any $\lambda>0$, the function $(\alpha,\mu)\mapsto \mathcal{R}_n(\alpha,\mu,\lambda)$ is jointly convex in $(\alpha,\mu)$ (e.g., \cite[Lem.~A2]{Hossein2020}) and converges to $(\alpha,\mu)\mapsto \mathcal{R}(\alpha,\mu,\lambda)$. 
But, convergence of convex functions is uniform over compact sets \cite[Cor.~II.1]{AG1982}. It thus converges uniformly over the set $\left\{(\alpha,\mu) \ | \ \alpha \geq 0,  \ \alpha^2+\mu^2\leq \mathcal{B}^2 \ \right\}$. Hence, for any $\lambda>0$, the function
\bea \label{eq:lamfun}
\lambda\mapsto \min_{(\alpha,\mu)\in\Sc_\Bc} \mathcal{R}_n(\alpha,\mu,\lambda)\,,
\eea
converges pointwise to
\bea \label{eq:lamfun_der}
\lambda \mapsto \min_{(\alpha,\mu)\in\Sc_\Bc} \mathcal{R}(\alpha,\mu,\lambda)\,.
\eea
Next, we argue that the convergence above is essentially uniform. To see this note that the functions in \eqref{eq:lamfun} and \eqref{eq:lamfun_der} are concave (as the pointwise minimum of concave functions). Moreover, the function is level-bounded, which can be shown as follows.
For $\mathcal{B}\geq \sqrt{2}$, it can be checked that
\begin{align}
	\min_{(\alpha,\mu)\in\Sc_\Bc} \mathcal{R}(\alpha,\mu,\lambda)&\leq -\lambda\frac{\kappa}{2}+\mathbb{E}\left[e_\ell(H+YG;\frac{1}{\lambda})\right]\overset{\lambda\to\infty}{\longrightarrow} -\infty,
\end{align}
where we have used the fact that $\mathbb{E}\left[e_\ell(H+YG;\frac{1}{\lambda})\right]\overset{\lambda\to\infty}{\longrightarrow} ~\E[\ell(H+YG)]<\infty$ (see \cite[Thm.~1.25]{RocVar}). This proves that:
$$
\lim_{\lambda\to\infty} \min_{(\alpha,\mu)\in\Sc_\Bc} \mathcal{R}(\alpha,\mu,\lambda)=-\infty, 
$$
and hence, the function \eqref{eq:lamfun_der} is level-bounded. We may now use \cite[Lem.~10]{Master} to conclude that
$$
\phi_{\mathcal{L,B}}^{(n)}=\sup_{\lambda\geq 0}  \min_{(\alpha,\mu)\in\Sc_\Bc} \  \mathcal{R}_n(\alpha,\mu,\lambda) ~\ras~ \overline{\phi}_{\mathcal{L,B}}:=\sup_{\lambda\geq 0}  \min_{(\alpha,\mu)\in\Sc_\Bc} \  \mathcal{R}(\alpha,\mu,\lambda) \,.
$$
Note that we can flip the order of the min-max in $\overline{\phi}_{\mathcal{L,B}}$ since $\mathcal{R}$ is convex in $(\alpha,\mu)$ and concave in $\lambda$. We thus also have:
\bea\label{eq:phio_LB}
\overline{\phi}_{\mathcal{L,B}}= \min_{(\alpha,\mu)\in\Sc_\Bc}\sup_{\lambda \geq 0} -\alpha\lambda\frac{\kappa}{2}+\mathbb{E}\left[e_{\ell}(\alpha H+\mu YG;\frac{\alpha}{\lambda})\right]\,.
\eea

\vp
\noindent \textbf{Checking the conditions of Corollary \ref{cor:characterization}.} Let $\overline{\phi}_{\mathcal{L}}$ be given by:
\bea\label{eq:phio_L}
\overline{\phi}_{\mathcal{L}}:=\min_{\substack{\alpha\geq 0\\ \mu}}\sup_{\lambda \geq 0}\, -\alpha\lambda\frac{\kappa}{2}+\mathbb{E}\left[e_{\ell}(\alpha H+\mu YG;\frac{\alpha}{\lambda})\right]\,.
\eea
Assume that $\overline{\phi}_{\mathcal{L}}$ has a unique minimizer $(\alpha^\star,\mu^\star)$, which will be proven later. Then, for $\mathcal{B}$ sufficiently large (e.g., any $\Bc > 2((\alpha^\star)^2+(\mu^\star)^2)$), 
$$
\overline{\phi}_{\mathcal{L}}=\overline{\phi}_{\mathcal{L,B}}. 
$$
Based on this and on the previous analysis, we have thus far shown that $\overline{\phi}_{\mathcal{L,B}}$ converges almost surely to $\overline{\phi}_{\mathcal{L}}$, i.e.,
\begin{subequations}
	\begin{align}
 &\text{For any}~ \eps>0:\quad\mathbb{P}\left[ \phi_{\mathcal{L,B}}^{(n)} \leq \overline{\phi}_{\mathcal{L}}-\epsilon, ~~\text{i.o.}\right]=0,\label{eq:cond1phi_L_au}\\
	 &\text{For any}~ \eps>0:\quad\mathbb{P}\left[ \phi_{\mathcal{L,B}}^{(n)} \geq \overline{\phi}_{\mathcal{L}}+\epsilon, ~~\text{i.o.}\right]=0\label{eq:cond2phi_L_au}.
	\end{align}
\end{subequations}
Using a ``deviation argument" and uniqueness of solutions of $\overline{\phi}_{\mathcal{L}}$, which will be shown next, it can be further shown that
	\begin{align}
 	 &\text{There exists}~ \zeta>0:\quad\mathbb{P}\left[ \phi_{\mathcal{L,B}}^{(n)} \leq \overline{\phi}_{\mathcal{L}}+\zeta, ~~\text{i.o.}\right]=0\label{eq:cond3phi_L_au}.
	\end{align}
\noindent The deviations argument is similar to the proof of SVM in Section \ref{sec:SVM_prop} and we omit the details. 

By reference to Remark \ref{rem:minsup} observe that \eqref{eq:cond1phi_L_au}, \eqref{eq:cond2phi_L_au} and \eqref{eq:cond3phi_L_au} correspond to  \eqref{eq:unbounded_AO_rel}, \eqref{eq:minsupAO} and \eqref{eq:unbounded_AO_rel2}, respectively.  Therefore, invoking Corollary \ref{cor:characterization}, we conclude that for $\mathcal{B}$ sufficiently large, $\tilde{\betab}_{\mathcal{B}}$ and $\hat{\mu}_{\mathcal{B}}$ satisfy $\|\tilde{\betab}_{\mathcal{B}}\|\to \alpha^\star$ and $\hat{\mu}_{\mathcal{B}}\to \mu^\star$. As mentioned, based on Lemma \ref{lemma:5}, this implies that any minimizer $\hat{\betab}=[\hat{\mu},\tilde\betab^T]^T$  of \eqref{eq:prop} satisfies
\bea\label{eq:rP_T}
\hat\mu\rP \mu^\star \quad\text{and}\quad \|\tilde{\betab}\|_2\rP \alpha^\star.
\eea
This proves the desired formulae for the cosine similarity and the risk similar to \eqref{eq:egw2} in Section \ref{sec:pre_SVM}.

\vp
\noindent \textbf{Uniqueness of the minimizer of $\overline{\phi}_{\mathcal{L}}$}.
Our goal here is to prove that if $\kappa<\kappa_\star$, then $\overline{\phi}_{\mathcal{L}}$ given by
$$
\overline{\phi}_{\mathcal{L}}= \min_{\substack{\alpha\geq 0\\ \mu}}\sup_{\lambda \geq 0} -\alpha\lambda\frac{\kappa}{2}+\mathbb{E}\left[e_{\ell}(\alpha H+\mu YG;\frac{\alpha}{\lambda})\right]\,,
$$
 admits a unique minimizer $\alpha^\star$ and $\mu^\star$. 
 We proceed with the following change of variable   $\Upsilon:=\frac{\mu}{\alpha}$  and write $\overline{\phi}_{\mathcal{L}}$ as:
$$
\overline{\phi}_{\mathcal{L}}= \inf_{\alpha\geq 0}\min_{\Upsilon}  \alpha\left\{\sup_{\lambda\geq 0} -\lambda \frac{\kappa}{2}+\mathbb{E}\left[\min_{u}\frac{\lambda}{2}(H+\Upsilon \  YG-u)^2+\frac{1}{\alpha}\ell(u\alpha)\right]\right\}=:
\alpha\,\mathcal{D}(\frac{1}{\alpha},\Upsilon,\lambda)\,,
$$
where 
\bea\nn
\mathcal{D}(\alpha,\Upsilon,\lambda)&=-\lambda\frac{\kappa}{2}+\mathbb{E}\left[\min_u\frac{\lambda}{2}(H+\Upsilon \ YG-u)^2+\alpha\ell(\frac{u}{\alpha})\right]\\
&=-\lambda\frac{\kappa}{2}+\mathbb{E}\left[e_{\alpha\ell\big(\frac{\cdot}{\alpha}\big)}(H+\Upsilon \ YG;1/\lambda)\right]
\,.\label{eq:D_3}
\eea

We start by proving that for any $\alpha$ and $\Upsilon$, the optimum for $\lambda$ cannot be achieved as $\lambda\to 0^+$.

\noindent Indeed, denote by $\widetilde{\rm prox}:={\rm prox}_{u\mapsto \frac{1}{\alpha}\ell(u \alpha)} (H+\Upsilon \ YG;\frac{1}{\lambda})$ the proximal operator of the function $u\mapsto \frac{1}{\alpha}\ell(u \alpha)$. By optimality of the prox operator \cite{RocVar}:
\bea\label{eq:proxx}
\widetilde{\rm prox}=-\frac{1}{\lambda} \ell'(\alpha \widetilde{\rm prox})+ H+\Upsilon \ YG.
\eea
It can be easily checked that the function $k:x\mapsto -\frac{1}{\lambda}\ell'(\alpha x)+ H+\Upsilon \  YG$ is decreasing. Hence, $x\leq 0 \implies k(x)\geq k(0)$. Moreover, it is seen from \eqref{eq:proxx} that $\widetilde{\rm prox}$ is the solution of $k(x)=x$. Notice also that in the event that $|H|+|G \Upsilon|\leq \frac{1}{2\lambda}$, we have that $k(0)=\frac{1}{2\lambda}+ H+\Upsilon YG\geq 0$. These facts combined lead to the conclusion that:
\bea\label{eq:prox>0}
|H|+ |\Upsilon G|\leq \frac{1}{2\lambda} \implies \widetilde{\rm prox}\geq 0.
\eea

\noindent Using this observation, the derivative of $\mathcal{D}$ with respect to $\lambda$ can be lower-bounded as follows:
\begin{align}
	\frac{\partial \mathcal{D}}{\partial \lambda}&=-\frac{\kappa}{2}+\mathbb{E}\left[\frac{1}{2}\big(H+\Upsilon \  YG-\widetilde{\rm prox}\big)^2\right]\nn\\
	&	\geq -\frac{\kappa}{2}+\mathbb{E}\left[\frac{1}{2}(H+\Upsilon \ YG-\widetilde{\rm prox})^2 1_{\{|H|+|\Upsilon G|\leq \frac{1}{2\lambda}\}} \right]\nn\\
	&\geq  -\frac{\kappa}{2}+\mathbb{E}\left[\min_{u\geq 0}\frac{1}{2}(H+\Upsilon \ YG-u)^2 1_{\{|H|+|\Upsilon G|\leq \frac{1}{2\lambda}\}} \right]\label{eq:prox>0_use}\\
	&=-\frac{\kappa}{2}+\mathbb{E}\left[\frac{1}{2}(H+\Upsilon \ YG)_{-}^2 1_{\{|H|+|\Upsilon G|\leq \frac{1}{2\lambda}\}} \right]\label{eq:lastline}	\,.
\end{align}
In the first equality above we have used \cite[Prop.~A4]{Hossein2020} for the derivative of the expected Moreau envelope function in the definition of the function $\Dc$ in \eqref{eq:D_3}. The second inequality in \eqref{eq:prox>0_use} follows from \eqref{eq:prox>0}. Finally, the equality in \eqref{eq:lastline} is because $\min_{u\geq 0} (a-u)^2 = (a)_-^2$ for any $a\in\R$.

\noindent Continuing from \eqref{eq:lastline}, as $\lambda \to 0$, the event $1_{\{|H|+\Upsilon |G|\leq \frac{1}{2\lambda}\}}$ occurs with probability one. Hence, 
\bea
\frac{\partial \mathcal{D}}{\partial \lambda}\Big|_{\lambda\to 0}  \, \geq  -\frac{\kappa}{2} + \mathbb{E}\Big[\big(H+\Upsilon YG\big)_{-}^2 \Big] 
&\geq \frac{1}{2}\Big(\min_{t\in\R}\mathbb{E}\big(H+t YG\big)_{-}^2  - \kappa \Big)\nn\\
&\geq 
\Big(\frac{1}{2}\Big)\Big( g(\kappa)  - \kappa \Big)>0,\label{eq:kappalast}
\eea
where in \eqref{eq:kappalast} we first recall the definition of $g(\kappa)$ in \eqref{eq:threshold_func} and further apply Lemma \ref{lem:Ong}, i.e., $\kappa<\kappa_\star \implies \kappa < g(\kappa)$. 
This proves that the supremum is not attained in the limit $\lambda\to 0$.

\vp
Next, we prove that there exists a unique $\alpha^\star$ minimizing $\alpha\mapsto \alpha\min_{\Upsilon}\sup_{\lambda\geq 0}\mathcal{D}(\frac{1}{\alpha},\Upsilon,\lambda)$. For this, it will suffice to prove the following two statements:
\begin{flalign}
&\text{(i)~~The function $\mathcal{H}:\alpha\mapsto \alpha\min_{\Upsilon}\sup_{\lambda\geq 0}\mathcal{D}(\frac{1}{\alpha},\Upsilon,\lambda)$ is strictly convex in $\alpha$.}&\label{eq:strict_i}
\\
&\text{(ii)~~$\lim_{\alpha\to\infty} \mathcal{H}(\alpha)=\infty.$}&\label{eq:inf_ii}
\end{flalign}

\noindent\underline{Proof of \eqref{eq:strict_i}:} It will suffice to prove that the function 
\bea\label{eq:Gc}
\Gc(\alpha,\Upsilon):=\sup_{\lambda\geq 0} \mathcal{{D}}(\alpha,\Upsilon,\lambda)
\eea
is (jointly) strictly convex in the variables $(\alpha,\Upsilon)$. Indeed, this would imply that $\alpha \mapsto \inf_{\Upsilon} \sup_{\lambda\geq 0}\mathcal{{D}}(\alpha,\Upsilon,\lambda)$ is strictly convex. In its turn, this would mean that its perspective function $\alpha \mapsto\alpha \min_\Upsilon \sup_{\lambda\geq 0}\mathcal{{D}}(\frac{1}{\alpha},\Upsilon,\lambda)$ is also strictly convex. Thus, in what follows, we prove that \eqref{eq:Gc} is strictly convex. Fix any $\la>0$ (it suffices to consider strictly positive $\la$ since we have already shown that the supremum in \eqref{eq:Gc} is not attained at $\la\rightarrow0^+$). Let us define
\bea\label{eq:Gcla}
\Gc_\la(\alpha,\Upsilon):=\mathbb{E}\left[\min_{u} \frac{\la}{2}(H+\Upsilon \ YG- u)^2+ \alpha\ell\big(\frac{u}{\alpha}\big)\right].
\eea
It suffices to prove that \eqref{eq:Gcla} is jointly convex in $(\alpha,\Upsilon)$; then, $\Gc$ would be strictly convex as the pointwise supremum of strictly convex functions \cite[Sec.~3.2.3]{boyd2009convex}. Let $(\alpha_1,\Upsilon_1) \neq (\alpha_2,\Upsilon_2)$, $\theta\in(0,1)$ and $\thetao=1-\theta$. For convenience, define the proximal operators
\bea\label{eq:proxGcla}
p_i(H,GY):=\prox{\alpha_i \ell\big(\frac{\cdot}{\alpha_i}\big)}{H + \Upsilon_i G Y}{1/\la} = \arg\min_{u}\Big\{\frac{\la}{2}(H+\Upsilon_i YG - u)^2 + \alpha_i\ell\Big(\frac{u}{\alpha_i}\Big)\Big\},~~i=1,2.
\eea
Finally, denote $\alpha_\theta:=\theta\alpha_1+\thetao\alpha_2$ and $\Upsilon_\theta:=\theta\Upsilon_1+\thetao\Upsilon_2$. With this notation, we have the following chain of inequalities,
\bea
&\Gc_\la(\alpha_\theta,\Upsilon_\theta) \leq \E\left[\, \frac{\la}{2}\left( H + \Upsilon_\theta YG - \Big( \theta p_1(H,GY) + \thetao p_2(H,GY) \Big) \right)^2 + {\alpha_\theta}\ell\left(\frac{\theta p_1(H,GY) + \thetao p_2(H,GY)}{\alpha_\theta}\right)\,\right] \nn\\
 &\leq \E\left[\,  \frac{\theta\la}{2}\left( H + \Upsilon_1 YG - p_1(H,GY) \right)^2 +  \frac{\thetao\la}{2}\left( H + \Upsilon_2 YG - p_2(H,GY) \right)^2 + {\alpha_\theta}\ell\left(\frac{\theta p_1(H,GY) + \thetao p_2(H,GY)}{\alpha_\theta}\right)\,\right] \label{eq:str001}\\
  &{=} \E\Big[\, \theta\cdot \frac{\la}{2}\left( H + \Upsilon_1 YG - p_1(H,GY) \right)^2 + \thetao \cdot \frac{\la}{2}\left( H + \Upsilon_2 YG - p_2(H,GY) \right)^2\Big] \nn\\&\qquad\qquad\qquad\qquad\qquad\qquad\qquad\qquad\qquad\qquad+ {\alpha_\theta} \E\Big[\ell\left( \frac{\theta\alpha_1}{\alpha_\theta}\,\frac{p_1(H,GY)}{\alpha_1} + \frac{\thetao\alpha_2}{\alpha_\theta}\,\frac{p_2(H,GY)}{\alpha_2}\right)\,\Big] \nn\\
    &< \E\Big[\, \theta\cdot \frac{\la}{2}\left( H + \Upsilon_1 YG - p_1(H,GY) \right)^2 + \thetao \cdot \frac{\la}{2}\left( H + \Upsilon_2 YG - p_2(H,GY) \right)^2\Big] \nn\\&\qquad\qquad\qquad\qquad\qquad\qquad\qquad\qquad\qquad\qquad+ \theta\alpha_1\E\Big[\ell\left(\frac{p_1(H,GY)}{\alpha_1}\right)\Big] + \thetao\alpha_2\E\Big[\ell\left(\frac{p_2(H,GY)}{\alpha_2}\right)\,\Big] \label{eq:str003}\\
    & = \theta \E\left[\frac{\la}{2}\big(H+\Upsilon_1 \ YG- p_1(H,GY)\big)^2+ \alpha_1\ell\Big(\frac{p_1(H,GY)}{\alpha_1}\Big)\right] \nn\\ &\qquad\qquad\qquad\quad\quad\qquad\qquad\qquad\qquad\qquad+ \thetao \E\left[\frac{\la}{2}\big(H+\Upsilon_2 \ YG- p_2(H,GY)\big)^2+ \alpha_2\ell\Big(\frac{p_2(H,GY)}{\alpha_2}\Big)\right]\nn\\
    & = \theta \Gc_\la(\alpha_1,\Upsilon_1) + \thetao \Gc_\la(\alpha_2,\Upsilon_2).\label{eq:str004}
\eea
Inequality \eqref{eq:str001} follows by convexity of the function $(x,u)\mapsto(H+xYG-u)^2$ for every $H,G,Y$. The strict inequality in \eqref{eq:str003} follows by Lemma \ref{lem:prox_diff}, which   uses strict convexity of $\ell$ and the fact that $p_1(H,GY)/\alpha_1 \neq p_2(H,GY)/\alpha_2$ for a set of non-zero measure over the distribution of the pair $H,GY$. In the last line \eqref{eq:str004} we have recalled the definition of the function $\Gc_\la$ and of the proximal operator in \eqref{eq:proxGcla}. This completes the proof of strict convexity of $\Gc_\la$, as desired.


\vp
\noindent\underline{Proof of \eqref{eq:inf_ii}:} We will now prove that as $\alpha\to\infty$, $\mathcal{H}(\alpha)\to\infty$. To see this, we will again invoke the fact that if $|H|+|\Upsilon G|\leq \frac{1}{2\lambda}$, then $\widetilde{\rm prox}\geq 0$ (see \eqref{eq:prox>0}). With this at hand, we lower bound $\mathcal{H}$ as follows:
\begin{align}
	\mathcal{H}(\alpha)&\geq \alpha\min_{\Upsilon}\left(-\frac{\kappa}{2\sqrt{\alpha}}+\mathbb{E}\left[\min_{u} \left(\frac{1}{2\sqrt{\alpha}}(H+\Upsilon \ YG-u)^2 + \frac{1}{\alpha}\ell(u\alpha)\right){\bf 1}_{\{|H|+  |G \Upsilon|\leq \frac{\sqrt{\alpha}}{2}\}}\right]\right)\nn\\
	&	\geq \alpha\min_{\Upsilon}\left(-\frac{\kappa}{2\sqrt{\alpha}}+\mathbb{E}\left[\min_{u\geq 0} \left(\frac{1}{2\sqrt{\alpha}}( H+\Upsilon \ YG-u)^2 + \frac{1}{\alpha}\ell(u\alpha)\right){\bf 1}_{\{|H|+|\Upsilon G|\leq \frac{\sqrt{\alpha}}{2}\}}\right]\right)\nn\\
	&\geq \sqrt{\alpha} \,\Big( -\frac{1}{2}\kappa + \mathbb{E}\Big[\min_{\Upsilon}\frac{1}{2}(H+\Upsilon YG)_{-}^2\ \  1_{|H|+|\Upsilon G|\leq \frac{\sqrt{\alpha}}{2}}\Big] \Big).\label{eq:Halinf}
\end{align}
The first inequality follows by setting $\la=\frac{1}{\sqrt{\alpha}}$. The second inequality uses \eqref{eq:prox>0}. The third inequality follows since $\ell(x)\geq 0,\forall x\in\R$ and $\min_{u\geq 0} (a-u)^2 = (a)_-^2$ for any $a\in\R$. To continue, note that if $\kappa< \kappa^\star$, the right-hand side of \eqref{eq:Halinf} tends to infinity as $\alpha\to\infty$; see \eqref{eq:kappalast}. 
Hence, the function $\mathcal{H}$ has a unique minimizer which we denote by $\alpha^\star$. 

\vp
We conclude the proof of uniqueness by showing that the function $\Upsilon\mapsto \sup_{\lambda\geq 0}\mathcal{D}(\frac{1}{\alpha^\star},\Upsilon,\lambda)$ has a unique minimizer. Since, the function is strictly convex in $\Upsilon$, it suffices to show that it is level-bounded, i.e., that  $$\lim_{|\Upsilon|\to\infty} \sup_{\lambda\geq 0}\mathcal{D}(\frac{1}{\alpha^\star},\Upsilon,\lambda)=\infty.$$ For that, we lower bound it as follows:
\begin{align}
	\sup_{\lambda>0} \mathcal{D}(\frac{1}{\alpha^\star},\Upsilon,\lambda)&\geq \alpha^\star\left(-\frac{\kappa}{2|\Upsilon|^{\frac{3}{2}}}+\mathbb{E}\left[\min_u \left(\frac{1}{2|\Upsilon|^{\frac{3}{2}}}\left(H + \Upsilon \ YG-u\right)^2+\frac{1}{\alpha^\star}\ell(u\alpha^\star)\right){\bf 1}_{\{|H|+|\Upsilon G|\leq \frac{1}{2}|\Upsilon|^{\frac{3}{2}}\}}\right]\right)\nn\\
	&\geq \alpha^\star\left(-\frac{\kappa}{2|\Upsilon|^{\frac{3}{2}}}+\mathbb{E}\left[\frac{1}{2|\Upsilon|^{\frac{3}{2}}}(H+\Upsilon \ YG)_{-}^2\right]\right) \label{eq:s}
\end{align}
where  the first inequality is obtained by setting $\lambda=\frac{1}{|\Upsilon|^{\frac{3}{2}}}$ and the second follows from \eqref{eq:prox>0}. Given that $\alpha^\star\neq 0$,   the limit of the right-hand side of   \eqref{eq:s} tends to infinity as $|\Upsilon|\to\infty$ which yields the desired result.

\subsection{Technical lemmata for uniqueness}
Lemmata \ref{lem:prox_diff} and \ref{lem:key_prox} are useful in proving uniqueness of the minimizer of $\overline{\phi}_{\mathcal{L}}$. They are adapted with small modifications from \cite{Hossein2020}. Specifically, see \cite[Sec.~A.6.2]{Hossein2020}.
\begin{lem}\label{lem:prox_diff} Fix arbitrary pairs $\vb_i=(\alpha_i,\Upsilon_i),~i=1,2$ such that $\vb_1\neq \vb_2$ and let random variables $H$ and $X=GY$ defined as in \eqref{eq:HGZ}. Further denote
\bea\label{eq:prox_short}
\proxri{i}{H}{X} := \prox{\alpha_i\ell\big(\frac{\cdot}{\alpha_i}\big)}{H+ \Upsilon_i X}{\la},~~~i=1,2.
\eea
Then, there exists a ball $\Sc\subset\R^2$ of nonzero measure, i.e. $\Pro\left( (H,X)\in\Sc \right)>0$, such that 
$\proxri{1}{h}{x}\neq \proxri{2}{h}{x}$, for all $(h,x)\in\Sc$. Consequently,  for any $\theta\in(0,1)$ and $\thetao=1-\theta$, the following strict inequality holds,
\bea\label{eq:ell_prox_strict}
\E\Big[ \ell\left( \theta \frac{\proxri{1}{H}{X}}{\alpha_1} + \thetao \frac{\proxri{2}{H}{X}}{\alpha_2} \right) \Big] < \theta\, \E\Big[ \ell\left( \frac{\proxri{1}{H}{X}}{\alpha_1} \right) \Big] +  \thetao\, \E\Big[ \ell\left( \frac{\proxri{2}{H}{X}}{\alpha_2} \right) \Big].
\eea
\end{lem}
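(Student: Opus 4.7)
The plan is to reduce the claim to an almost-sure identity that would have to hold between $H$, $X$ and certain proximal fixed points, and then contradict this identity using the independence of $H$ from $(G,Y)$ together with the continuity of $X$.

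First I would record the first-order condition characterizing the proximal maps. By strict convexity of $\alpha_i\ell(\cdot/\alpha_i)$, the map $\proxri{i}{H}{X}$ is the unique solution of
\begin{equation}\label{eq:prop_FOC}
\lambda\bigl(p_i-(H+\Upsilon_i X)\bigr)+\ell'\!\left(\tfrac{p_i}{\alpha_i}\right)=0,\qquad i=1,2,
\end{equation}
and is 1-Lipschitz, hence continuous, in $(h,x)$. Setting $q_i:=p_i/\alpha_i$, \eqref{eq:prop_FOC} becomes $\alpha_i q_i = H+\Upsilon_i X-\lambda^{-1}\ell'(q_i)$.

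The main technical step is to show that $q_1\neq q_2$ on a set of positive $\Pro$-measure. Suppose toward contradiction that $q_1=q_2=:q$ on a measurable set $\mathcal{E}$ with $\Pro(\mathcal{E})>0$. Subtracting the two instances of \eqref{eq:prop_FOC} on $\mathcal{E}$ gives the clean identity $(\alpha_1-\alpha_2)q = (\Upsilon_1-\Upsilon_2)X$. I split into two cases. If $\alpha_1=\alpha_2$, then necessarily $\Upsilon_1\neq\Upsilon_2$, so $X=0$ on $\mathcal{E}$; but $X=GY$ has a continuous law, so $\Pro(X=0)=0$, a contradiction. If $\alpha_1\neq\alpha_2$, let $c:=(\Upsilon_1-\Upsilon_2)/(\alpha_1-\alpha_2)$ so that $q=cX$ on $\mathcal{E}$. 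Plugging back into \eqref{eq:prop_FOC} for $i=1$ yields, on $\mathcal{E}$,
\begin{equation}\label{eq:prop_forbidden}
H = (\alpha_1 c-\Upsilon_1)X + \tfrac{1}{\lambda}\ell'(cX).
\end{equation}
Now $H$ is independent of $(G,Z)$ and hence of $X$, and the conditional law of $H$ given $X$ is standard Gaussian; therefore, by Fubini, the set on which \eqref{eq:prop_forbidden} holds has probability $0$, contradicting $\Pro(\mathcal{E})>0$. This is the step I expect to be the main obstacle, mostly in verifying carefully that $(H,X)$ behaves as claimed (independence plus continuity of the conditional law of $H$).

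Having shown $q_1\not\equiv q_2$, I would then use continuity of $(h,x)\mapsto q_i(h,x)$ to pick a point $(h_0,x_0)$ with $q_1(h_0,x_0)\neq q_2(h_0,x_0)$ and a small open ball $\mathcal{S}$ around it on which $q_1\neq q_2$. Since $H$ is independent of $X$, has a strictly positive Gaussian density, and the law of $X=GY$ places positive mass on every open interval (for any $x_0\in\R$ one has, e.g., $\Pro(X\in[x_0-\delta,x_0+\delta])\geq \E[\mathbbm{1}_{\{G\in[x_0-\delta,x_0+\delta]\}}\,f(sG+\sigma Z)]>0$ when $x_0>0$, with an analogous bound for $x_0\leq 0$ using $Y=-1$), the rectangle contained in $\mathcal{S}$ has positive product mass, so $\Pro((H,X)\in\mathcal{S})>0$.

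Finally, the displayed strict inequality \eqref{eq:ell_prox_strict} follows immediately from pointwise strict convexity of $\ell(u)=\log(1+e^{-u})$ (note $\ell''>0$): for any $\theta\in(0,1)$ and $a\neq b$, $\ell(\theta a+\bar\theta b)<\theta\ell(a)+\bar\theta\ell(b)$, while for $a=b$ equality holds. Applying this pointwise with $a=q_1(H,X)$, $b=q_2(H,X)$ and using that $q_1\neq q_2$ on $\mathcal{S}$, the integrand for $\theta\ell(q_1)+\bar\theta\ell(q_2)-\ell(\theta q_1+\bar\theta q_2)$ is nonnegative everywhere and strictly positive on $\mathcal{S}$; taking expectations and using $\Pro(\mathcal{S})>0$ together with integrability (guaranteed by $\ell\geq 0$ and the Lipschitz-in-$(h,x)$ bound on $q_i$) yields the strict expectation inequality.
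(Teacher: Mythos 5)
Your proof is correct, but for the key technical step it takes a genuinely different route than the paper. The paper factors the argument through a purely deterministic lemma (its Lemma~\ref{lem:key_prox}): assuming $p_1(h,x)/\alpha_1 = p_2(h,x)/\alpha_2$ for \emph{every} $(h,x)\in\R^2$, it uses the first-order conditions to reduce to the functional identity $\ell'\bigl(\tfrac{\Upsilon_2-\Upsilon_1}{\alpha_2/\alpha_1 - 1}x\bigr) = h + \tfrac{\alpha_2\Upsilon_1-\alpha_1\Upsilon_2}{\alpha_2-\alpha_1} x$ and contradicts it by evaluating at $(h,x)=(0,0)$ (yielding $\ell'(0)=0$, false for the logistic loss). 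The paper then invokes continuity of the prox and positivity of the densities of $(H,X)$ to blow up the single witness $(h_0,x_0)$ to a ball of positive measure. You instead argue probabilistically from the outset: if $q_1=q_2$ on a set $\mathcal{E}$ of positive measure, then $(\alpha_1-\alpha_2)q = (\Upsilon_1-\Upsilon_2)X$ on $\mathcal{E}$, and in the nontrivial case $\alpha_1\neq\alpha_2$ you recover $H = (\alpha_1 c - \Upsilon_1)X + \lambda\,\ell'(cX)$ on $\mathcal{E}$; independence of $H$ from $X$ together with atomlessness of $H$ kills this. This avoids the evaluation at the origin and would work for any strictly convex $\ell$, whereas the paper's deterministic route is cleaner in that it isolates a pure statement about the prox maps before touching the measure-theoretic layer. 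Two small points: (i) your first-order condition has the $\lambda$ on the wrong side; with the paper's convention $\prox{\ell}{x}{\lambda}=\arg\min_v \tfrac{1}{2\lambda}(x-v)^2+\ell(v)$, the stationarity reads $p_i - (H+\Upsilon_i X) + \lambda\,\ell'(p_i/\alpha_i)=0$, i.e.\ $\alpha_i q_i = H + \Upsilon_i X - \lambda\,\ell'(q_i)$, not $\lambda^{-1}$ — this is cosmetic and does not affect anything downstream; (ii) your ``positive-measure'' intermediate step is stronger than needed (it yields $q_1\neq q_2$ a.e.), since you then fall back on continuity to find the ball anyway; matching the paper, a single witness point plus continuity already suffices.
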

\begin{proof}
Note that \eqref{eq:ell_prox_strict} holds trivially with ``$<"$ replaced by ``$\leq"$ due to the convexity of $\ell$. To prove that the inequality is strict, it suffices, by strict convexity of $\ell$, that there exists subset $\Sc\subset\R^2$ that satisfies the following two properties:
\begin{enumerate}
\item $\proxri{1}{h}{x}\neq \proxri{2}{h}{x}$, for all $(h,x)\in\Sc$.
\item $\Pro\left( (H,X)\in\Sc \right)>0$.
\end{enumerate}
Consider the following function $f:\R^2\rightarrow\R$: 
\bea
f(h,x):= \frac{\proxri{1}{h}{x}}{\alpha_1}- \frac{\proxri{2}{h}{x}}{\alpha_2}.
\eea
By lemma \ref{lem:key_prox}, there exists $(h_0,x_0)$ such that 
\bea\label{eq:x0y0}
 f(h_0,x_0) \neq 0.
 \eea 
Moreover, by continuity of the proximal operator (cf. \cite[Prop.~A1(a)]{Hossein2020}), it follows that $f$ is continuous. From this and \eqref{eq:x0y0}, we conclude that for sufficiently small $\zeta>0$ there exists a $\zeta$-ball $\Sc$ centered at $(h_0,x_0)$, such that property 1 holds. Property $2$ is also guaranteed to hold for $\Sc$, since both $H,X$ have strictly positive densities and are independent.
\end{proof}

\begin{lem}\label{lem:key_prox}
Let $\al_1,\al_2>0$ and $\la>0$. Then, the following statement is true
\bea
(\alpha_1,\Upsilon_1) \neq (\alpha_2,\Upsilon_2) \quad\Longrightarrow\quad \exists (h,x)\in\R^2: \alpha_2\cdot\prox{\alpha_1\ell\big(\frac{\cdot}{\alpha_1}\big)}{ h + \Upsilon_1 x}{\la} \neq \alpha_1\cdot\prox{\alpha_2\ell\big(\frac{\cdot}{\alpha_2}\big)}{ h + \Upsilon_2 x}{\la}.
\eea
\end{lem}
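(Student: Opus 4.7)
\textbf{Plan for Lemma \ref{lem:key_prox}.} I would argue by contradiction. Define
$p_i(h,x) := \prox{\alpha_i\ell(\cdot/\alpha_i)}{h + \Upsilon_i x}{\la}$ for $i=1,2$, and set $q_i(h,x) := p_i(h,x)/\alpha_i$. Since $\alpha_1, \alpha_2 > 0$, the claim $\alpha_2 p_1(h,x) \neq \alpha_1 p_2(h,x)$ is equivalent to $q_1(h,x) \neq q_2(h,x)$. Suppose, for contradiction, that $q_1(h,x) = q_2(h,x) =: q(h,x)$ for every $(h,x) \in \R^2$.

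The key tool is the first-order optimality condition for the proximal operator. Since $\ell$ is the logistic loss (in particular $C^\infty$), differentiating the defining minimization $p_i = \arg\min_v \frac{1}{2\la}(h+\Upsilon_i x - v)^2 + \alpha_i \ell(v/\alpha_i)$ yields
\[
h + \Upsilon_i x \;=\; \alpha_i\, q_i(h,x) + \la\, \ell'(q_i(h,x)), \qquad i = 1,2.
\]
Under the assumption $q_1 = q_2 = q$, subtracting the two equations gives
\begin{equation}\label{eq:planeq}
(\Upsilon_1 - \Upsilon_2)\, x \;=\; (\alpha_1 - \alpha_2)\, q(h,x), \qquad \forall (h,x) \in \R^2.
\end{equation}

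I would then split into two cases based on which coordinate of $(\alpha_i, \Upsilon_i)$ differs. If $\alpha_1 = \alpha_2$, then by hypothesis $\Upsilon_1 \neq \Upsilon_2$, and \eqref{eq:planeq} forces $(\Upsilon_1 - \Upsilon_2) x = 0$ for all $x$, which is false at any $x \neq 0$. If $\alpha_1 \neq \alpha_2$, then \eqref{eq:planeq} gives $q(h,x) = \frac{\Upsilon_1 - \Upsilon_2}{\alpha_1 - \alpha_2}\, x$, a function of $x$ alone. Substituting this into the first optimality condition $h + \Upsilon_1 x = \alpha_1 q(x) + \la \ell'(q(x))$ makes the right-hand side independent of $h$, whereas the left-hand side varies with $h$, a contradiction.

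I do not anticipate a serious technical obstacle: the entire argument is a short manipulation of the prox optimality conditions, and relies only on differentiability of $\ell$ (immediate for logistic loss) and on $\alpha_i, \la > 0$. The only subtlety to keep in mind is being careful with the case split on which of $\alpha_1 = \alpha_2$ or $\alpha_1 \neq \alpha_2$ holds, so that one does not inadvertently divide by zero when extracting $q$ from \eqref{eq:planeq}.
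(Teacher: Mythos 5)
Your proof is correct and follows essentially the same route as the paper's: contradiction, subtracting the prox first-order optimality conditions to obtain the linear identity $(\Upsilon_1-\Upsilon_2)x=(\alpha_1-\alpha_2)\,q(h,x)$, and a case split on whether $\alpha_1=\alpha_2$. The only cosmetic difference is in closing Case 2: the paper sets $h=x=0$ to obtain $\ell'(0)=0$ (false for logistic loss), whereas you observe directly that the right-hand side of the optimality equation cannot depend on $h$; both are instances of the same inconsistency.
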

\begin{proof}
We prove the claim by contradiction, but first, let us set up some useful notation. Define
$$
\proxri{\al,\Upsilon}{h}{x} := \prox{\alpha\ell\big(\frac{\cdot}{\alpha}\big)}{ h + \Upsilon x}{\la},
$$
and 
$$
\ellprox{\al,\Upsilon}{h}{x} := \ellp\left(\frac{\prox{\alpha\ell\big(\frac{\cdot}{\alpha}\big)}{h+ \Upsilon z}{\la}}{\alpha}\right).
$$
By optimality of the proximal operator (cf. \cite[Prop.~A.1(c)]{Hossein2020}), the following is true:
\bea\label{eq:L_eqv}
\ellprox{\al,\Upsilon}{h}{x}= \frac{1}{\la}\left( h + \Upsilon x - \proxri{\al,\Upsilon}{h}{x} \right).
\eea
For the sake of contradiction, assume that the claim of the lemma is false. Then, 
\bea\label{eq:prox_eq}
\frac{\proxri{\al_1,\Upsilon_1}{h}{x}}{\al_1} =  \frac{\proxri{\al_2,\Upsilon_2}{h}{x}}{\al_2}, \quad\forall (h,x)\in\R^2.
\eea
From this, it also holds that
\bea\label{eq:der_eq}
\ellprox{\al_1,\Upsilon_1}{h}{x} =  \ellprox{\al_2,\Upsilon_2}{h}{x},\quad\forall (h,x)\in\R^2.
\eea
Recalling \eqref{eq:L_eqv} and applying \eqref{eq:prox_eq}, we derive the following from \eqref{eq:der_eq}:
\bea\label{eq:use1}
\Big(\frac{\alpha_2-\alpha_1}{\alpha_1}\Big)\,\proxri{\al_1,\Upsilon_1}{h}{x} = (\Upsilon_2-\Upsilon_1) x ,\quad\forall(h,x)\in\R^2.
\eea

We consider the following two cases separately.

\vp
\noindent\underline{Case 1:~$\al_1=\al_2$}\,:  From \eqref{eq:use1}, it would then follow that $\Upsilon_1=\Upsilon_2$, which is a contradiction to the assumption $(\al_1,\Upsilon_1)\neq (\al_2,\Upsilon_2)$ and completes the proof for this case.
\\

\noindent\underline{Case 2:~$\al_1\neq\al_2$}\,: Continuing from \eqref{eq:use1} we can compute that for all $(x,z)\in\R^2$
\bea\label{eq:use2}
\la\cdot\ellp\Big(\frac{\proxri{\al_1,\Upsilon_1}{h}{x}}{\al_1}\Big) &=   h+\Upsilon_1 x-\proxri{\al_1,\Upsilon_1}{h}{x}\nn \\
&= h  + \frac{\alpha_2\Upsilon_1-\alpha_1\Upsilon_2}{\alpha_2-\alpha_1} x.
\eea
By replacing $\proxri{\al_1,\Upsilon_1}{h}{x}$ from \eqref{eq:use1} we derive that:
\bea\label{eq:use3}
\ellp\Big( \frac{\Upsilon_2-\Upsilon_1}{\alpha_2/\alpha_1 - 1} x\Big) =  h +  \frac{\alpha_2\Upsilon_1-\alpha_1\Upsilon_2}{\alpha_2-\alpha_1} x,\quad\forall (h,x)\in\R^2.
\eea
By replacing $h=x=0$ in \eqref{eq:use3} we find that $\ell '(0) = 0$. This contradicts the assumption of the lemma and completes the proof.
\end{proof}

\section{Useful technical lemmata}\label{app:tech}

\begin{lem}[Lemma 8 from \cite{svm_abla}]\label{lem:Abla_help1}
Let $d_1$ and $d_2$ be two strictly positive integers. Let $X\times Y$ be two non-empty sets in $\mathbb{R}_{d_1}\times \mathbb{R}_{d_2}$. Let $F:X\times Y\to \mathbb{R}$ be a given real-valued  function. Assume there exists $\tilde{X}\subset X$ such that for all ${\bf x}\in X$ there exists $\tilde{\bf x}\in \tilde{X}$ such that:
\begin{equation}
\forall {\bf y}\in Y, \ \ F({\bf x},{\bf y})\geq F(\tilde{\bf x},{\bf y}).
\label{eq:prop}
\end{equation}
Then
$$
\min_{{\bf x}\in X} \max_{{\bf y}\in Y} F({\bf x},{\bf y}) = \min_{\tilde{\bf x}\in \tilde{X}} \max_{{\bf y}\in Y} F(\tilde{\bf x},{\bf y})
$$
\label{lem:prop}
In particular, if $\tilde{\bf X}=\{\tilde{\bf x}\}$, then:
$$
\min_{{\bf x}\in X} \max_{{\bf y}\in Y} F({\bf x},{\bf y}) = \max_{{\bf y}\in Y} F(\tilde{\bf x},{\bf y})
$$
\end{lem}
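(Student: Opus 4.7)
The plan is to prove the equality $\min_{{\bf x}\in X}\max_{{\bf y}\in Y} F({\bf x},{\bf y}) = \min_{\tilde{\bf x}\in \tilde{X}}\max_{{\bf y}\in Y} F(\tilde{\bf x},{\bf y})$ by establishing the two inequalities ``$\leq$'' and ``$\geq$'' separately. The structure is elementary; the hypothesis on $\tilde{X}$ is exactly what is needed to push the min over the larger set down to the min over the smaller one.

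First I would handle the easy direction ``$\leq$''. Since by assumption $\tilde{X}\subset X$, the feasible set on the right is contained in the feasible set on the left, so restricting the outer minimization to a smaller set cannot decrease the objective:
\[
\min_{{\bf x}\in X}\max_{{\bf y}\in Y} F({\bf x},{\bf y}) \;\leq\; \min_{\tilde{\bf x}\in \tilde{X}}\max_{{\bf y}\in Y} F(\tilde{\bf x},{\bf y}).
\]
No hypothesis beyond $\tilde{X}\subset X$ is used here.

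Next I would prove the reverse inequality ``$\geq$'', which is where the key hypothesis enters. Fix an arbitrary ${\bf x}\in X$. By assumption there exists some $\tilde{\bf x}=\tilde{\bf x}({\bf x})\in\tilde{X}$ such that $F({\bf x},{\bf y})\geq F(\tilde{\bf x},{\bf y})$ for every ${\bf y}\in Y$. Taking the supremum over ${\bf y}\in Y$ on both sides preserves the inequality, yielding $\max_{{\bf y}\in Y} F({\bf x},{\bf y}) \geq \max_{{\bf y}\in Y} F(\tilde{\bf x},{\bf y})$, and then bounding below by the infimum over $\tilde{X}$ gives
\[
\max_{{\bf y}\in Y} F({\bf x},{\bf y}) \;\geq\; \max_{{\bf y}\in Y} F(\tilde{\bf x},{\bf y}) \;\geq\; \min_{\tilde{\bf x}'\in \tilde{X}}\max_{{\bf y}\in Y} F(\tilde{\bf x}',{\bf y}).
\]
Since this holds for every ${\bf x}\in X$, I can take the infimum over ${\bf x}\in X$ on the left-hand side to conclude
\[
\min_{{\bf x}\in X}\max_{{\bf y}\in Y} F({\bf x},{\bf y}) \;\geq\; \min_{\tilde{\bf x}\in \tilde{X}}\max_{{\bf y}\in Y} F(\tilde{\bf x},{\bf y}).
\]
Combining the two inequalities gives the desired equality. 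The ``in particular'' statement is immediate: if $\tilde{X}=\{\tilde{\bf x}\}$ is a singleton, then the outer minimization on the right collapses and the right-hand side reduces to $\max_{{\bf y}\in Y} F(\tilde{\bf x},{\bf y})$.

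There is no real obstacle here; the proof is purely set-theoretic and uses only the definitions of min and max (with the understanding that if the min or max are not attained they should be read as inf and sup, which does not affect the chain of inequalities). The only point that deserves a brief comment is that the argument is valid even without any topological or convexity assumptions on $X$, $Y$, or $F$; the hypothesis already bakes in exactly the pointwise-in-${\bf y}$ dominance that is needed.
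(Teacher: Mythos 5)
Your proof is correct: both inequalities are established exactly as needed, and the pointwise-in-${\bf y}$ dominance hypothesis is used precisely where it must be (in the ``$\geq$'' direction, before taking the supremum over ${\bf y}$ and the infimum over $\tilde{X}$). The present paper only cites this lemma from \cite{svm_abla} without reproducing a proof, and your elementary two-inequality argument is the standard one, so nothing further is needed.
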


\begin{lem}[Lemma 9 from \cite{svm_abla}]\label{lem:help2}
Let ${d}\in\mathbb{N}^\star$. Let ${S}_x$ be a compact non-empty set in $\mathbb{R}^{d}$. Let $f$ and $c$ be two continuous functions over $S_x$ such that the set $\left\{c(x)\leq 0\right\}$ is non-empty. Then:
$$
\min_{\substack{{\bf x}\in{S}_x \\ c({\bf x})\leq 0}}  f({\bf x})\\
=\sup_{\delta \geq 0} \min_{\substack{{\bf x}\in{S}_x \\ c({\bf x})\leq \delta}}  f({\bf x})= \inf_{\delta>0} \min_{\substack{{\bf x}\in{S}_x \\ c({\bf x})\leq -\delta}}  f({\bf x}) 
$$ 
\end{lem}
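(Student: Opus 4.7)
The plan is to introduce the value function $v(\delta) := \min_{x \in S_x,\, c(x) \leq \delta} f(x)$ (with the convention $v(\delta) = +\infty$ when the feasible set is empty) and deduce both identities from two basic structural properties of $v$: first, $v$ is non-increasing in $\delta$, because enlarging $\delta$ enlarges the feasible set; second, whenever the feasible set is nonempty, the minimum is \emph{attained}, because $\{x \in S_x : c(x) \leq \delta\}$ is the intersection of the compact set $S_x$ with the closed sublevel set of the continuous function $c$, hence compact, and $f$ is continuous on it. Thus $v$ is real-valued on any interval of $\delta$'s for which feasibility holds, and the minimizers exist.

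For the first equality, monotonicity immediately gives $v(\delta) \leq v(0)$ for all $\delta \geq 0$, so $\sup_{\delta \geq 0} v(\delta) \leq v(0)$; the reverse inequality is trivial by taking $\delta = 0$. Hence $\sup_{\delta \geq 0} v(\delta) = v(0)$.

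For the second equality, one direction is again immediate from monotonicity: for every $\delta > 0$, $v(-\delta) \geq v(0)$, so $\inf_{\delta > 0} v(-\delta) \geq v(0)$. The substantive direction is $\inf_{\delta > 0} v(-\delta) \leq v(0)$, which I plan to prove by producing a sequence $x_n \in S_x$ and $\delta_n \downarrow 0$ with $c(x_n) \leq -\delta_n$ and $f(x_n) \to v(0)$. Let $x^\star$ be a minimizer attaining $v(0)$. The easy sub-case is $c(x^\star) < 0$: pick any $\delta_n \in (0,-c(x^\star))$ and the constant sequence $x_n \equiv x^\star$ works, giving $v(-\delta_n) \leq f(x^\star) = v(0)$.

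The main obstacle is the boundary case $c(x^\star) = 0$, where $x^\star$ itself is no longer feasible for any strictly perturbed problem, so it must be approximated by strictly feasible points. To handle this, I would appeal to a Slater-type qualification that is in force in every invocation of the lemma in the body of the paper: there exists $\bar x \in S_x$ with $c(\bar x) < 0$, and $c$ is convex on $S_x$ (in the applications, $c$ is a convex loss such as $\overline{\Ell}$). Then the segment $x_t := (1-t)x^\star + t \bar x$ lies in $S_x$ and satisfies $c(x_t) \leq (1-t)\, c(x^\star) + t\, c(\bar x) = t\, c(\bar x) < 0$ for every $t \in (0,1]$, so $x_t$ is feasible for the $(-\delta_t)$-perturbed problem with $\delta_t := -t\, c(\bar x)/2 > 0$. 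As $t \downarrow 0$, we have $\delta_t \downarrow 0$ and $x_t \to x^\star$; by continuity of $f$, $f(x_t) \to f(x^\star) = v(0)$. Hence $v(-\delta_t) \leq f(x_t) \to v(0)$, and taking the infimum over $\delta > 0$ yields $\inf_{\delta > 0} v(-\delta) \leq v(0)$, completing the proof.
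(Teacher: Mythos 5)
Your value-function framing, the attainment/monotonicity observations, and the two easy inequalities are fine; note also that this paper does not prove Lemma \ref{lem:help2} at all (it is imported from \cite{svm_abla}), so the only thing to check is whether your argument establishes the statement as written. It does not, and the gap is in exactly the place you flag as the ``main obstacle''. To handle a minimizer with $c(x^\star)=0$ you invoke convexity of $c$ and a Slater point $\bar x$ with $c(\bar x)<0$, neither of which appears among the hypotheses, and your segment argument additionally needs $S_x$ itself to be convex (otherwise $x_t=(1-t)x^\star+t\bar x$ need not lie in $S_x$), which is also not assumed. This is not a cosmetic omission: under the stated hypotheses alone ($S_x$ compact, $f,c$ continuous, $\{c\le 0\}\neq\emptyset$) the second identity is false --- take $c\equiv 0$ on $S_x$, so the left-hand minimum is finite while every set $\{c\le-\delta\}$ is empty and, with your own convention, the right-hand infimum is $+\infty$. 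So either the lemma must be restated with the qualifications you use (convex $S_x$, convex $c$, strict feasibility), or the boundary case needs a different argument; what you have proved is a weaker lemma that happens to suffice for this paper, since in every invocation here the constraint set is the convex compact half-disk $\Cc_{k_0}$, the constraint function is the convex function $\overline{\Ell}$, and strict feasibility is exactly what \eqref{eq:nonempty} supplies under the separability condition \eqref{eq:condC}.

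A secondary point: as literally written, the first identity ($\sup_{\delta\ge 0}$) is indeed trivial, as you say, because the supremum is attained at $\delta=0$. But the way the paper uses it (extracting, via the ``$\epsilon$-definition of the supremum'', a strictly positive tolerance $\zeta$ with $\min_{c\le\zeta}f\ge\min_{c\le 0}f-\epsilon$) requires the nontrivial form $\sup_{\delta>0}v(\delta)=v(0)$, i.e.\ right-continuity of $v$ at $0$. That version does hold under the stated hypotheses, but it needs a compactness argument rather than monotonicity: for $\delta_n\downarrow 0$ pick minimizers $x_n$ of the relaxed problems, pass to a convergent subsequence in the compact set $S_x$, and note that the limit satisfies $c\le 0$ and hence $f$ at the limit is at least $v(0)$, so $\liminf_n v(\delta_n)\ge v(0)$. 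Adding this short argument (and making the extra convexity/Slater hypotheses explicit, or restricting the claim to the convex setting in which it is applied) would close the proof.
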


\begin{lem}[Lemma 5 in \cite{Master}]
	\label{lemma:5}
	Consider the  primary optimization problem in \eqref{eq:pr} 
	\begin{equation}
		\hat{\bf w}:=\arg\min_{{{\bf w}\in\mathcal{S}_{{\bf w}}}} \sup_{{\bf u}\in \mathcal{S}_{\bf u}} X_{{\bf w},{\bf u}}\,, \label{eq:or}
	\end{equation}
and its bounded version
	\begin{equation}
	\hat{\bf w}_{\mathcal{B}}:=\arg\min_{\substack{{\bf w }\in\mathcal{S}_{{\bf w}}\\ \|{\bf w}\|\leq \mathcal{B}}} \sup_{{\bf u}\in \mathcal{S}_{\bf u}} X_{{\bf w},{\bf u}} \,.
		\label{eq:bounded_pr}
	\end{equation}
	Assume that for any optimal solution $\hat{\bf w}_{\mathcal{B}}$ of \eqref{eq:bounded_pr}, it holds $\|\hat{\bf w}_{\mathcal{B}}\| \ras \theta^\star$ for some $\theta^\star>0$. Then any minimizer of \eqref{eq:or} satisfies $\|\hat{\bf w}\|\ras \theta^\star$.  
\end{lem}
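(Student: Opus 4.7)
The plan is to exploit convexity to show that, almost surely and for $n$ sufficiently large, every minimizer $\hat{\bf w}$ of \eqref{eq:or} automatically satisfies $\|\hat{\bf w}\|\leq \mathcal{B}$, so that $\hat{\bf w}$ is itself an optimal solution of the bounded problem \eqref{eq:bounded_pr}; the conclusion $\|\hat{\bf w}\|\ras \theta^\star$ then follows directly from the hypothesis applied to $\hat{\bf w}$. I would fix any constant $\mathcal{B}>\theta^\star$ and work on an almost-sure event $\Omega_0$ on which, for $n$ sufficiently large, every optimal $\hat{\bf w}_{\mathcal{B}}$ of \eqref{eq:bounded_pr} satisfies $\|\hat{\bf w}_{\mathcal{B}}\|<\mathcal{B}$; such an $\Omega_0$ exists by the hypothesis $\|\hat{\bf w}_{\mathcal{B}}\|\ras \theta^\star$. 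Throughout I rely on the standing convex-concavity of $X_{{\bf w},{\bf u}}$ and the convexity of $\mathcal{S}_{\bf w}$ and $\mathcal{S}_{\bf u}$.

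The first step is to introduce $f({\bf w}):=\sup_{{\bf u}\in\mathcal{S}_{\bf u}} X_{{\bf w},{\bf u}}$, which is convex on $\mathcal{S}_{\bf w}$ as a supremum of convex functions. On $\Omega_0$ and for $n$ large, any optimal $\hat{\bf w}_{\mathcal{B}}$ lies in the \emph{interior} of $\{\|{\bf w}\|\leq \mathcal{B}\}$ and minimizes $f$ over the convex set $\mathcal{S}_{\bf w}\cap\{\|{\bf w}\|\leq \mathcal{B}\}$. Hence $\hat{\bf w}_{\mathcal{B}}$ is a local minimizer of the convex function $f$ over the convex set $\mathcal{S}_{\bf w}$, and therefore a global minimizer of \eqref{eq:or}. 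In particular, $f(\hat{\bf w})=f(\hat{\bf w}_{\mathcal{B}})$ equals the common optimal value of \eqref{eq:or} and \eqref{eq:bounded_pr}.

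The second step is to show $\|\hat{\bf w}\|\leq \mathcal{B}$ by contradiction. Assume $\|\hat{\bf w}\|>\mathcal{B}$. By convexity of $\mathcal{S}_{\bf w}$ the segment $[\hat{\bf w},\hat{\bf w}_{\mathcal{B}}]$ lies in $\mathcal{S}_{\bf w}$, and by continuity of the norm it meets the sphere $\{\|{\bf w}\|=\mathcal{B}\}$ at a point ${\bf w}'$. Convexity of $f$ together with the fact that both endpoints attain the global minimum forces $f({\bf w}')=f(\hat{\bf w}_{\mathcal{B}})$, so ${\bf w}'$ is itself a minimizer of \eqref{eq:bounded_pr} with $\|{\bf w}'\|=\mathcal{B}$. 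Choosing $\epsilon>0$ with $\mathcal{B}>\theta^\star+\epsilon$, this contradicts the defining property of $\Omega_0$, which forces $\bigl|\|{\bf w}'\|-\theta^\star\bigr|<\epsilon$ for all $n$ large. Hence $\|\hat{\bf w}\|\leq \mathcal{B}$, so $\hat{\bf w}$ is feasible and optimal for \eqref{eq:bounded_pr}, and the hypothesis applied to it yields $\|\hat{\bf w}\|\ras \theta^\star$.

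There is no serious analytic obstacle; the whole argument is a convexity-plus-contradiction exercise. The only point that needs care is the reading of ``for any optimal solution $\hat{\bf w}_{\mathcal{B}}$'' in the hypothesis: one interprets it as saying that, along the sequence of problem sizes, any (measurable) choice of minimizers of \eqref{eq:bounded_pr} has norms converging almost surely to $\theta^\star$. The constructed point ${\bf w}'$ is, for each sample path and each $n$, a deterministically-specified minimizer of \eqref{eq:bounded_pr} of norm $\mathcal{B}$, and its existence under the assumption $\|\hat{\bf w}\|>\mathcal{B}$ is precisely what the hypothesis rules out, closing the argument.
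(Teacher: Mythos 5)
The paper does not actually supply a proof of this lemma---it is stated as a direct citation of ``Lemma 5 in \cite{Master}'' in the technical lemmata section, so there is no in-text argument to compare against. Evaluated on its own merits, your proof is correct and is the standard convexity argument that one would expect to underlie the cited lemma: (a) a bounded minimizer lying \emph{strictly} inside the norm ball is, by convexity of $f({\bf w})=\sup_{\ub}X_{\w,\ub}$, a global minimizer over all of $\mathcal{S}_{\bf w}$; (b) if some unbounded minimizer $\hat{\bf w}$ escaped the ball, the segment joining it to $\hat{\bf w}_{\mathcal B}$ would consist entirely of global minimizers (by convexity of $f$ and both endpoints being optimal), and in particular would exhibit a bounded-problem minimizer sitting exactly on the sphere $\|{\bf w}\|=\mathcal{B}$, which the hypothesis excludes for $n$ large. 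The contradiction closes the argument.

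Two small points are worth being explicit about, though neither is a genuine gap. First, the lemma statement in the paper does not literally assert $\theta^\star<\mathcal{B}$; you correctly observe that the whole argument hinges on fixing $\mathcal{B}>\theta^\star$, which is implicitly forced by the hypothesis (since $\|\hat{\bf w}_\Bc\|\leq\Bc$ always, almost-sure convergence of the norm to $\theta^\star$ can only hold if $\theta^\star\leq\Bc$, and the degenerate $\theta^\star=\Bc$ case is exactly what makes the argument break). This strict inequality is stated explicitly in the usage of the lemma in Remark~\ref{rem:recipeCor} (``$\theta^\star<R$''), so your reading is the intended one. Second, the step from ``any measurable selection of minimizers converges'' to ``on an almost-sure event, \emph{every} minimizer has norm $<\Bc$ for $n$ large'' relies on applying the hypothesis to an extremal (e.g.\ maximum-norm) measurable selection; you flag this and it is the right way to close the uniformity concern. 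With those readings made explicit, the proof is complete.
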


\section{Additional numerical results}\label{sec:num_more}

\begin{figure*}[h!]
\begin{center}
\includegraphics[width=.5\textwidth]{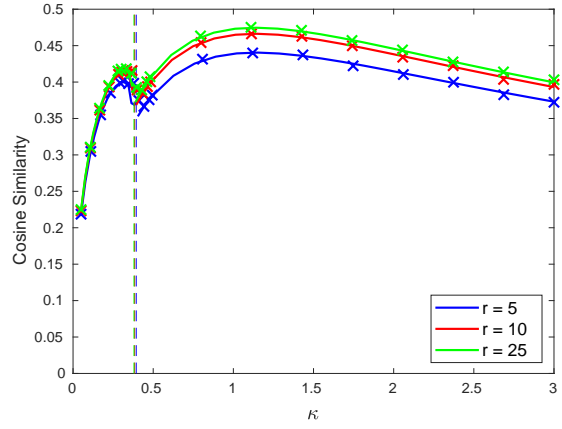}
\end{center}
\caption{Cosine similarity 
as a function of the overparameterization ratio $\kappa$ for binary logistic regression under the polynomial feature selection rule. The setting here is the same as in Figure \ref{fig:exp4str}. See also Section \ref{sec:curves}}
\label{fig:app1}
\end{figure*}


\begin{figure}[h!]
	\begin{center}
		\includegraphics[width=.49\textwidth]{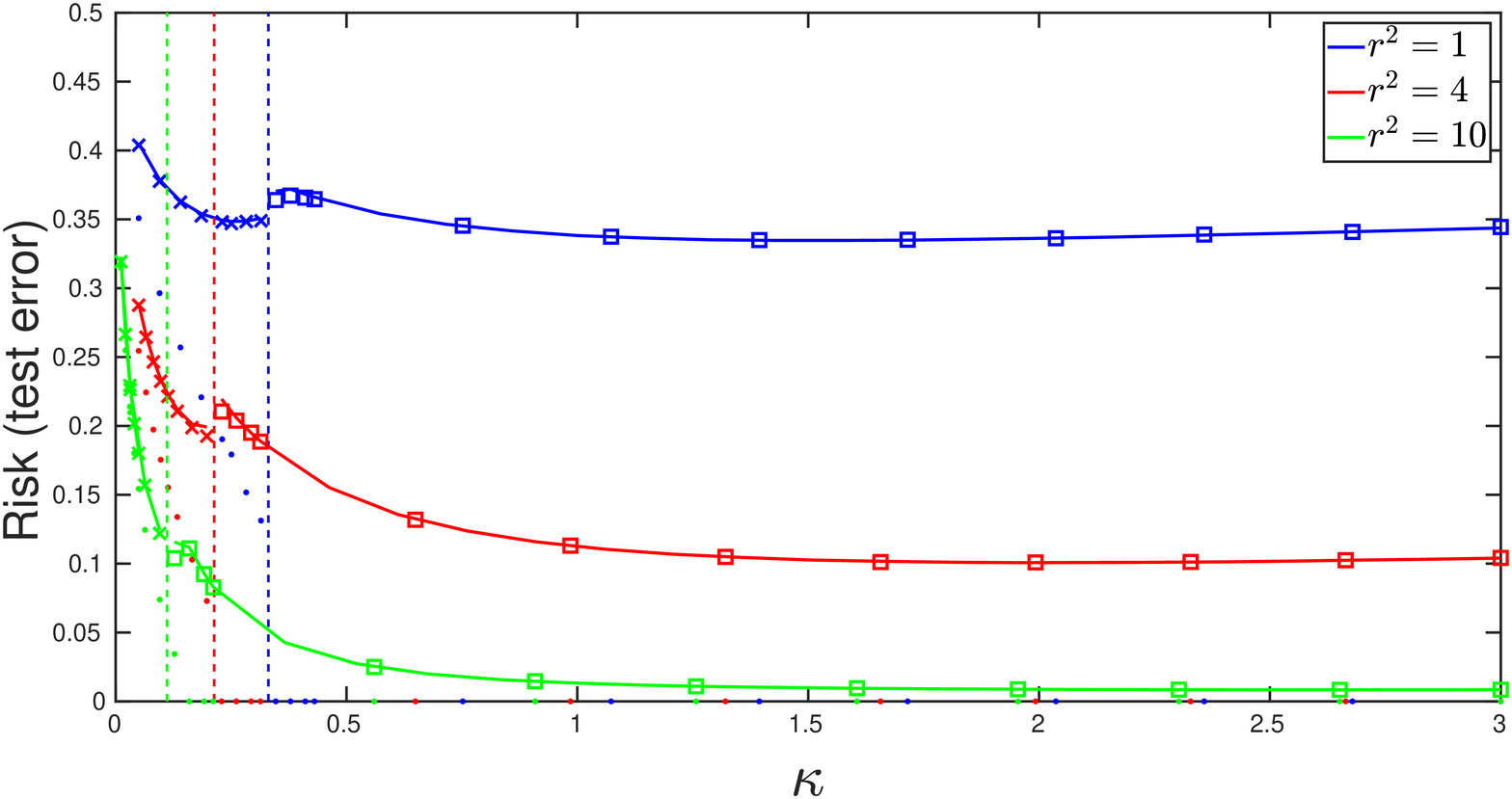}
		\includegraphics[width=.49\textwidth]{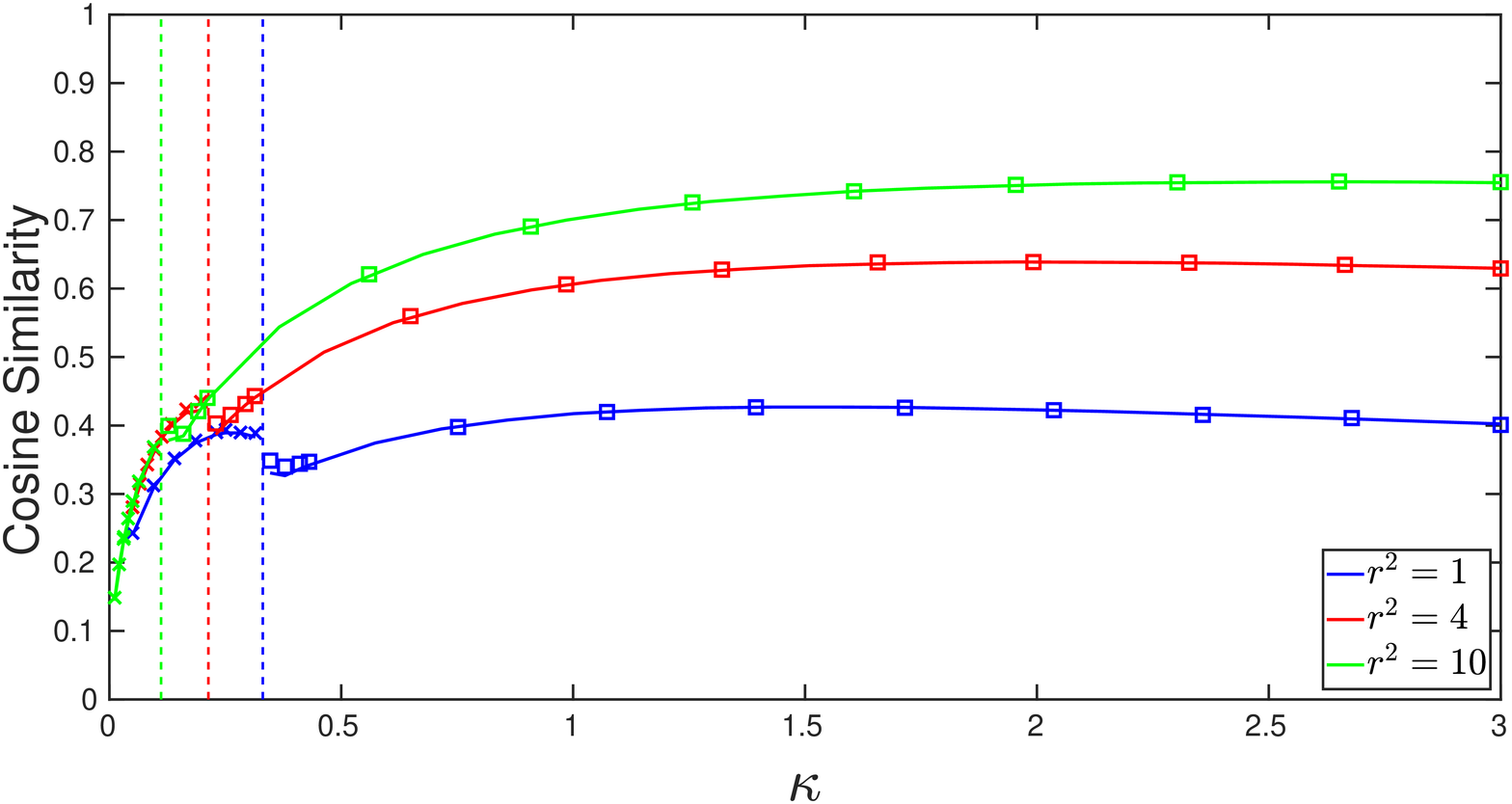}
	\end{center}
	\caption{Cosine similarity (Left) and risk (Right) curves for GM exponential model with $\gamma = 2$.
	}\label{fig:GMexp4r}
\end{figure}

\begin{figure}[h!]
	\begin{center}
		\includegraphics[width=1\textwidth]{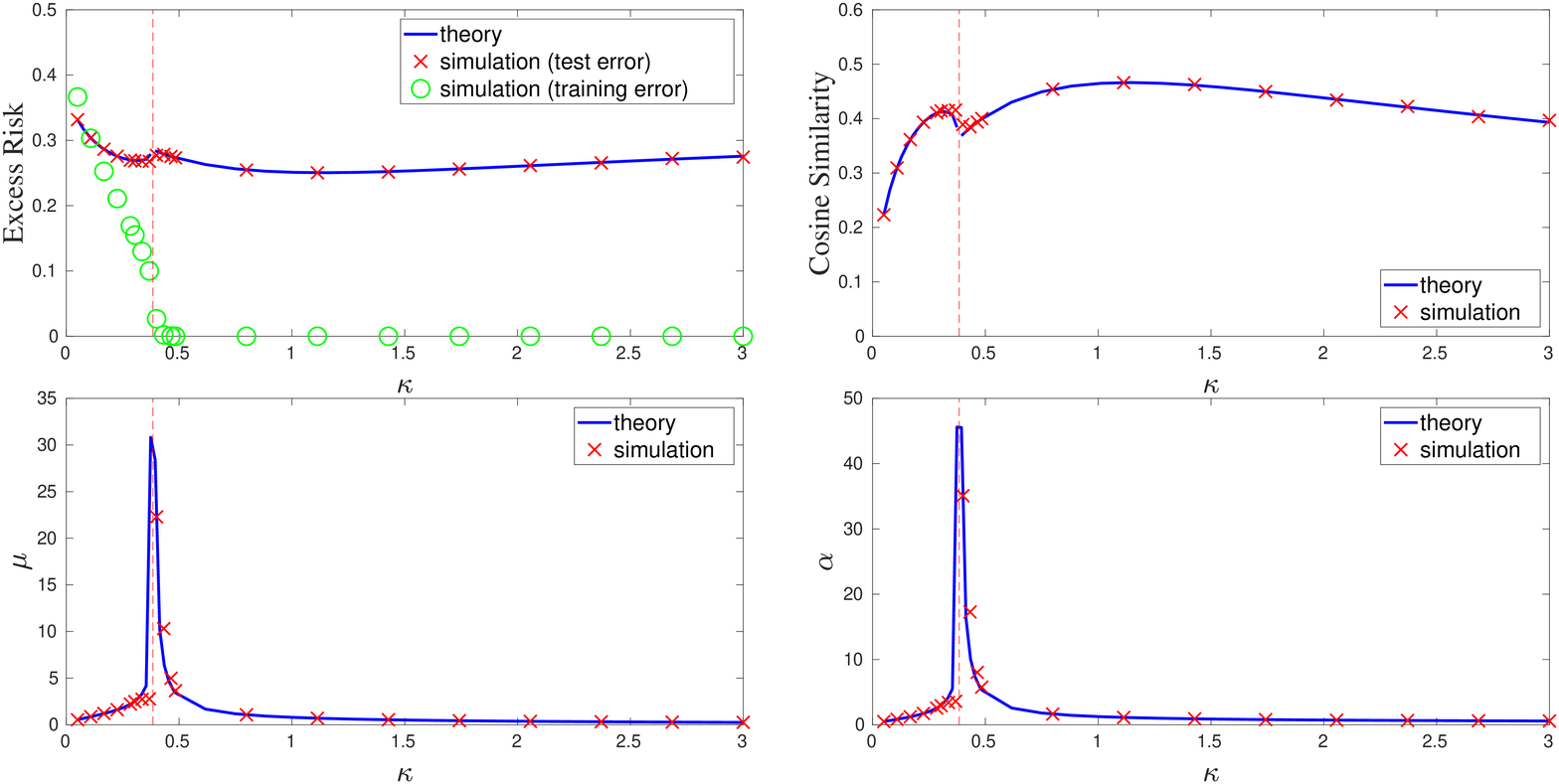}
	\end{center}
	\caption{Simulation results ($\times$) vs theoretical predictions (---) for the logistic model for signal strength $r = 10$. Without loss of generality, we set $\betab_0=[s,0,\ldots,0]$ and also decompose $\betabh = [\betah_1, \widetilde{\betab}_*^T]^T$. \textbf{(Upper left)} Absolute risk $\Rc(\betabh)$; \textbf{(Upper right)} cosine similarity $\Cc(\betabh)$; \textbf{(Lower left)}: solution $\mu$ to \eqref{eq:ML_gen}  (---) and averages of $\betah_1$ ($\times$); \textbf{(Upper right)} solution $\alpha$ to \eqref{eq:ML_gen}  (---) and averages of $\|\widetilde{\betab}_*\|_2$ ($\times$). The bottom two figures experimentally validate \eqref{eq:rP_T}. }\label{fig:exp4r10}
\end{figure}

\begin{figure}[h!]
	\begin{center}
		\includegraphics[width=1\textwidth]{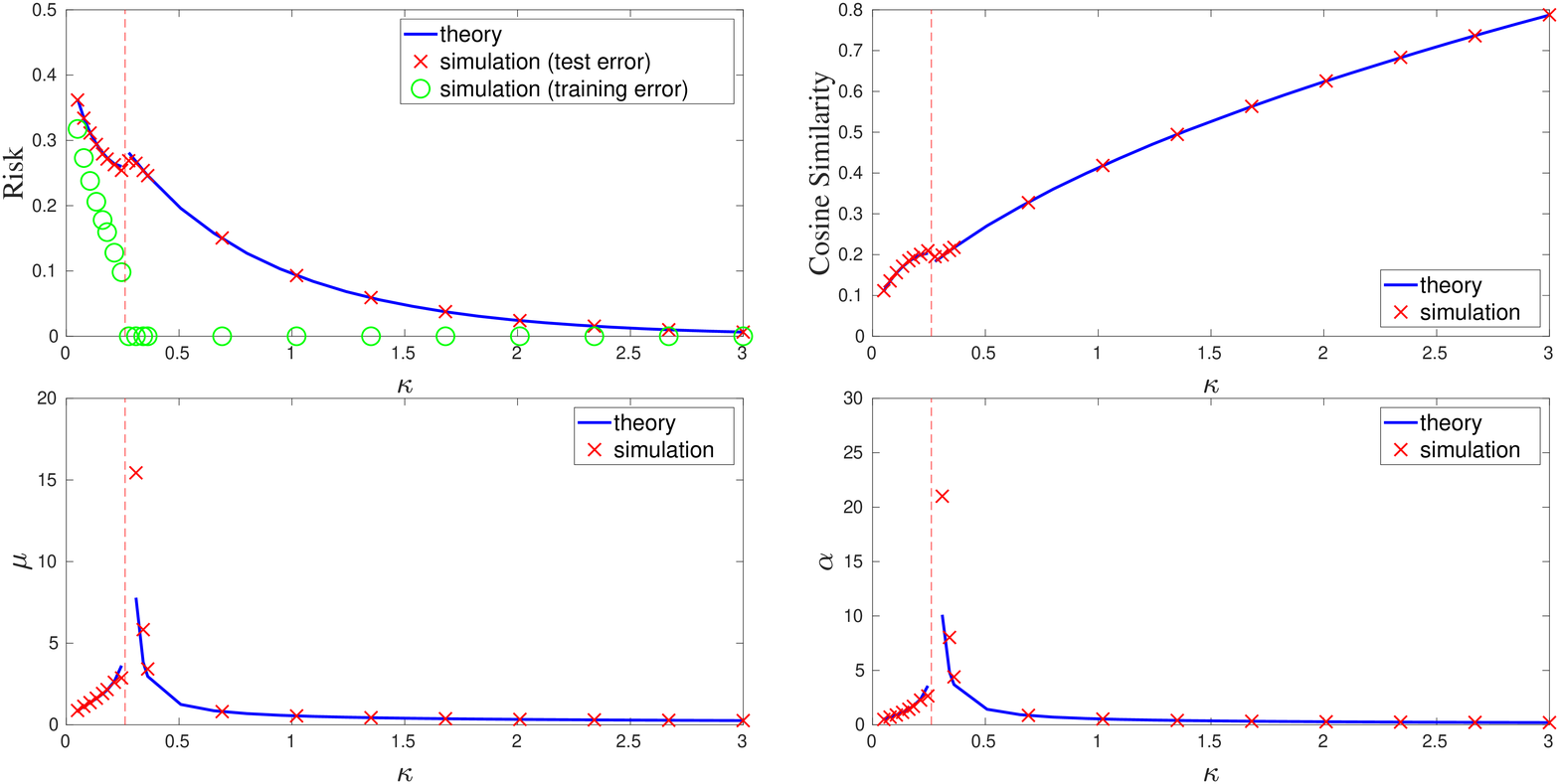}
	\end{center}
	\caption{Simulation results ($\times$) vs theoretical predictions (---) for the GM model for signal strength $r^2 = 10$. Without loss of generality, we set $\betab_0=[s,0,\ldots,0]$ and also decompose $\betabh = [\betah_1, \widetilde{\betab}_*^T]^T$. \textbf{(Upper left)} Absolute risk $\Rc(\betabh)$; \textbf{(Upper right)} cosine similarity $\Cc(\betabh)$; \textbf{(Lower left)}: solution $\mu$ to \eqref{eq:ML_GM}  (---) and averages of $\betah_1$ ($\times$); \textbf{(Upper right)} solution $\alpha$ to \eqref{eq:ML_GM}  (---) and averages of $\|\widetilde{\betab}_*\|_2$ ($\times$). 
	}\label{fig:poly4r10}
\end{figure}

%
%

\begin{figure}[h!]
	\begin{center}
		\includegraphics[width=1\textwidth]{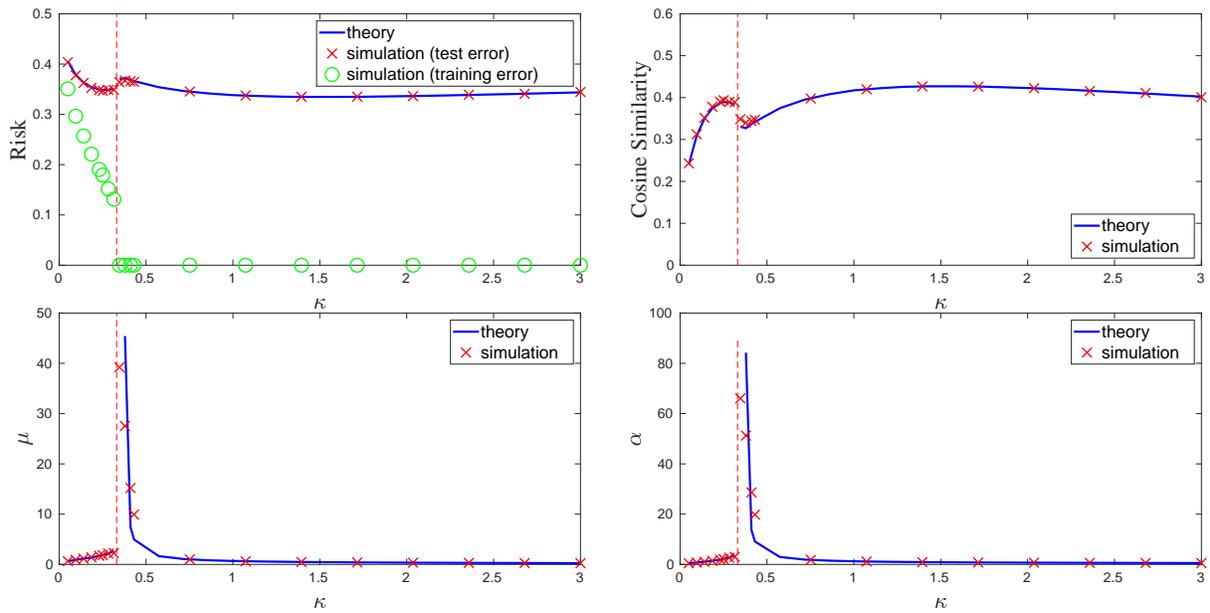}
	\end{center}
	\caption{Detailed simulation results for GM exponential model where $r = 1$.
	}\label{fig:GMexpr1}
\end{figure}

%

%
%
%
%
%

\end{document}